\documentclass[11pt, a4paper, logo, onecolumn,copyright]{tmlrmel}

\usepackage[authoryear, sort&compress, round]{natbib}
\usepackage{times}
\usepackage{latexsym}
\tcbuselibrary{breakable}
\tcbset{enhanced, attach boxed title to top left={yshift=-3mm,yshifttext=-1mm, xshift=3mm}, boxed title style={size=small}}

\usepackage[table]{xcolor}
\usepackage{graphicx}
\usepackage{mathtools}
\usepackage{amssymb}
\usepackage{bbm}
\usepackage{amsmath}
\usepackage{amsthm}
\usepackage{tabularx}
\usepackage{multirow, threeparttable, booktabs, makecell, adjustbox}
\usepackage{caption}
\usepackage{tcolorbox}
\usepackage{graphicx}
\usepackage{subcaption}
\newtheorem{theorem}{Theorem}[section]

\newtheorem{corollary}{Corollary}[section]
\newtheorem{lemma}[theorem]{Lemma}
\newcommand\bs\boldsymbol
\tcbuselibrary{theorems}
\usepackage{enumitem}
\usepackage[T1]{fontenc}
\usepackage{microtype}
\usepackage{inconsolata}
\usepackage{graphicx}
\usepackage{enumitem}
\usepackage{wrapfig}
\usepackage{makecell}
\usepackage{tabularx}

\newcounter{mybox}
\renewcommand{\themybox}{\thesection.\arabic{mybox}}

\newtcolorbox{mybox}[2][]{
  colback=gray!5,
  colframe=violet,
  fonttitle=\bfseries,
  phantom={\refstepcounter{mybox}\label{#2}},
  title={Box~\themybox\if\relax\detokenize{#1}\relax\else: #1\fi},
}

\title{Cyclical Entropy Eruption: Entropy Dynamics in Agent Reinforcement
Learning}

\author{Wendi Li}
\author{Shawn Im}
\author{Sharon Li}

\affil{Department of Computer Sciences\\University of Wisconsin--Madison}

\correspondingauthors{\{\texttt{wli679}, \texttt{shawnim}, \texttt{sharonli}\}\texttt{@cs.wisc.edu}}

\coderepo{May 26, 2026}

\begin{abstract}
Agentic large language models are increasingly used to solve real-world tasks by reasoning over goals, invoking tools, and interacting with external environments. Reinforcement learning provides a natural framework for improving these behaviors, and recent agent RL methods have achieved strong results across domains. However, the training dynamics of agent RL remain poorly understood, limiting our ability to diagnose instabilities and design more effective training algorithms.
In this work, we identify a previously underexplored phenomenon in agent RL, which we term \emph{cyclical entropy eruption}. Unlike single-turn reasoning RL, where entropy typically collapses and stays low, agent RL training exhibits unique recurring cycles of sharp entropy eruption and gradual subsidence. We decompose this dynamic into three phases and provide theoretical and empirical analyses of each, explaining the mechanisms underlying its cyclical oscillation. We further show that degenerate patterns such as sentence duplication and hallucination, once acquired during eruption, can persist and accumulate across cycles. Motivated by these findings, we propose SEAL (Separation-Enhanced Agent Learning), a lightweight auxiliary loss that separates correct and incorrect trajectories in representation space, directly targeting the root cause of entropy eruption. Experiments across multiple benchmarks, models, and RL algorithms demonstrate that SEAL stabilizes training and yields stronger downstream agent performance.  Code is available \href{https://github.com/WindyLee0822/SEAL}{here}.
\end{abstract}

\begin{document}
\maketitle

\vspace{-0.5em}
\section{Introduction}

Large language model agents are rapidly emerging as a practical interface between foundation models and real-world tasks, enabling models to reason over goals, invoke tools, and interact with external environments. This capability has already created value in settings such as web navigation \citep{web1,web2,web3}, software engineering \citep{sweagent,swe1,swe2,swe3}, scientific assistance \citep{sci1,sci2,sci3}, and embodied decision making \citep{emb1,emb2,emb3}, where agents can decompose complex objectives into actionable intermediate steps.
Reinforcement learning for agents provides a natural framework for improving such behaviors, since it optimizes long-horizon decision quality. Consequently, a growing class of agent RL algorithms has emerged \citep{searchr1,webagentr1,gigpo}, many of which adapt techniques originally developed for LLM reasoning by maximizing advantage-weighted objectives over trajectories. The policy can learn to favor action sequences that improve planning, tool use, and error correction.

\begin{figure}[t]
    \centering
    
    \includegraphics[width=\linewidth]{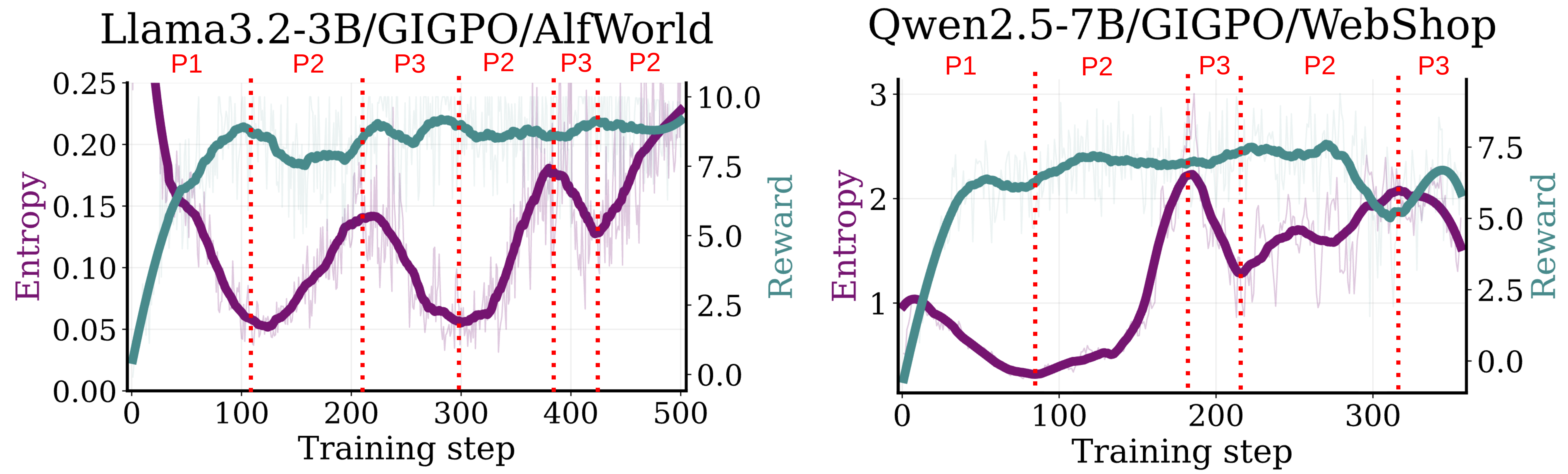}
    \vspace{-1.5em}
    \caption{
    Cyclical entropy eruption in agent RL across models, tasks, and algorithms. P1, P2, and P3 represent three phases, respectively. 
    See more examples in Figure~\ref{fig:entropy-eruption-apd}.}
    \vspace{-1em}
    \label{fig:entropy-eruption}
\end{figure}
Despite the rapid progress of agent RL, our understanding of its training dynamics remains limited. Most existing work focuses on improving benchmark performance or designing new optimization objectives, while offering relatively little insight into how the policy distribution evolves throughout training. Moreover, training-dynamics analyses developed for single-turn LLM RL \citep{entropy_mechanism_llm,entropycontrol_llmrl} may not directly transfer to agent settings, where learning unfolds over long-horizon trajectories with structured actions, tool interactions, and validity constraints. Understanding these dynamics is important not only for explaining empirical successes and failures, but also for guiding algorithm design and diagnosing instability in increasingly complex agent environments. 

In this work, we provide an in-depth understanding of agent RL by identifying and analyzing a previously unexplored phenomenon, which we term \emph{\textbf{cyclical entropy eruption}}. As shown in Figure~\ref{fig:entropy-eruption}, the policy entropy in agent RL does not simply decrease and stabilize as in single-turn reasoning RL~\citep{entropy_mechanism_llm, entropycontrol_llmrl}; instead, it undergoes unique recurring cycles of eruption and subsidence 
throughout training. We decompose this dynamic into three phases:
\ding{182} \emph{Entropy descent}: the entropy descends at the beginning phase of training, where the policy model learns rapidly about how to use tools, call functions, and output in the correct format. 
\ding{183} \emph{Entropy eruption}: an entropy surge phase in which high representation similarity among trajectories causes gradient interference that suppresses valid-trajectory likelihood. As a result, this flattens the policy distribution and amplifies degenerate patterns such as sentence duplication and hallucination. 
\ding{184} \emph{Entropy subsidence}: As the distribution flattens, the policy samples more diverse trajectories, which in turn reduces representation similarity. As a result, the training gradually reinforces correct responses by increasing the likelihood of correct trajectories, which then triggers the next eruption. The dynamics therefore form a self-perpetuating cycle: convergence raises similarity $\rightarrow$ similarity causes eruption $\rightarrow$ eruption increases diversity $\rightarrow$ diversity enables subsidence $\rightarrow$ and subsidence leads back to convergence. 

Motivated by our theoretical understanding of entropy dynamics, we propose \textsc{SEAL} ({S}eparation-{E}nhanced {A}gent {L}earning), a lightweight modification to the original agentic RL objective, with no additional cost at inference time. The objective encourages the model to separate correct and incorrect trajectories in representation space. 
Therefore, harmful gradient interference is reduced: incorrect and correct trajectories become less likely to induce conflicting updates when their representations are no longer highly similar, which mitigates the likelihood decrease of valid trajectories and helps suppress entropy eruption. While the primary contribution of this paper is the dynamical analysis, SEAL demonstrates that our theoretical understanding can guide practice: even a simple intervention motivated by the analysis meaningfully stabilizes training, improves valid-trajectory likelihood, and yields stronger downstream agent performance across different tasks, backbones, and RL algorithms. 
For example, SEAL improves the average AlfWorld accuracy of Qwen2.5-7B with GRPO by 2.81\% and raises its WebShop success rate by 3.13\%. Moreover, on the Llama-series model, where GRPO training collapses and yields a 0\% success rate, adding SEAL recovers training and boosts performance to 79.69\%. 
We expect future work to deepen this line of inquiry with more sophisticated methods. 
Our main contributions are: 
\begin{enumerate}[leftmargin=*,itemsep=0.pt,topsep=0.5pt]
    \item We identify \emph{cyclical entropy eruption}, a recurring training instability in agent RL in which policy entropy repeatedly erupts and subsides. This phenomenon is qualitatively not observed in single-turn reasoning RL.
    \item We provide in-depth theoretical and empirical analyses of each phase, explaining the mechanisms underlying its cyclical oscillation.
    \item We propose \textsc{SEAL}, an agent RL  loss motivated directly by our analysis, and demonstrate that it stabilizes training and improves agent performance across multiple benchmarks, models, and RL algorithms.
\end{enumerate}
\section{Preliminary}
\label{sec:prelim}

\paragraph{LLM agents and structured trajectories.}
Given a task prompt~$x$, an LLM agent autoregressively generates a \emph{trajectory} $y = (y_1, \dots, y_T)$ that may contain intermediate reasoning, tool calls, returned observations, and a final answer. Because such tasks typically cannot be completed from parametric knowledge alone, agents invoke external tools such as search engines, calculators, code interpreters, databases, or domain-specific APIs. Tool use is usually implemented through structured action formats, \emph{e.g.} JSON arguments, XML-style tags, or function-calling schemas. Dedicated special tokens demarcate each stage of the trajectory, enabling the environment to parse and execute tool calls. A trajectory~$y$ is called {valid} if every tool call conforms to the required schema and formatting constraints (correct syntax, expected argument types, proper use of special tokens).
We use $r_{\mathrm{fmt}}(y)\in\{0,1\}$ as the indicator of format validity. 
A trajectory~$y$ is called {semantically correct} if the trajectory produces the right final answer. We use $r_{\mathrm{sem}}(y)\in\{0,1\}$ as the indicator of semantic correctness. 

\paragraph{Agent reinforcement learning.}
Reinforcement learning provides a natural objective for improving the agentic capabilities of LLMs, since it can directly optimize long-horizon task success rather than next-token prediction alone. In particular, RL enables the model to learn effective reasoning, tool selection, and action sequencing from trajectory-level feedback.
For a given prompt~$x$, let $y_1,\dots,y_G$ denote a group of $G$ trajectories sampled independently from the current policy $\pi_\theta(\cdot\mid x)$. Each trajectory $y_i$ is assigned a normalized advantage $A_i$ computed from the within-group rewards, satisfying $\sum_{i=1}^{G} A_i = 0$ in the standard centered case. The RL training objective is formulated as
\begin{equation}\label{eq:rl-obj}
  \mathcal{L}_{\mathrm{RL}}(\theta)
  \;=\;
  -\sum_{i=1}^{G} A_i\,\ell_i(\theta),
  \qquad
  \ell_i(\theta) := \log\pi_\theta(y_i\mid x)
  = \sum_{k=1}^{|y_i|} \log\pi_\theta\!\bigl(y_{i,k}\mid x,\,y_{i,<k}\bigr).
\end{equation}
Minimizing $\mathcal{L}_{\mathrm{RL}}$ encourages the policy to increase the likelihood of trajectories whose advantage exceeds the group mean and to suppress those that fall below it. This within-group relative update underlies several recent agent RL algorithms, including GRPO~\citep{grpo} and GIGPO~\citep{gigpo}, and serves as the foundation for the training-dynamics analysis in Section~\ref{sec:dynamics}.

\section{Cyclical Entropy Eruption in Agent Reinforcement Learning}
\label{sec:dynamics}
Despite rapid progress in agent RL, relatively little attention has been paid to \emph{how} the policy distribution evolves over the course of training.  Understanding the dynamics matters practically: entropy dynamics govern the exploration-exploitation balance during training, and pathological entropy behavior can cause reward stagnation, the reinforcement of degenerate outputs, or outright training collapse.  In this section, we identify a special and recurring entropy pattern in agent RL that we term \emph{cyclical entropy eruption}. 

\subsection{Overview: The Cyclical Entropy Eruption Phenomenon}

\paragraph{Experimental setup.}
We conduct agent RL training with GRPO~\citep{grpo} and GIGPO~\citep{gigpo} on two tasks, AlfWorld~\citep{alfworld} and WebShop~\citep{webshop}\footnote{The implementation details follow \texttt{Agent-RL}~\citep{gigpo} with the \texttt{VeRL} ~\citep{verl} framework, detailed in Section~\ref{sec:experimental-settings} and Appendix~\ref{apd:sec:implementation-details}}. AlfWorld is a text-based embodied benchmark for evaluating LLM agents on multi-step household tasks, where agents must interpret goals and interact through multiple turns to complete activities such as picking, cleaning, heating, cooling, and placing objects. WebShop is a realistic web-interaction benchmark that tests LLM agents in online shopping scenarios, requiring them to search, browse, compare, and purchase products on a simulated e-commerce website with a large and diverse product catalog.
Additional results on a search-augmented QA task~\citep{searchr1} and experiments with other backbone models are in Appendix~\ref{apd:sec:entropy-eruption}.

\paragraph{Cyclical entropy eruption in agent RL.}

As shown in Figure~\ref{fig:entropy-eruption}, the policy entropy in agent RL often undergoes a striking non-monotonic evolution: after an initial decrease, the entropy erupts sharply and then enters a recurring cycle of eruption and subsidence throughout training. This behavior is qualitatively different from the entropy dynamics typically observed in single-turn reasoning tasks such as math \citep{entropy_mechanism_llm, entropycontrol_llmrl}. Empirically, entropy eruption is often accompanied by reward stagnation or even degradation (Figure~\ref{fig:entropy-eruption}), as well as the emergence of toxic generation patterns such as sentence duplication and the incorrect use of format-specific special tokens (Figure~\ref{fig:toxic}). Understanding this phenomenon is therefore important for achieving stable and continual policy improvement in agent RL. 
We divide the entropy dynamics into three phases: (1) \emph{entropy descent},  (2) \emph{entropy eruption}, and (3) \emph{entropy subsidence}. 
The remainder of this section analyzes each phase in turn and explains the mechanism underlying its cyclical oscillation.

\subsection{Phase 1: Entropy Descent and Validity Surge}

In the first phase of training, agent RL exhibits a sharp entropy decrease as the policy rapidly reallocates probability mass toward a narrower set of valid trajectories. 
We explain this below.

\begin{wrapfigure}{r}{0.4\textwidth}
    \centering
    \vspace{-1em}
    \includegraphics[width=0.39\textwidth]{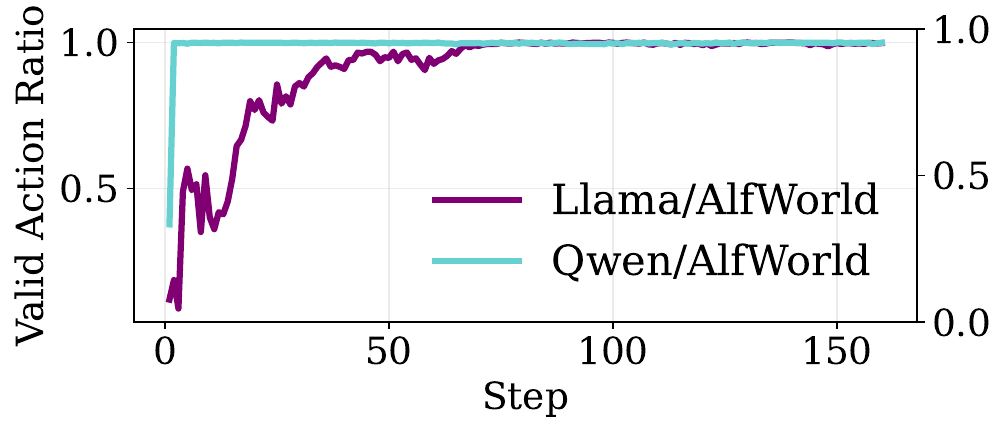}
    \vspace{-1em}
    \includegraphics[width=0.39\textwidth]{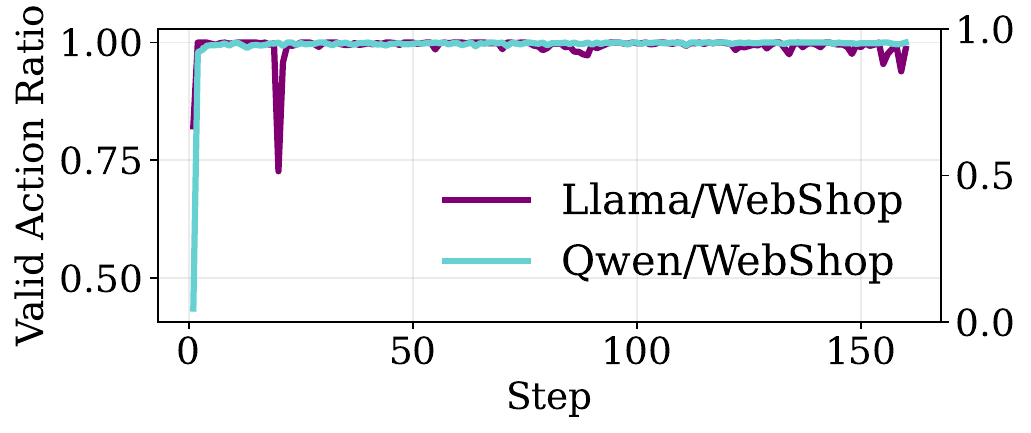}
    \caption{Valid action ratio surges at the beginning phase of agent RL.}
    \label{fig:validaction}
    \vspace{-1em}
\end{wrapfigure}
\paragraph{Format learning dominates early training.} We measure the fraction of sampled trajectories that satisfy formatting constraints and perform valid actions. As shown in Figure~\ref{fig:validaction}, we observe that the fraction of sampled trajectories that satisfy format-validity constraints surges and quickly saturates in the opening steps of training. This indicates that the policy first acquires protocol-level competence, including how to invoke tools and functions correctly, and how to satisfy schema and formatting constraints. This behavior is natural because while semantic competence can be partially inherited from pretraining, agent-specific formatting conventions—such as tool-call syntax, schema tokens, and special control markers—are often absent or weakly represented in the pretrained distribution. Consequently, the model begins RL with a much larger deficiency in format validity than in semantic understanding, so early gradient updates are dominated by correcting this format bottleneck, which in turn suppresses further gains from semantic learning until valid trajectories become sufficiently likely.
We formalize this intuition into the following lemma,
\begin{lemma}[\textbf{Format-gated learning dominates training at the beginning}]
\label{lem:format_gated_learning}
Suppose the trajectory reward factorizes as $r(y)=r_{\mathrm{fmt}}(y)\,r_{\mathrm{sem}}(y)$,
where \(r_{\mathrm{fmt}}(y)\in\{0,1\}\) and \(r_{\mathrm{sem}}(y)\in\{0,1\}\) denote the format and semantic correctness. Let $p_\theta^{\mathrm{fmt}}(x) =\sum_{y:\,r_{\mathrm{fmt}}(y)=1} \pi_\theta (y\mid x)$, and $p_\theta^{\mathrm{sem}}(x)=\sum_{y:\,r_{\mathrm{sem}}(y)=1}\pi_\theta(y\mid x)$ denote the total probability the policy assigns to format-valid and semantically correct trajectories, respectively. If $\mathbb E_x p_\theta^{\mathrm{sem}}(x) > \mathbb E_x p_\theta^{\mathrm{fmt}}(x)$, \emph{i.e.}, the policy is weaker on agent-specific formatting than on semantic correctness, RL updates are dominated by increasing the format validity of trajectories. 
\end{lemma}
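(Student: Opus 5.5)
I would make the informal conclusion ``RL updates are dominated by increasing format validity'' precise by comparing two first-order quantities. The first is the rate of change of $\mathbb E_x p_\theta^{\mathrm{fmt}}(x)$ under the expected RL gradient step. The second is the rate of change of the semantic-correctness mass conditional on validity, $q_\theta(x):=\Pr_{\pi_\theta}(r_{\mathrm{sem}}=1\mid r_{\mathrm{fmt}}=1,x)$. The tool is the policy-gradient identity for a binary reward. For the population version of \eqref{eq:rl-obj}, the expected update direction is $g=\mathbb E_x\,\mathrm{Cov}_{y\sim\pi_\theta}\bigl(r(y),\nabla\ell(y)\bigr)$, since the centered advantage $A_i$ estimates $r-\mathbb E r$ up to a positive normalizer. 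Because $r=r_{\mathrm{fmt}}r_{\mathrm{sem}}$, I write the expected reward as
\begin{equation*}
J(\theta)=\mathbb E_x\bigl[p_\theta^{\mathrm{fmt}}(x)\,q_\theta(x)\bigr].
\end{equation*}
Its gradient splits by the product rule:
\begin{equation*}
\nabla J=\mathbb E_x\bigl[q_\theta\,\nabla p_\theta^{\mathrm{fmt}}\bigr]+\mathbb E_x\bigl[p_\theta^{\mathrm{fmt}}\,\nabla q_\theta\bigr].
\end{equation*}
I call the first term the format channel and the second the semantic channel.

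The key step is to show that the format channel carries the larger share of the first-order gain. Under a gradient step of size $\eta$, $\Delta J\approx\eta\|\nabla J\|^2$. I would use a tangent/Fisher-geometry modelling assumption in which the two channels are approximately orthogonal, so that $\|\nabla J\|^2\approx\|A_{\mathrm{fmt}}\|^2+\|A_{\mathrm{sem}}\|^2$, with $A_{\mathrm{fmt}}$ and $A_{\mathrm{sem}}$ the two terms above. Using the Bernoulli structure, the natural-gradient norm of a Bernoulli probability $p$ scales like $p(1-p)$ per unit of log-odds. The two channels then scale as $q\cdot p^{\mathrm{fmt}}(1-p^{\mathrm{fmt}})$ and $p^{\mathrm{fmt}}\cdot q(1-q)$, so their ratio is $(1-p^{\mathrm{fmt}})/(1-q)$.

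Next I would link $q$ to the hypothesis. Since $q=\Pr(\mathrm{sem}\mid\mathrm{fmt})$, and under the natural assumption that semantic correctness is roughly independent of, or positively associated with, format validity, $q\ge p^{\mathrm{sem}}$. The hypothesis $\mathbb E_x p^{\mathrm{sem}}>\mathbb E_x p^{\mathrm{fmt}}$ then gives $1-q<1-p^{\mathrm{fmt}}$ on average, which makes the ratio exceed one. Taking expectations over $x$ requires a Jensen-type or pointwise comparison. I would state the pointwise version, $p^{\mathrm{sem}}(x)>p^{\mathrm{fmt}}(x)$, as the clean case and note that averaging gives the in-expectation statement.

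The main obstacle is that ``dominated'' is not canonical, and the orthogonality and equal-sensitivity assumptions on the two channels are modelling choices rather than consequences. I would state them explicitly as a stylized tabular or softmax parameterization in which validity and conditional semantics are controlled by separate logits. In that setting the computation is exact: the two logits receive gradients $q\,p^{\mathrm{fmt}}(1-p^{\mathrm{fmt}})$ and $p^{\mathrm{fmt}}q(1-q)$, and the lemma follows directly. Finally, I would remark that as $p^{\mathrm{fmt}}\to1$ the format channel vanishes, which explains the subsequent transition to semantic learning.
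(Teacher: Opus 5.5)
Your argument holds in the stylized model you end with, but it takes a different route from the paper's. The paper differentiates \emph{both} factorizations $J_x=p_\theta^{\mathrm{sem}}\,q_\theta^{\mathrm{fmt}\mid\mathrm{sem}}=p_\theta^{\mathrm{fmt}}\,q_\theta^{\mathrm{sem}\mid\mathrm{fmt}}$. It takes the format-improvement term $p_\theta^{\mathrm{sem}}\nabla q_\theta^{\mathrm{fmt}\mid\mathrm{sem}}$ from the first and the semantic-improvement term $p_\theta^{\mathrm{fmt}}\nabla q_\theta^{\mathrm{sem}\mid\mathrm{fmt}}$ from the second. From $\mathbb E_x p_\theta^{\mathrm{sem}}>\mathbb E_x p_\theta^{\mathrm{fmt}}$ it then concludes that semantic learning is more severely gated. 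That route needs no parameterization or association assumption and uses the hypothesis exactly as stated. But it compares only prefactors: nothing controls the sizes of $\nabla q_\theta^{\mathrm{fmt}\mid\mathrm{sem}}$ and $\nabla q_\theta^{\mathrm{sem}\mid\mathrm{fmt}}$, so ``dominated'' stays qualitative.

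You instead use the single factorization $J=p^{\mathrm{fmt}}q$, in which both channels carry the same gate $p^{\mathrm{fmt}}q=J$. The gating cancels, and the comparison becomes one of saturation, $1-p^{\mathrm{fmt}}$ versus $1-q$. With separate logits and plain gradient ascent this is exact: the logit gradients are $q\,p^{\mathrm{fmt}}(1-p^{\mathrm{fmt}})$ and $p^{\mathrm{fmt}}q(1-q)$, and the contributions to $\dot J$ are in ratio $\bigl((1-p^{\mathrm{fmt}})/(1-q)\bigr)^2$. It also explains the hand-off to semantic learning as $p^{\mathrm{fmt}}\to1$. The price is three extra assumptions: the separate-logit model, the nonnegative association giving $q\ge p^{\mathrm{sem}}$ (all you actually use is $q>p^{\mathrm{fmt}}$), and a pointwise hypothesis.

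Two things to fix. First, averaging does not recover the lemma's in-expectation hypothesis. Take two equally weighted prompts with $(p^{\mathrm{fmt}},q)=(0.9,0.8)$ and $(0.01,0.5)$, and independence ($q=p^{\mathrm{sem}}$). Then $\mathbb E_x p^{\mathrm{sem}}=0.65>0.455=\mathbb E_x p^{\mathrm{fmt}}$, yet $\mathbb E_x[q\,p^{\mathrm{fmt}}(1-p^{\mathrm{fmt}})]\approx0.038<0.073\approx\mathbb E_x[p^{\mathrm{fmt}}q(1-q)]$. So your sentence ``$1-q<1-p^{\mathrm{fmt}}$ on average, which makes the ratio exceed one'' fails, and you should state plainly that you prove the result under the stronger pointwise hypothesis.

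Second, drop the Fisher/natural-gradient framing. The factor $p(1-p)$ is the ordinary derivative of $p$ with respect to its log-odds, and your conclusion depends on using the plain gradient. In your two-logit model the Fisher matrix is $\mathrm{diag}\bigl(p^{\mathrm{fmt}}(1-p^{\mathrm{fmt}}),\,p^{\mathrm{fmt}}q(1-q)\bigr)$. A natural-gradient step therefore gives channel contributions in ratio $q(1-p^{\mathrm{fmt}})/(1-q)$. That ratio is below one at, e.g., $p^{\mathrm{fmt}}=0.1$, $q=0.3$, even though $q>p^{\mathrm{fmt}}$.
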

As a result, probability mass is rapidly shifted from a diverse set of malformed or invalid trajectories to a relatively concentrated subset of valid ones. We also found that mid-training can shrink phase 1 since mid-training already instills the basic formatting and validity constraints. We analyze this phenomenon in Appendix~\ref{apd:sec:mid-training}.

\subsection{Phase 2: Entropy Eruption}
\label{sec:eruption-phase2}

After the policy acquires format validity in Phase~1, training enters a regime in which entropy rises sharply rather than continuing to fall (see Figure~\ref{fig:entropy-eruption}). This behavior is distinctive to agent RL and stands in contrast with single-turn reasoning RL, where entropy typically remains low after its initial collapse.  Below, we explain this unique phenomenon by two linked mechanisms: high representation similarity among trajectories and the gradient interference it induces. 

\begin{figure}[t!]
    \centering
    \includegraphics[width=0.32\linewidth]{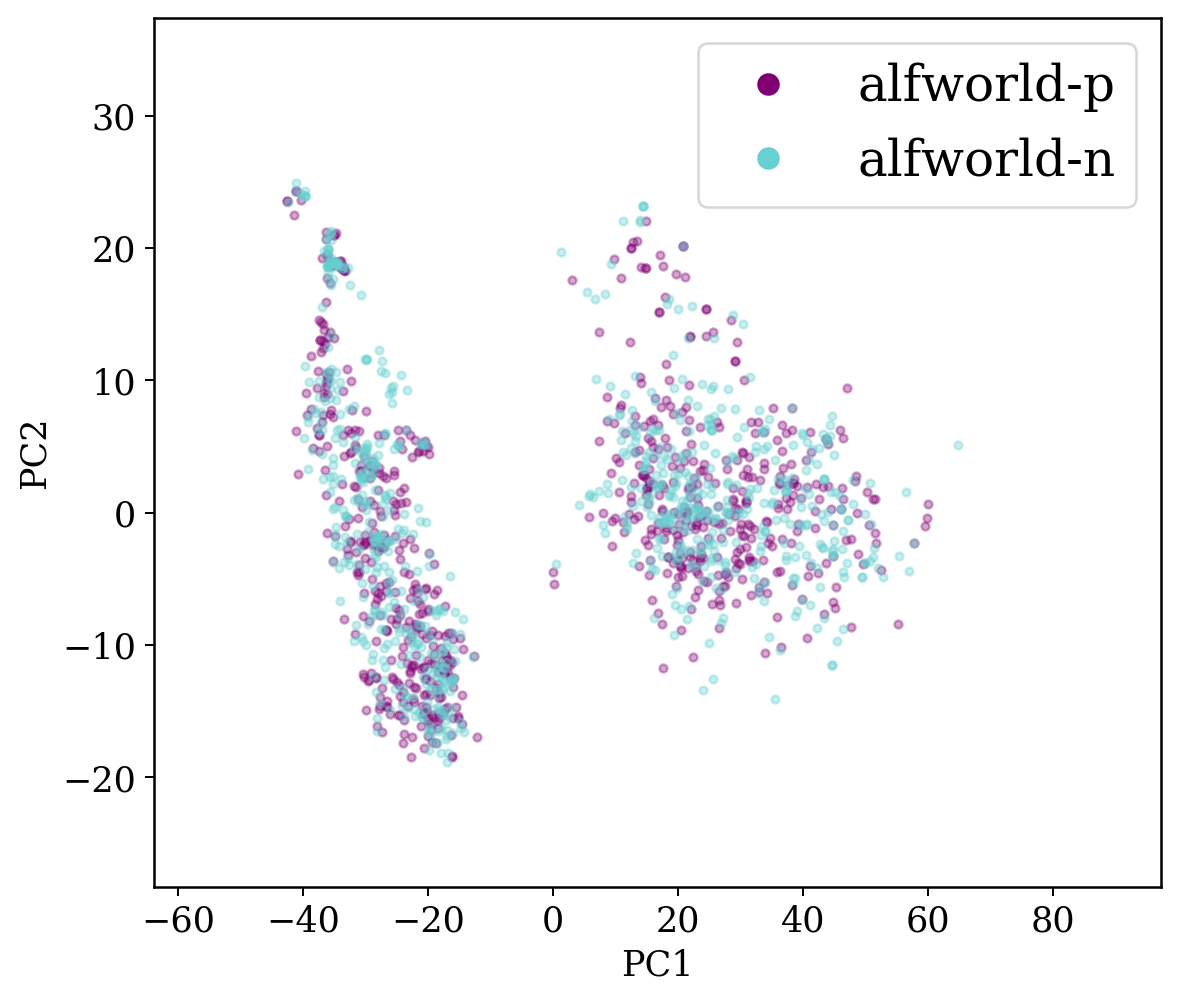}
    \includegraphics[width=0.32\linewidth]{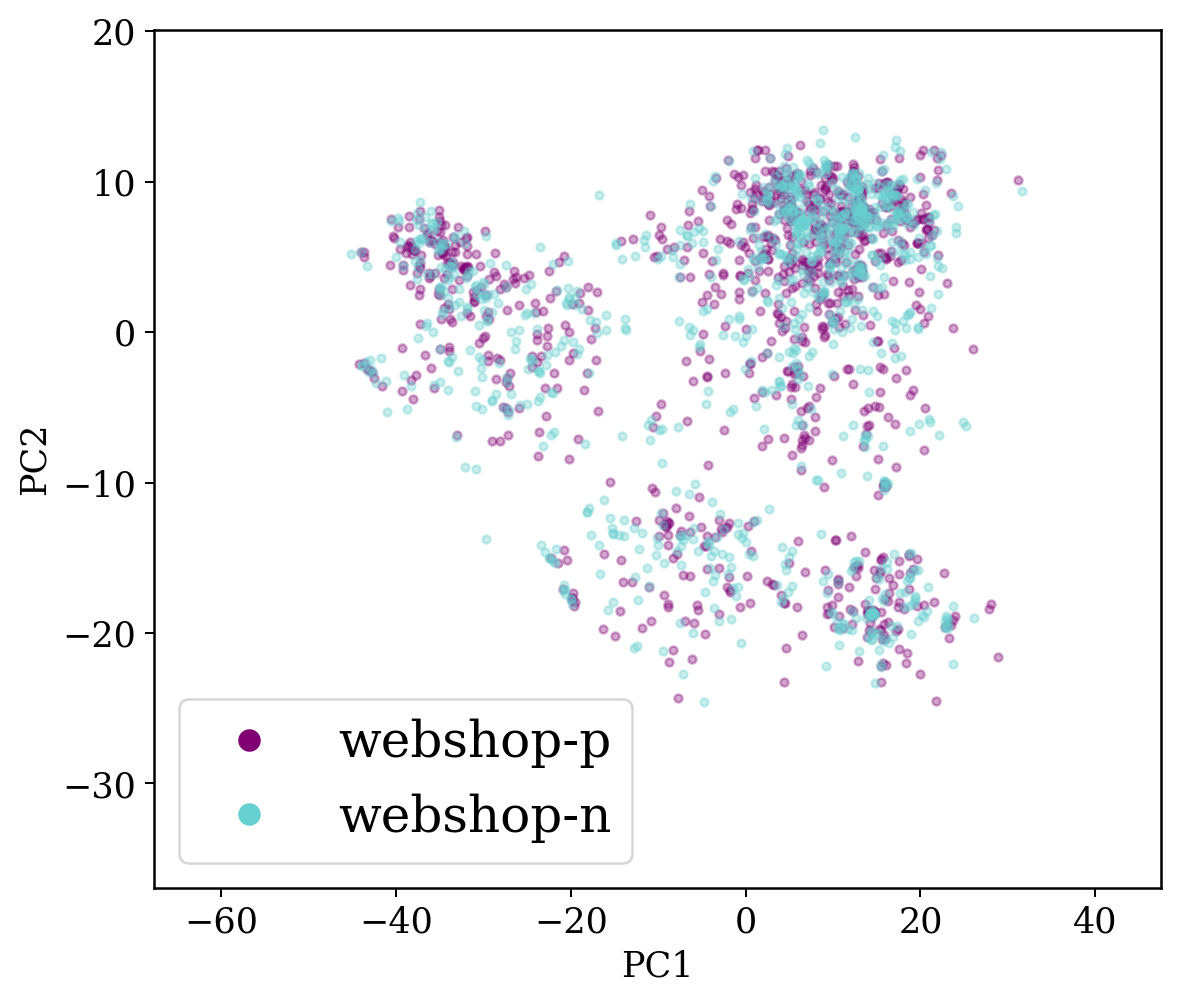}
    \includegraphics[width=0.32\linewidth]{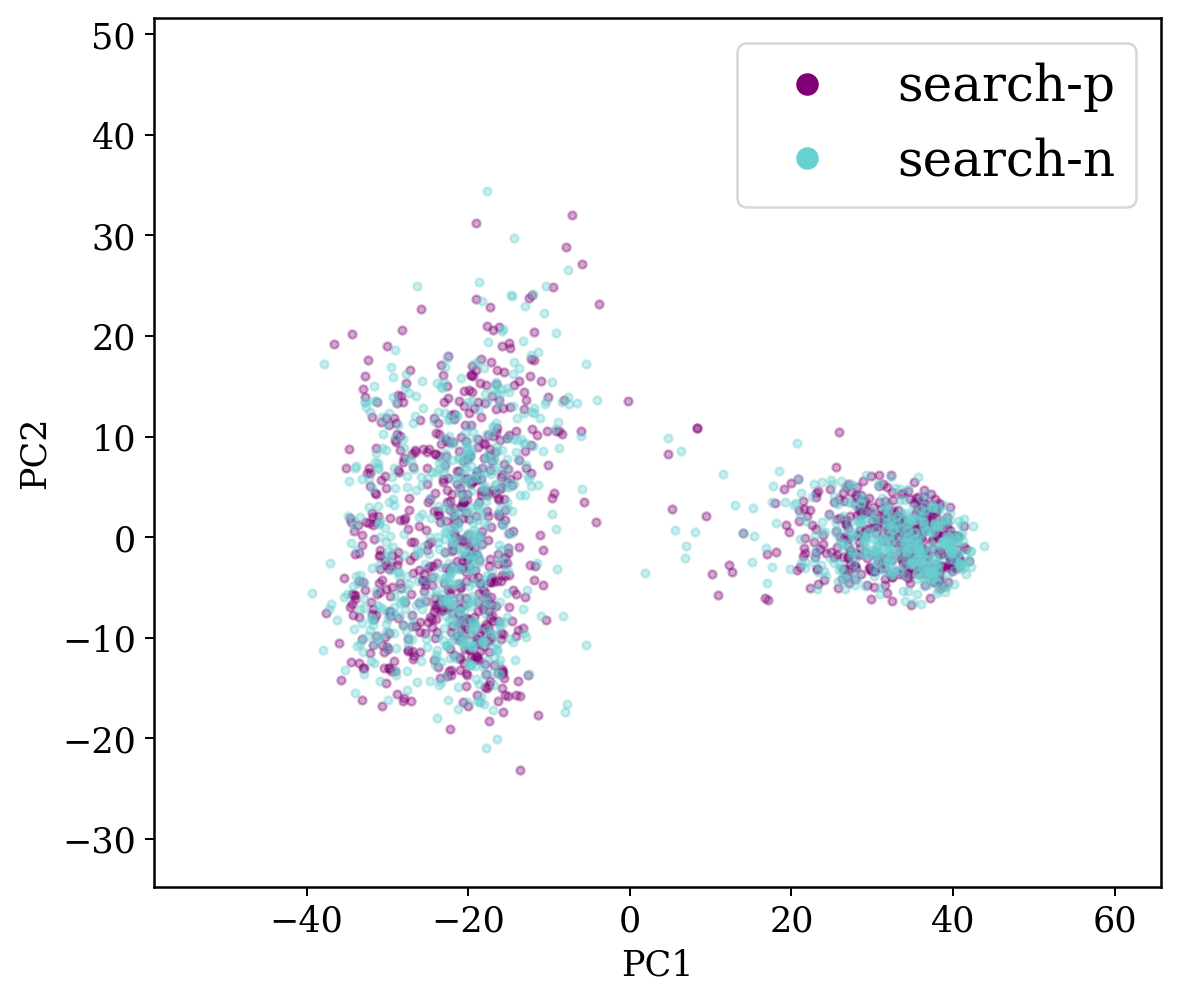}
    \vspace{-1em}
    \caption{Representation distribution of \textcolor[HTML]{800074}{correct trajectories} and \textcolor[HTML]{67d0d0}{wrong trajectories} in three agent tasks. -p and -n denote correct and wrong trajectories in the figures.}
    \vspace{-1em}
    \label{fig:geometry}
\end{figure}

\paragraph{Agent trajectories cluster together in the representation space.}
We begin by showing that \emph{correct} and \emph{wrong} agent trajectories are not well separated in the LLM representation space. Figure~\ref{fig:geometry} visualizes the last-layer hidden representations of Qwen-2.5-7B, averaged over response tokens, for sampled trajectories from three agent tasks: AlfWorld, WebShop, and Search-augmented QA. Across all three tasks, correct and wrong trajectories exhibit substantial overlap rather than forming clearly separated clusters, indicating strong representation similarity between positively and negatively rewarded trajectories. In Appendix~\ref{apd:sec:representation-similarity}, we further show that this representation overlap is markedly more pronounced in agent tasks than in other LLM tasks.
This high similarity results in important optimization consequences: negatively rewarded trajectories can interfere with correct trajectories during RL updates, thereby reducing the likelihood of valid trajectories and contributing to the onset of entropy eruption.

\begin{figure}[t!]
    \centering
    \includegraphics[width=0.42\linewidth]{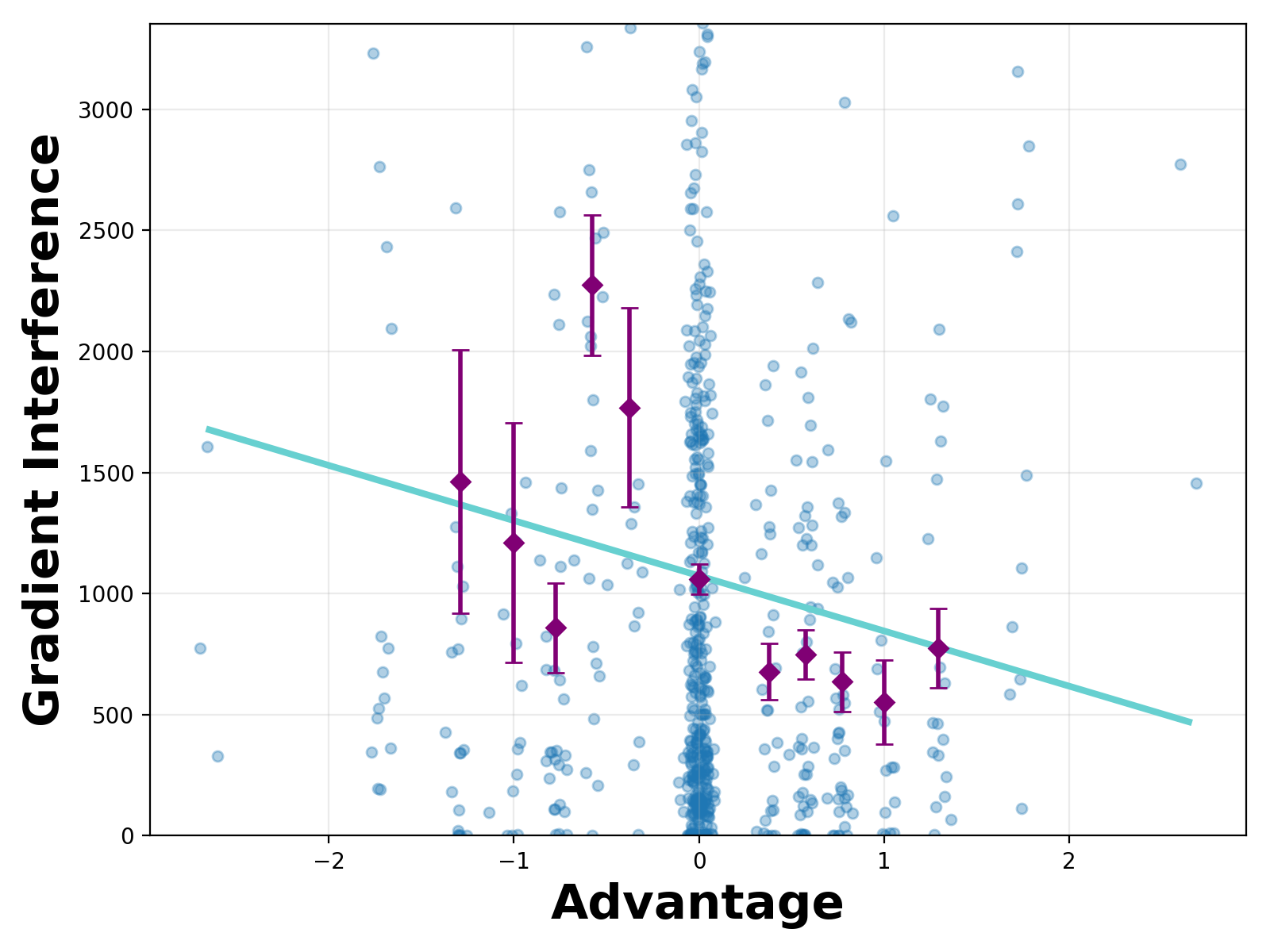}
    \includegraphics[width=0.42\linewidth]{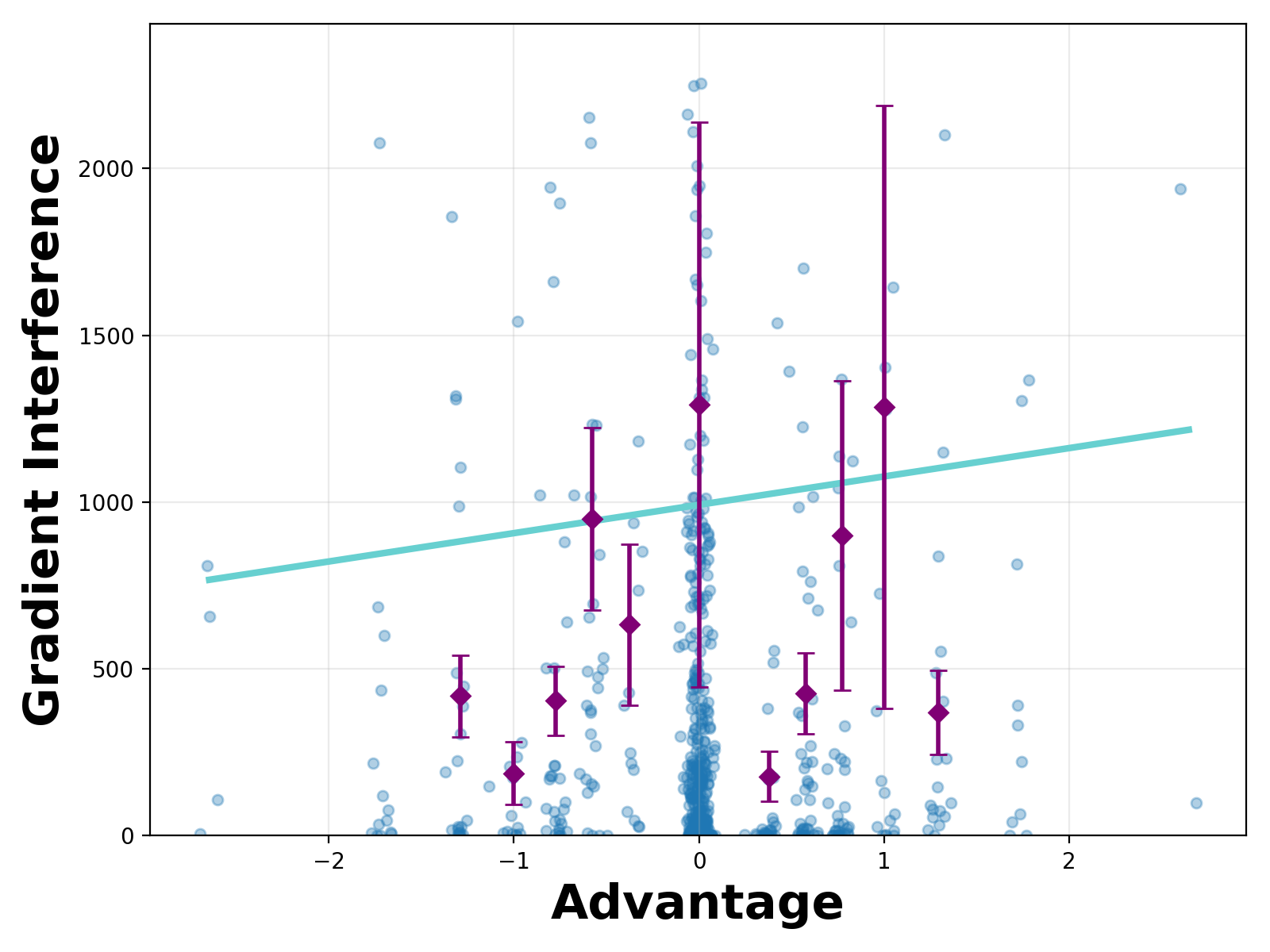}
    \vspace{-1em}
    \caption{Gradient interference of Llama3.2-3B against trajectory advantage immediately before (left) and after (right) an entropy-eruption event on WebShop. }
    \vspace{-1.5em}
    \label{fig:grad-correlation}
\end{figure}

\paragraph{Representation similarity causes gradient interference.} When trajectories are close in representation space, their per-token gradients align, so that suppressing one trajectory's likelihood inadvertently suppresses others. The following lemma makes this precise.

\begin{lemma}[High representation similarity induces gradient interference]
\label{lem:gradient-interference}
Fix a prompt $x$, and let $\theta = (W,\phi)$, where $W \in \mathbb{R}^{|V|\times d}$ is the output matrix and $\phi$ denotes all remaining model parameters. For any trajectory $y_i = (y_{i,1},\dots,y_{i,|y_i|})$, let $h_{i,k} := h_\phi(x,y_{i,<k}) \in \mathbb{R}^d$, $p_{i,k} := \pi_\theta(\,\cdot \mid x,y_{i,<k}) \in \mathbb{R}^{|V|}$, and let $e_{y_{i,k}} \in \mathbb{R}^{|V|}$ be the one-hot vector of token $y_{i,k}$. Define $\ell_i(\theta) := \sum_{k=1}^{|y_i|} \log p_{i,k}$. 
If $g_i := \nabla_\theta \ell_i(\theta)$ is the full gradient of the trajectory log-likelihood, then for any two trajectories $y_i$ and $y_j$, their gradient interference can be formulated as
$$\langle g_i, g_j \rangle=
\underbrace{\sum_{k=1}^{T_i}\sum_{k'=1}^{T_j} \alpha_{i,k;j,k'} \, \langle h_{i,k}, h_{j,k'} \rangle}_\text{representation similarity} + \underbrace{\sum_{k=1}^{T_i}\sum_{k'=1}^{T_j}\Bigl\langle J_{i,k}^{\top} W^{\top} q_{i,k},\,J_{j,k'}^{\top} W^{\top} q_{j,k'}\Bigr\rangle}_\text{relevant to LLM backbone geometry}.$$
where $\alpha_{i,k;j,k'} := \langle q_{i,k}, q_{j,k'}\rangle = \langle  e_{y_{i,k}}-p_{i,k},\, e_{y_{j,k'}}-p_{j,k'} \bigr\rangle$, and $J_{i,k} := \frac{\partial h_{i,k}}{\partial \phi} \in \mathbb{R}^{d \times \dim(\phi)}$ denote the Jacobian of the hidden state with respect to the non-output parameters.
\end{lemma}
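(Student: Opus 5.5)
The plan is to split the parameter vector as $\theta=(W,\phi)$, so that $\langle g_i,g_j\rangle=\langle\nabla_W\ell_i,\nabla_W\ell_j\rangle_F+\langle\nabla_\phi\ell_i,\nabla_\phi\ell_j\rangle$. We then compute each block with the standard softmax log-likelihood derivative. The key identity is that the softmax gradient of a token log-probability with respect to its logits $z_{i,k}=Wh_{i,k}$ is
$$\nabla_{z}\log p_{i,k}(y_{i,k})=e_{y_{i,k}}-p_{i,k}=q_{i,k}.$$
Here we read $\log p_{i,k}$ in the statement as $\log p_{i,k}(y_{i,k})$.

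For the output-matrix block, the chain rule through $z_{i,k}=Wh_{i,k}$ gives $\nabla_W\log p_{i,k}(y_{i,k})=q_{i,k}h_{i,k}^\top$. Summing over positions gives $\nabla_W\ell_i=\sum_k q_{i,k}h_{i,k}^\top$. The Frobenius inner product of rank-one matrices factorizes as $\langle q h^\top,q'h'^\top\rangle_F=\langle q,q'\rangle\langle h,h'\rangle$. Bilinearity then yields exactly $\sum_{k,k'}\alpha_{i,k;j,k'}\langle h_{i,k},h_{j,k'}\rangle$, which is the representation-similarity term.

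For the backbone block, $\phi$ affects $\ell_i$ only through the hidden states, because $W$ is held fixed. The chain rule therefore gives
$$\nabla_\phi\log p_{i,k}(y_{i,k})=J_{i,k}^\top\,\nabla_h\log p_{i,k}(y_{i,k})=J_{i,k}^\top W^\top q_{i,k}.$$
Summing over $k$ and expanding the inner product bilinearly gives the double sum $\sum_{k,k'}\langle J_{i,k}^\top W^\top q_{i,k},J_{j,k'}^\top W^\top q_{j,k'}\rangle$.

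I expect the only real subtlety to be bookkeeping rather than analysis. First, $h_{i,k}$ depends on the earlier tokens $y_{i,<k}$, but those are fixed sampled tokens, so no gradient flows through the sampling and $J_{i,k}$ fully captures the dependence on $\phi$. Second, the output matrix must not be tied to the input embedding. If it were tied, $W$ would also enter $h$, and the split would need to attribute that dependence to the second term. I would state untied weights as a standing convention, or absorb the embedding path into $J$. With those points settled, adding the two blocks gives the claimed decomposition. The upper limits $T_i$ and $T_j$ should be identified with $|y_i|$ and $|y_j|$.
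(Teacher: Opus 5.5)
Your proposal is correct and follows the paper's proof essentially step for step. Both use the block split $\theta=(W,\phi)$, the softmax identity $\nabla_z\log p = e_{y}-p = q$, the rank-one Frobenius factorization for the $W$-block, and the chain rule $J_{i,k}^\top W^\top q_{i,k}$ for the $\phi$-block, with your caveats on untied output weights and $T_i=|y_i|$ being bookkeeping the paper leaves implicit.
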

\vspace{-0.5em}
\noindent This lemma shows that high cosine similarity between hidden states $h_{i,k}$ and $h_{j,k'}$ directly amplifies gradient interference between the two trajectories. This prediction is also supported empirically: in Figure~\ref{fig:grad-correlation}, we compare gradient interference before and after entropy eruption, and show that negatively advantaged trajectories exhibit substantially stronger interference before the eruption, consistent with the proposed mechanism.
\vspace{-0.5em}
\paragraph{Gradient interference suppresses valid trajectories.}
Strong gradient interference means that when the RL objective suppresses a negatively-advantaged trajectory, it simultaneously pulls down the likelihood of other trajectories that share similar representations---including correct ones.
The next lemma formalizes the likelihood reduction effect.

\begin{lemma}[Gradient interference leads to likelihood decrease]
Fix a prompt~$x$ and a group of sampled trajectories $y_1, \dots, y_G$ with advantages $A_1, \dots, A_G$. For each response $y_i$, let $g_i := \nabla_\theta \ell_i(\theta) = \nabla_\theta \log \pi_\theta(y_i \mid x)$ be the gradient of the log-likelihood of response $y_i$. We further define $c_j := \frac{1}{G}\sum_{i=1}^{G}\langle g_i, g_j \rangle$, which measures how strongly trajectory~$y_j$'s gradient aligns with the group on average. Under a gradient step $\frac{d}{dt}\theta = -\eta\,\nabla_\theta \mathcal{L}_{\mathrm{RL}}(\theta)$ with $\eta > 0$, the average log-likelihood change of the group satisfies \[ \frac{1}{G}\sum_{i=1}^{G} \frac{d}{dt}\ell_i(\theta) \;=\; \sum_{j=1}^{G} A_j\, c_j. \] This quantity is negative, i.e.\ the group's average likelihood decreases, when $\sum_{j=1}^{G} A_j\, c_j < 0$. 
\label{lem:likelihood-decrease}
\end{lemma}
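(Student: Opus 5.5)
The proof is a direct chain-rule computation along the gradient flow, followed by exchanging the order of two finite sums. There are three steps.

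\emph{Step 1: the velocity field.} Differentiating \eqref{eq:rl-obj} term by term gives $\nabla_\theta \mathcal{L}_{\mathrm{RL}}(\theta) = -\sum_{j=1}^{G} A_j\, g_j$, with $g_j=\nabla_\theta \ell_j(\theta)$. Here the advantages $A_j$ are treated as constants with respect to $\theta$, as is standard: they come from within-group rewards of already-sampled trajectories and are detached from the computation graph. The flow therefore reads
\[
\frac{d}{dt}\theta = \eta \sum_{j=1}^{G} A_j\, g_j .
\]

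\emph{Step 2: per-trajectory likelihood change.} By the chain rule, $\frac{d}{dt}\ell_i(\theta) = \langle g_i, \frac{d}{dt}\theta\rangle$. Substituting Step 1 gives
\[
\frac{d}{dt}\ell_i = \eta \sum_{j=1}^{G} A_j \langle g_i, g_j\rangle .
\]
Averaging over $i$ and exchanging the two finite sums yields
\[
\frac{1}{G}\sum_{i=1}^{G}\frac{d}{dt}\ell_i = \eta\sum_{j=1}^{G} A_j \Bigl(\frac{1}{G}\sum_{i=1}^{G}\langle g_i,g_j\rangle\Bigr) = \eta \sum_{j=1}^{G} A_j\, c_j ,
\]
using the definition of $c_j$.

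\emph{Step 3: the sign claim.} Since $\eta>0$, the sign of the average log-likelihood change equals the sign of $\sum_j A_j c_j$. Hence the group's average likelihood decreases exactly when $\sum_j A_j c_j<0$.

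The only real obstacle is bookkeeping: the displayed identity in the statement carries no factor $\eta$. I would handle this by noting that the identity holds up to the positive constant $\eta$, equivalently by rescaling time $t\mapsto \eta t$ (or taking $\eta=1$), which does not affect the sign conclusion. I would also remark, tying back to Lemma~\ref{lem:gradient-interference}, why the sum can be negative. Because $\sum_j A_j=0$, we have $\sum_j A_j c_j = \sum_j A_j (c_j-\bar c)$ for any constant $\bar c$. The quantity is therefore negative precisely when negatively advantaged trajectories have above-average alignment $c_j$ with the group, which is what high representation similarity produces.
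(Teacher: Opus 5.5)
Your proposal is correct and follows essentially the same route as the paper's proof: derive the flow $\frac{d}{dt}\theta = \eta\sum_{j} A_j g_j$, apply the chain rule to each $\ell_i$, then average over $i$ and swap the finite sums to obtain $\eta\sum_j A_j c_j$. The paper's own derivation also ends with the factor $\eta$, which the displayed identity omits, so your reading that the identity holds up to the positive constant $\eta$ (which leaves the sign conclusion unchanged) is the right reconciliation.
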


The lemma implies that the average likelihood of the response group could decrease when the gradient of a negatively advantaged trajectory $y_j$ strongly interferes with other trajectories, namely when $c_j$ is large. In that case, suppressing \(y_j\) also suppresses other trajectories that are semantically similar. Consequently, even the likelihood of correct responses in the group may be pulled down if they share strong representation similarity with negatively advantaged trajectories.

\paragraph{Likelihood decrease drives entropy growth.}
We next show, through the following lemma, that this likelihood decrease on valid trajectories increases entropy and thus drives the policy toward a flatter distribution.

\begin{lemma}[Likelihood decline on high-probability responses increases entropy]\label{lem:entropy-increase} 
Fix a prompt $x$, and let $\pi_t(y):=\pi_{\theta_t}(y\mid x)$ be the distribution at the training step $t$. If at time $t$, $\mathrm{Cov}(\log\pi_t(y),\frac{\mathrm d}{\mathrm d t}\ell_t(y))<0$, then $\frac{\mathrm d}{\mathrm dt}H_t>0$.
\end{lemma}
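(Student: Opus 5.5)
We will compute the time derivative of the entropy $H_t=-\sum_y \pi_t(y)\log\pi_t(y)$ directly and show that it equals minus the covariance in the hypothesis. Throughout, $\ell_t(y)=\log\pi_t(y)$, so $\frac{d}{dt}\pi_t(y)=\pi_t(y)\,\frac{d}{dt}\ell_t(y)$. We treat the response space as finite, or at least assume enough regularity that differentiation under the sum is permitted. This is reasonable here because trajectories have bounded length over a finite vocabulary.

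\textbf{Step 1: differentiate the entropy.} By the product rule,
\[
\frac{d}{dt}H_t=-\sum_y \dot\pi_t(y)\log\pi_t(y)-\sum_y \dot\pi_t(y).
\]
The second sum vanishes because $\sum_y\pi_t(y)=1$ for all $t$. Substituting $\dot\pi_t=\pi_t\dot\ell_t$ into the first sum gives
\[
\frac{d}{dt}H_t=-\mathbb E_{y\sim\pi_t}\bigl[\log\pi_t(y)\,\dot\ell_t(y)\bigr].
\]

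\textbf{Step 2: recognize a covariance.} By the same normalization, $\mathbb E_{\pi_t}[\dot\ell_t(y)]=\sum_y\dot\pi_t(y)=0$. Hence
\[
\mathbb E[\log\pi_t\,\dot\ell_t]=\mathrm{Cov}(\log\pi_t,\dot\ell_t)+\mathbb E[\log\pi_t]\,\mathbb E[\dot\ell_t]=\mathrm{Cov}(\log\pi_t,\dot\ell_t),
\]
and therefore
\[
\frac{d}{dt}H_t=-\mathrm{Cov}_{y\sim\pi_t}\bigl(\log\pi_t(y),\dot\ell_t(y)\bigr).
\]
The hypothesis that this covariance is negative then gives $\frac{d}{dt}H_t>0$ immediately.

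\textbf{Main obstacle.} The only delicate point is the exchange of the derivative with the sum over $y$, together with the identity $\mathbb E_{\pi_t}[\dot\ell_t]=0$. Both require the sum over trajectories to be finite, or dominated convergence in the countable case. We would state the finite-support assumption explicitly. We would also remark that $\pi_t(y)>0$ on the support, which holds for softmax policies, so that $\log\pi_t$ is well defined.

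Finally, we would add a short interpretive remark connecting the result to Lemma~\ref{lem:likelihood-decrease}. When high-probability (valid) trajectories have their likelihood pulled down, so that $\dot\ell_t$ is negative exactly where $\log\pi_t$ is large, the covariance is negative and entropy grows.
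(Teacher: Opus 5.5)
Your proof is correct and follows the paper's own argument. Both proofs differentiate $H_t$, drop the $\sum_y \dot\pi_t(y)$ term by normalization, substitute $\dot\pi_t=\pi_t\dot\ell_t$, and identify $-\mathbb{E}_{\pi_t}[\log\pi_t\,\dot\ell_t]$ with $-\mathrm{Cov}(\log\pi_t,\dot\ell_t)$; your explicit check that $\mathbb{E}_{\pi_t}[\dot\ell_t]=\sum_y\dot\pi_t(y)=0$ justifies that last identification, which the paper's proof uses only implicitly.
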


\noindent 
Once the valid-action ratio saturates in Phase~1, valid trajectories tend to occupy the high-likelihood region of the policy, but their likelihood then decreases during training (Figure~\ref{fig:likelihood}), as demonstrated in Lemma~\ref{lem:likelihood-decrease}. Thus, $\mathrm{Cov}(\log\pi_t(y),\frac{\mathrm d}{\mathrm d t} \ell_t(y))<0$ holds empirically.
This implies, based on Lemma~\ref{lem:entropy-increase}, that entropy must increase, leading to the eruption. 

\begin{figure}
    \centering
    \includegraphics[width=0.32\linewidth]{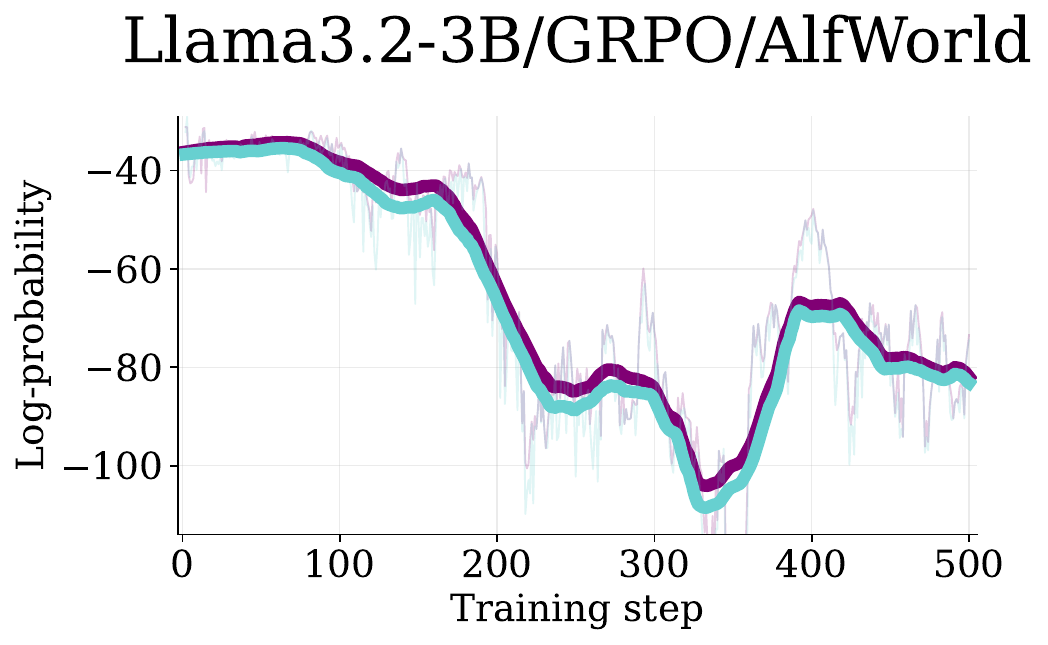}
    \includegraphics[width=0.32\linewidth]{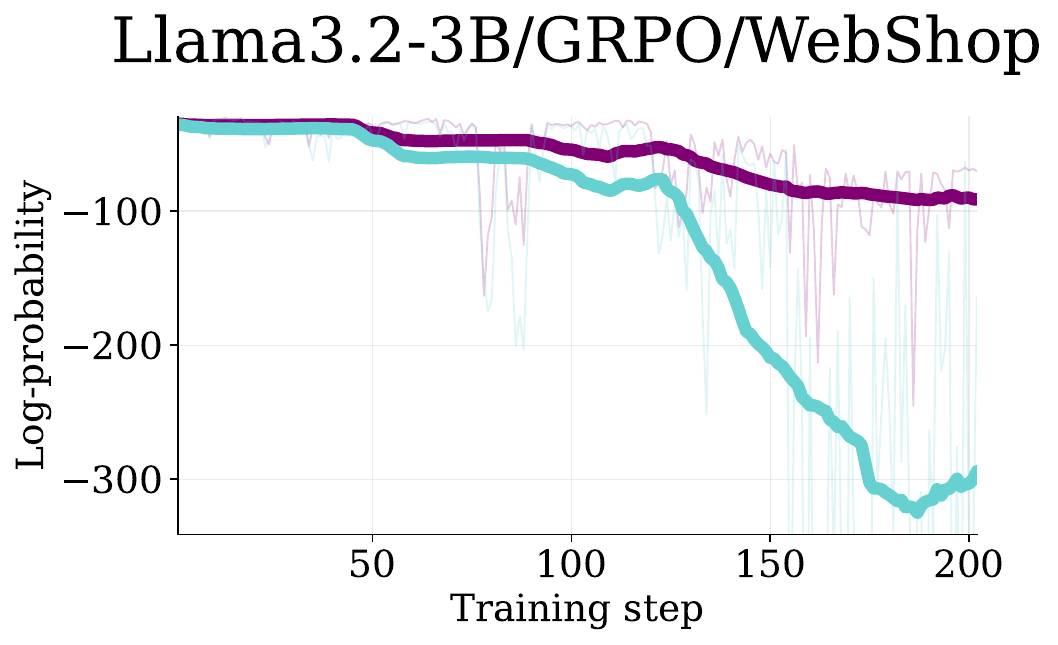}
    \includegraphics[width=0.32\linewidth]{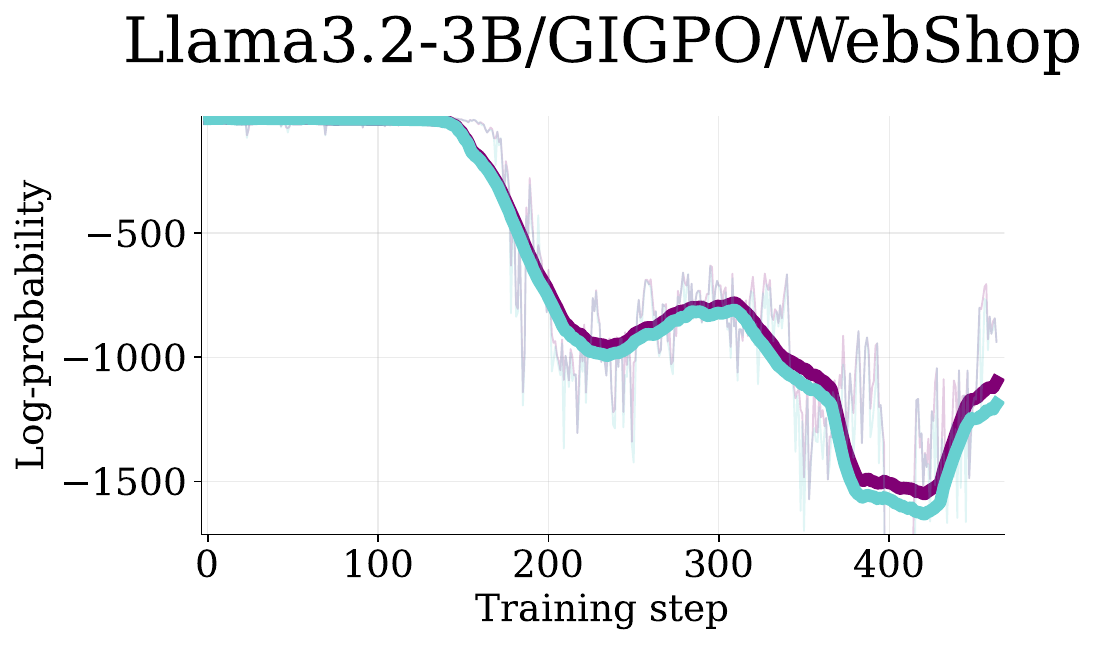}
    \includegraphics[width=0.32\linewidth]{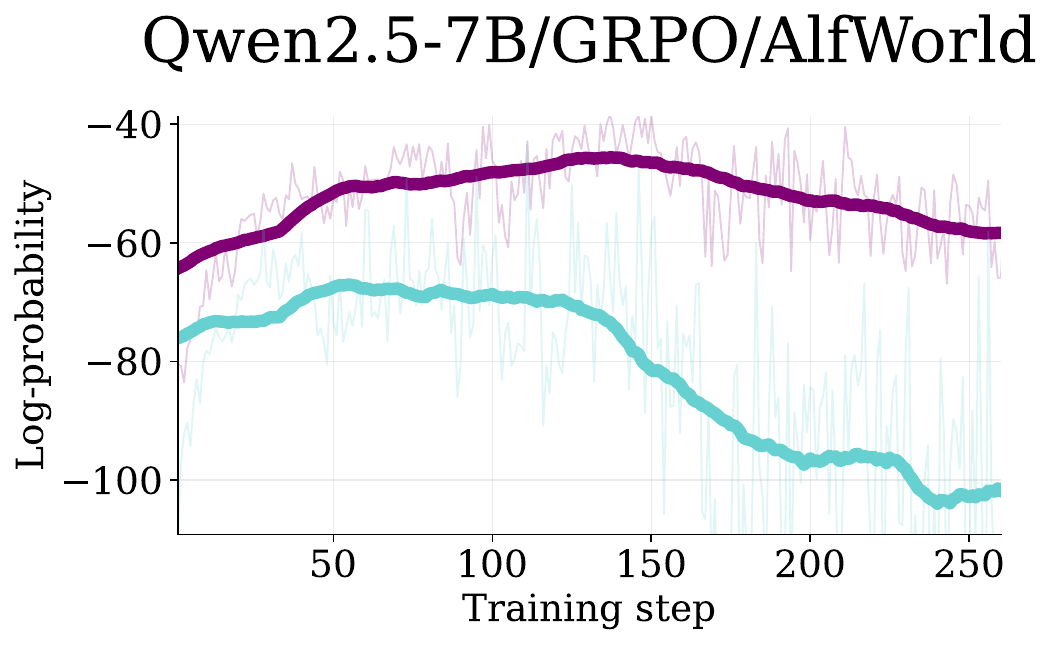}
    \includegraphics[width=0.32\linewidth]{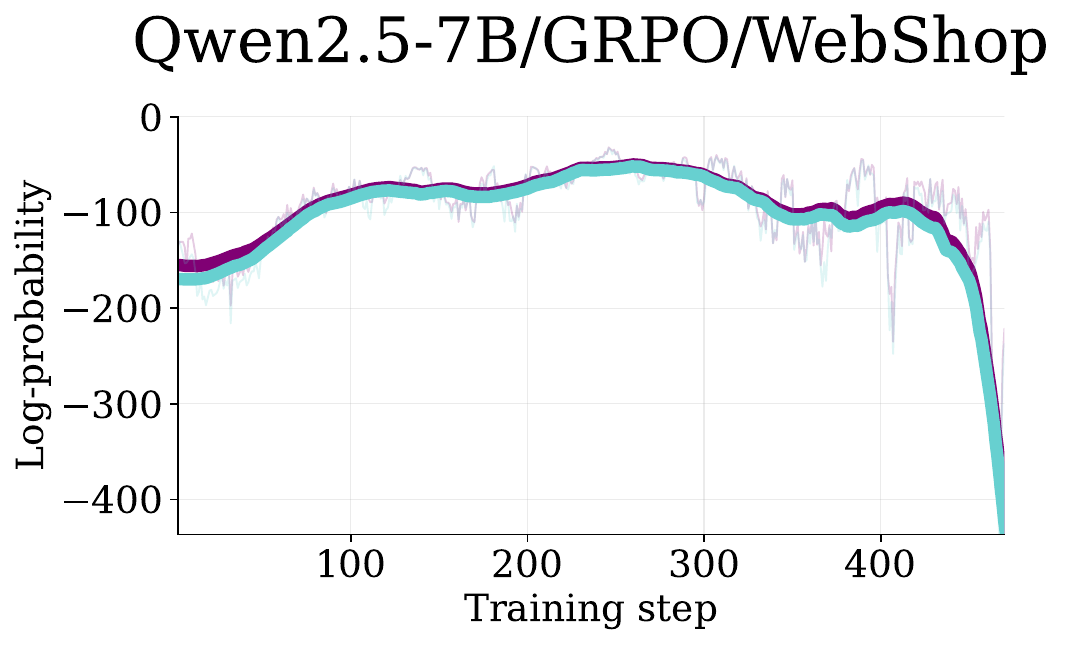}
    \includegraphics[width=0.32\linewidth]{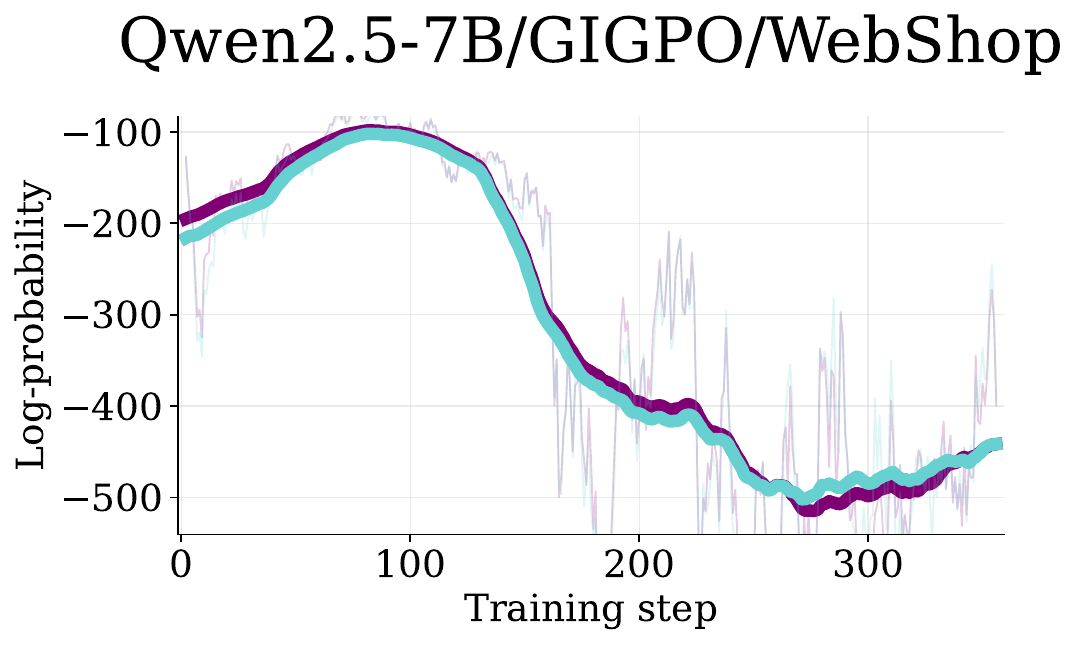}
    \vspace{-0.5em}
    \caption{The likelihood of \textcolor[HTML]{800074}{correct trajectories} and \textcolor[HTML]{67d0d0}{valid trajectories} decreases during agent RL. More examples are shown in Appendix~\ref{apd:sec:likelihood-decrease}.}
    \vspace{-1em}
    \label{fig:likelihood}
\end{figure}

\begin{figure}
    \centering
    \includegraphics[width=0.32\linewidth]{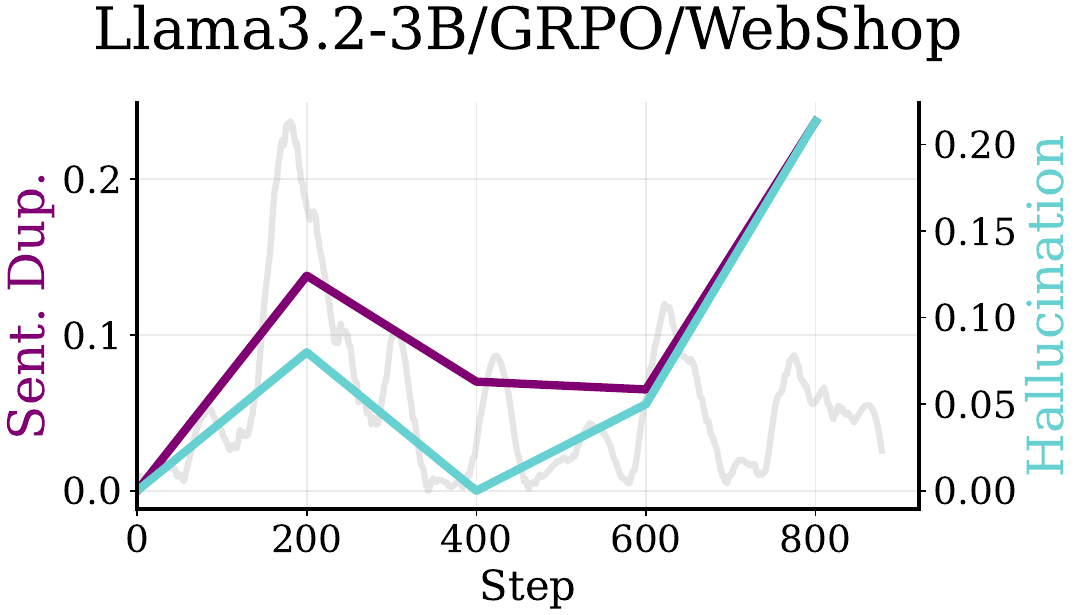}
    \includegraphics[width=0.32\linewidth]{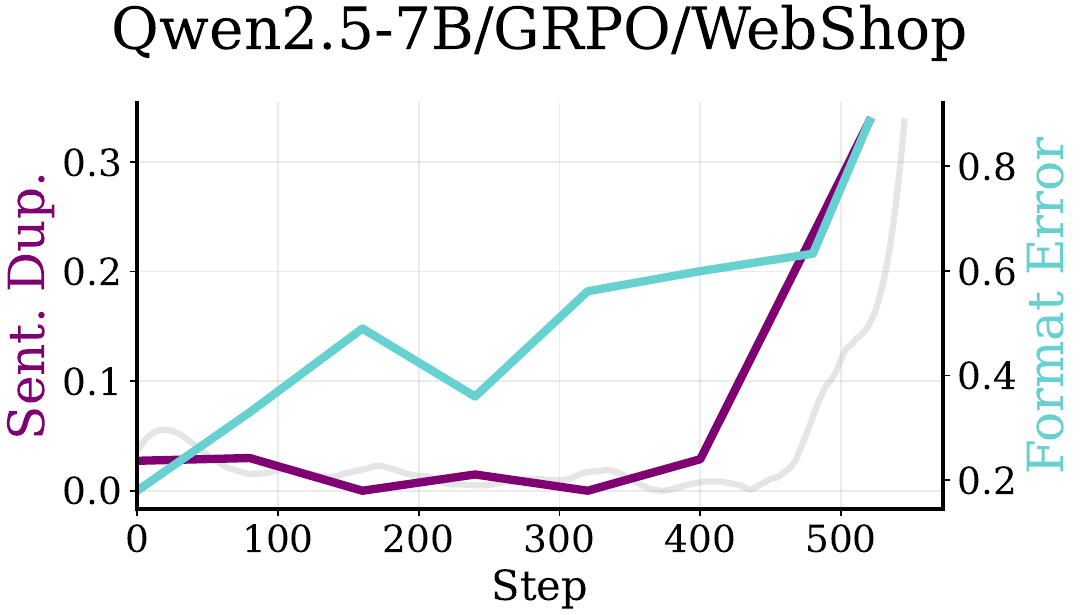}
    \includegraphics[width=0.32\linewidth]{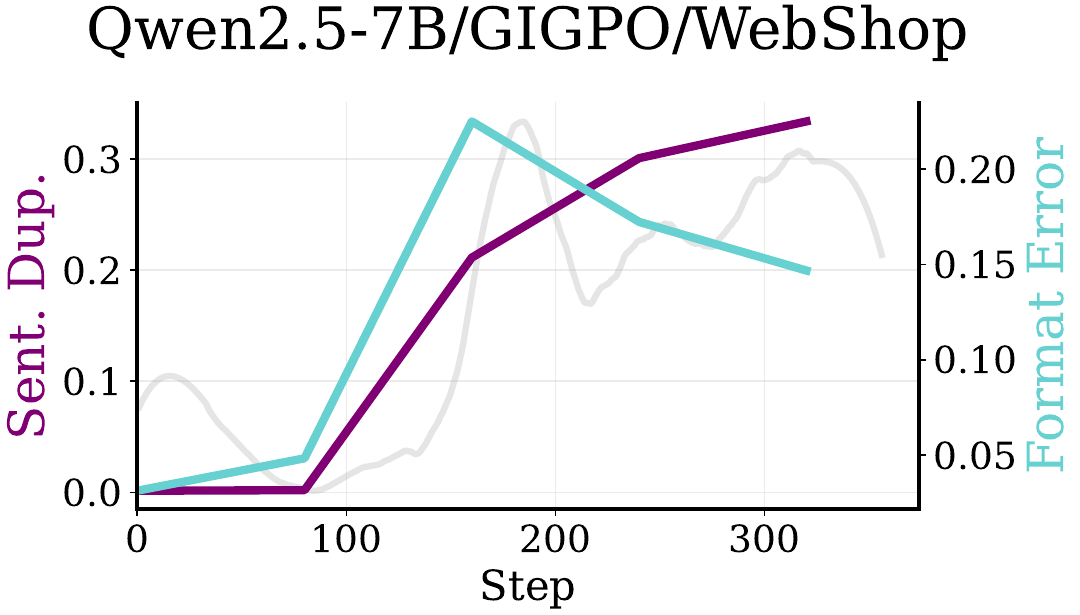}
    \vspace{-0.5em}
\caption{Degenerate patterns such as \textcolor[HTML]{800074}{Sentence Duplication} and \textcolor[HTML]{67d0d0}{Hallucination} occur in entropy eruption. Sent. Dup. represents sentence duplication.}
\vspace{-1.5em}
    \label{fig:toxic}
\end{figure}

\paragraph{Entropy eruption flattens the policy and amplifies degenerate patterns.}

As entropy erupts, the policy distribution becomes flatter, making low-quality yet non-negligible trajectory modes more likely to be sampled during training. In experiments, we find that this phase is strongly associated with the emergence of degenerate patterns, such as \emph{sentence duplication} and \emph{hallucination}. To evaluate this phenomenon, we collect 1024 sampled trajectories at different training checkpoints and compute the corresponding statistics. For sentence duplication, we split each response into sentence-level chunks and measure the fraction of sentences that are repeated within the same response. For hallucination, we measure the rate at which the model selects product attributes that do not actually appear on the webpage. As shown in Figure~\ref{fig:toxic}, across different backbones and both GRPO and GIGPO training, both metrics significantly rise during entropy-eruption periods, indicating that the entropy eruption regime systematically amplifies pathological behaviors and degrades the quality of sampled trajectories.

\subsection{Phase 3: Entropy Subsidence and the Recurring Cycle}
\begin{wrapfigure}{r}{0.41\textwidth}
    \centering
    \vspace{-2 em}
\includegraphics[width=0.40\textwidth]{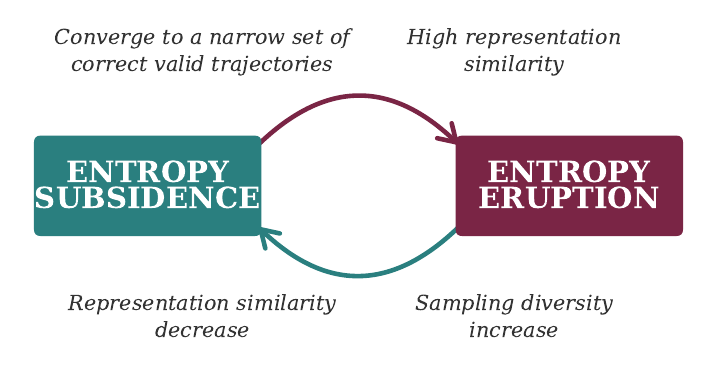}
\vspace{-1.em}
    \caption{The cyclical dynamics between entropy subsidence and entropy eruption. }
    \vspace{-1.5em}
    \label{fig:cyc}
\end{wrapfigure}
\paragraph{Why does the cycle repeat?}
Entropy eruption does not persist indefinitely. As the distribution flattens, the policy samples more diverse trajectories, which in turn reduces representation similarity among them.  
By Lemma~\ref{lem:gradient-interference} and Lemma~\ref{lem:likelihood-decrease}, this reduction in trajectory similarity means less gradient interference, so the suppressive effect of negatively-advantaged trajectories on correct ones is reduced and the likelihood of correct trajectories recovers.
As a result, training gradually shifts from a highly entropic exploration back toward reinforcing correct responses. 
However, the subsidence phase carries the seed of the next eruption. As the policy reconcentrates on the correct trajectories discovered during the high-entropy phase, representation similarity rises once again, restoring the conditions that triggered Phase~2.
The dynamics therefore form a self-perpetuating cycle, illustrated in Figure~\ref{fig:cyc}: convergence raises similarity $\rightarrow$ similarity causes eruption $\rightarrow$ eruption increases diversity $\rightarrow$ diversity enables subsidence $\rightarrow$ and subsidence leads back to convergence.

\paragraph{The lasting damage of entropy eruption.}
Although entropy subsides and valid-trajectory likelihood recovers, the damage introduced during eruption can persist. When the distribution is flat, degenerate trajectories, such as those containing sentence duplication and hallucination, are sampled more frequently. If such trajectories still produce a correct final answer and receive a positive reward, the policy reinforces these patterns. As shown in Figure~\ref{fig:toxic}, degenerate behaviors acquired during eruption often persist into subsequent subsidence phases and can even be amplified as the policy concentrates on the (now-corrupted) correct trajectories. We provide additional analysis and statistics in Appendix~\ref{apd:sec:lasting-effect}.
In more severe cases, entropy eruption can trigger a semantic collapse and completely derail RL training. As shown in Figure~\ref{fig:entropy-eruption-apd}, for Llama3.2-1B on WebShop, an entropy eruption is followed by a collapse of the reward curve to zero.

\section{How to Alleviate Entropy Eruption?}

While the primary focus of this paper is on understanding the training dynamics of agent RL, we show here that our analysis directly suggests a simple practical intervention---and that even a minimal approach motivated by the theory can meaningfully stabilize training and improve performance. We expect future work to deepen this line of inquiry with more sophisticated methods; our goal here is to demonstrate that the theoretical understanding developed above is actionable and guides practice.

\subsection{SEAL: Separation-Enhanced Agent Learning}

The analysis in Section~\ref{sec:dynamics} traces entropy eruption to a single root cause: high representation similarity between correct and incorrect trajectories amplifies gradient interference, suppressing valid-trajectory likelihood and flattening the distribution. A natural remedy is to push these two trajectory classes apart in representation space. We propose \textbf{SEAL} (\textbf{S}eparation-\textbf{E}nhanced \textbf{A}gent \textbf{L}earning), a lightweight modification to the original RL objective, with no additional cost at inference time.

For each sampled response \(y_i=(y_{i,1},\dots,y_{i,T_i})\), let \(h_{i,t}\in\mathbb{R}^d\) denote the hidden representation of the \(t\)-th response token produced by the policy model. We attach a binary classifier on top of these token representations, $p_{i,t}=\sigma\!\left(\mathrm{MLP}(h_{i,t})\right)$,
where \(p_{i,t}\) is the predicted probability that the token comes from a correct trajectory. We assign every token in trajectory \(y_i\) the same binary label \(z_i\in\{0,1\}\), where \(z_i=1\) if \(y_i\) is correct and \(z_i=0\) otherwise. The overall training objective is
\begin{equation}
    \mathcal{L} = \mathcal{L}_{\mathrm{RL}} +\alpha \mathcal{L}_{\mathrm{SEAL}},
\end{equation}
with the binary cross-entropy loss for the classification task as follows,
\begin{equation}
    \mathcal{L}_{\mathrm{SEAL}}
    =
    -\frac{1}{\sum_{i=1}^{G} T_i}
    \sum_{i=1}^{G}\sum_{t=1}^{T_i}
    \Big[
    z_i \log p_{i,t}
    +
    (1-z_i)\log(1-p_{i,t})
    \Big].
\end{equation}
This auxiliary objective encourages token representations from correct trajectories to be distinguishable from those of incorrect trajectories, thereby reducing harmful representation overlap between them. As a result, the interference from incorrect trajectories on correct ones is weakened, which mitigates the likelihood decrease of valid trajectories and helps suppress entropy eruption.

\definecolor{sealbg}{HTML}{EEF4FF}
\begin{table}[t!]
\footnotesize
    \centering
    \caption{Comparison of GRPO and GIGPO performance with and without SEAL on AlfWorld and WebShop for Qwen2.5- and Llama3.2-series models.}
    \label{tab:main-alfworld} 
    \scalebox{0.9}{
    \begin{tabular}{cc|ccccccc|cc}
    \toprule 
       \multirow{2}{*}{Model} & \multirow{2}{*}{Methods} & \multicolumn{7}{c|}{\textsc{AlfWorld}} & \multicolumn{2}{c}{\textsc{WebShop}}\\
       & &Pick& Look& Clean &Heat& Cool& Pick2& {\bf Avg.} & Score& {\bf Succ.}  \\
       \midrule
       \multirow{4}{*}{\makecell{Llama3.2\\-1B}} & GRPO  & 81.48& 87.50& 92.86 &93.33&70.83 &69.57 &82.60  & 00.00 & 00.00  \\
        \rowcolor{sealbg}\cellcolor{white} Llama3.2 &GRPO + SEAL& 96.43 & 84.62 &100.00 &83.33 &87.50 &84.00 &\textbf{89.31} &92.42&\textbf{79.69}\\
        &GIGPO & 91.43&75.00 & 100.00&92.86&57.14& 95.83& 85.38&86.09&75.00   \\
        \rowcolor{sealbg}\cellcolor{white} &GIGPO + SEAL& 83.33& 94.25& 95.65&100.00 &95.46 &90.00 & \textbf{93.12}&89.39&\textbf{78.90}\\
       \midrule
        \multirow{4}{*}{\makecell{Qwen2.5\\-1.5B}} & GRPO & 90.91& 87.50& 83.87& 91.67& 84.85& 84.21 & 87.17 & 85.97 & 73.04\\
         \rowcolor{sealbg}\cellcolor{white} Qwen2.5&  GRPO + SEAL&97.22&88.89&96.00& 100.00&93.10& 91.30& \textbf{94.42} & 89.92& \textbf{81.25}\\
        &GIGPO &93.75& 86.77& 84.62&84.62& 95.00& 77.27& 87.01& 86.11&71.88\\
        \rowcolor{sealbg}\cellcolor{white} &GIGPO + SEAL & 100.00&85.00& 91.68& 75.00& 95.83& 91.30& \textbf{89.80} & 91.14& \textbf{80.47}\\
        \midrule
       \multirow{4}{*}{\makecell{Llama3.2\\-3B}} & GRPO & 96.88& 100.00& 96.15& 76.92& 90.00& 86.36& 91.05  &87.15&63.28\\
        \rowcolor{sealbg}\cellcolor{white} Llama3.2 &GRPO + SEAL& 97.22&95.00&95.46& 86.67& 100.00& 100.00&\textbf{95.73} & 89.86 & \textbf{77.73}\\
        &GIGPO & 94.44& 100.00 & 96.67& 90.00& 76.47 & 96.29 & 92.31&89.07&71.88  \\
        \rowcolor{sealbg}\cellcolor{white} &GIGPO + SEAL& 96.30& 100.00 & 100.00& 86.67& 100.00& 86.96& \textbf{94.99} &  90.26&\textbf{78.91} \\
        \midrule
        \multirow{4}{*}{\makecell{Qwen2.5\\-7B}} & GRPO &  100.00& 71.42& 91.30& 86.67& 80.95& 55.56 &80.98&90.81& 76.95 \\
        \rowcolor{sealbg}\cellcolor{white} Qwen2.5 &GRPO + SEAL & 94.60& 85.71 & 100.00 & 86.67 & 85.74 & 50.00 & \textbf{83.79}&91.13&\textbf{80.08}\\
        &GIGPO & 96.55& 84.61& 95.65 & 94.44&80.77&85.71&89.62 &89.61 & 75.00  \\
        \rowcolor{sealbg}\cellcolor{white} &GIGPO + SEAL& 94.60&85.71 &100.00 &86.67 &85.71 & 91.67& \textbf{90.72}&90.23& \textbf{79.69} \\
        \bottomrule
    \end{tabular}}
\end{table}

\begin{figure}[t!]
    \centering
    \includegraphics[width=0.24\linewidth]{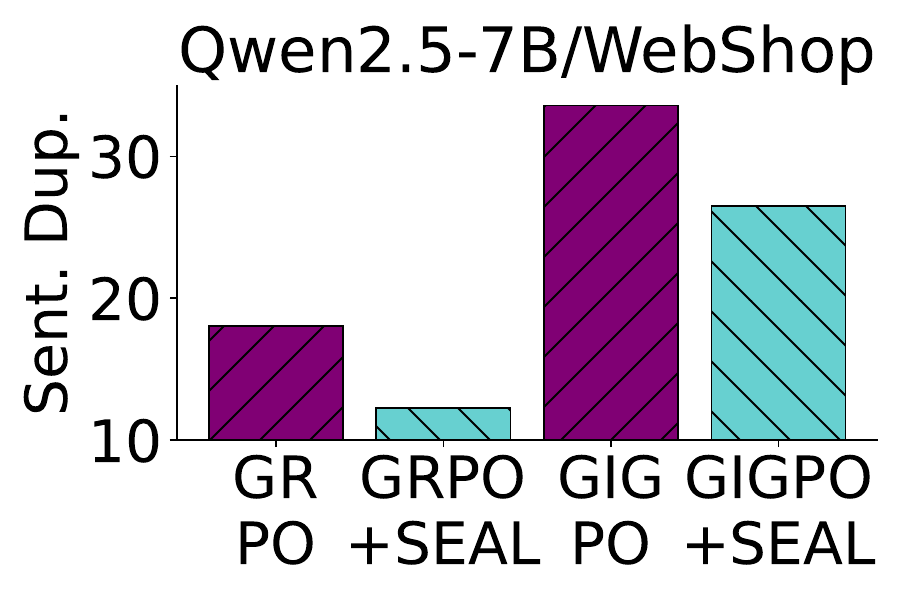}
    \includegraphics[width=0.24\linewidth]{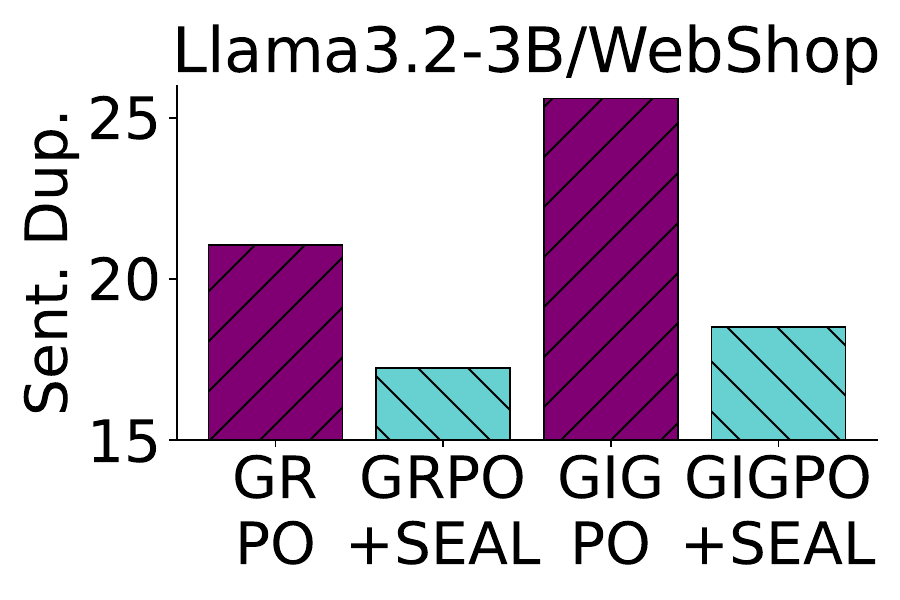}
    \includegraphics[width=0.24\linewidth]{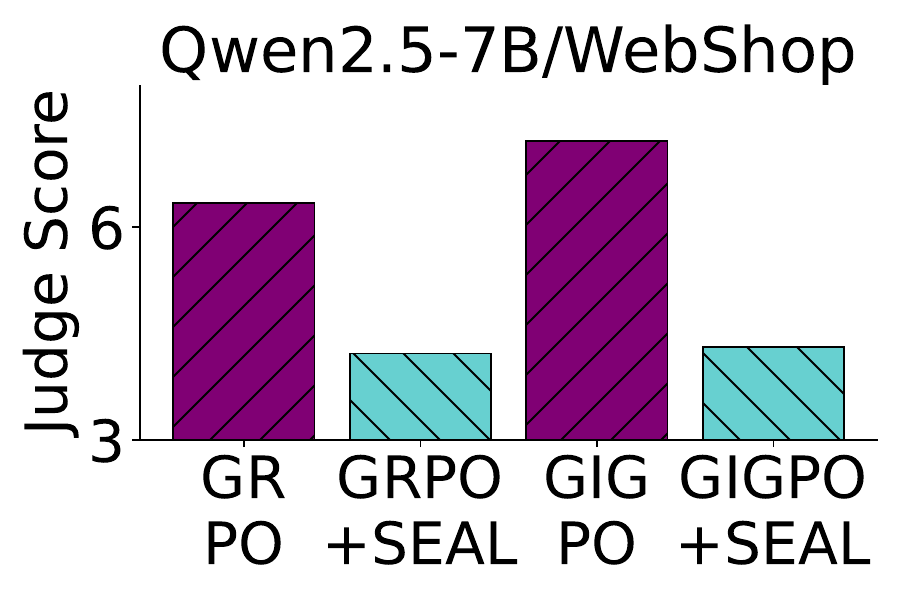}
    \includegraphics[width=0.24\linewidth]{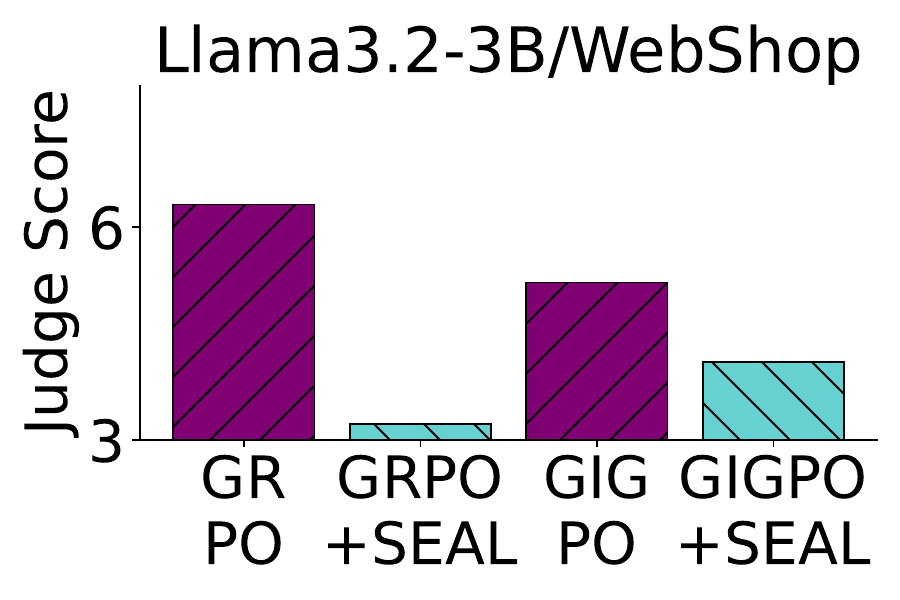}
    \vspace{-0.5em}
    \caption{Comparison of sampling quality in terms of sentence duplication ratio (Sent. Dup.\%) and GPT-4-based degeneracy score. SEAL reduces degenerate patterns in agent RL.}
    \label{fig:toxic-contrast}
\end{figure}

\subsection{Experimental Settings}
\label{sec:experimental-settings}
We first train the LLM agents on two challenging benchmarks: ALFWorld~\citep{alfworld} and
WebShop~\citep{webshop}. ALFWorld is an embodied environment designed to assess the ability of LLM agents to perform multi-step decision-making. In each episode, the agent receives a text goal and must accomplish it through multi-turn interaction with the environment. It includes 3,827 task instances across six categories of common household activities: Pick \& Place (Pick), Examine in Light (Look), Clean \& Place (Clean), Heat \& Place (Heat), Cool \& Place (Cool), and Pick Two \& Place (Pick2).
WebShop is a complex, web-based interactive environment designed to test the LLM agents in realistic online shopping scenarios. To complete the task, the agent must interact with a simulated HTML-based shopping website to search for, navigate to, and ultimately purchase a suitable item. It contains over 1.1 million products and 12k user instructions, providing a rich and diverse action space. We also conduct further experiments on search-augmented QA in Appendix~\ref{apd:sec:search}.
We use Qwen2.5-1.5B/7B-Instruct~\citep{qwen2.5} and Llama3.2-1B/3B-Instruct~\citep{LLama3.2} as our base models. Since Llama3.2 instruct models cannot sample valid trajectories throughout the RL training, we perform mid-training on Llama3.2 with a subset of OpenManus~\citep{OpenManus}. For hyperparameter settings, we mostly follow GIGPO~\citep{gigpo}. For fair comparison, all methods use the same learning rate, batch size, and training steps, and each pair of runs with and without SEAL is evaluated after the same amount of training. Implementation details are in Appendix~\ref{apd:sec:implementation-details}. 

\subsection{SEAL Improves the Performance and Suppresses Degenerate Patterns.}

\paragraph{SEAL improves the performance of agent RL across different tasks and backbones.} Table~\ref{tab:main-alfworld} shows that SEAL consistently improves agent RL performance across both tasks and model backbones. On AlfWorld, adding SEAL generally raises the average success over subtasks for both GRPO and GIGPO, with especially large gains for smaller models such as Llama3.2-1B and Qwen2.5-1.5B. On WebShop, SEAL also leads to clear improvements in both score and success rate across the Qwen and Llama series. Notably, SEAL turns a near-failed run of Llama3.2-1B on WebShop into strong performance. Overall, these results indicate that SEAL is a robust and effective enhancement for agent RL, delivering gains across different backbones, training algorithms, and task environments.

\paragraph{SEAL alleviates entropy eruption and suppresses degenerate patterns.}
As shown in Appendix~\ref{apd:sec:alleviate-eruption}, SEAL consistently mitigates the entropy-eruption phenomenon across different tasks, backbones, and RL algorithms. 
This stabilization also translates into better sample quality. In Figure~\ref{fig:toxic-contrast}, we evaluate degenerate behavior on WebShop using two complementary metrics: the sentence-duplication ratio and an LLM-as-a-judge degenerate score with GPT-4. Across both Qwen2.5-7B and Llama3.2-3B, SEAL consistently reduces sentence duplication for both GRPO and GIGPO. These results suggest that by preventing the policy from entering an excessively diffuse high-entropy regime, SEAL not only improves final task performance but also suppresses pathological generation behaviors during training.

\paragraph{Ablations.}
SEAL is not overly sensitive to the auxiliary-loss weight $\alpha$. As shown in Appendix~\ref{apd:sec:hyperparameter}, SEAL remains effective over a broad range of $\alpha$, with the best or near-best performance typically achieved at intermediate values. Importantly, across a wide range of $\alpha$, SEAL generally remains better than the corresponding baselines without SEAL, suggesting that its gains are robust rather than dependent on delicate hyperparameter tuning.

\section{Related Works}

\paragraph{Reinforcement learning for LLM agents.}
LLM agents \citep{react,toolformer,reflexion,sweagent} extend static text generation to sequential decision making, requiring models to plan, invoke tools, interact with environments, and maintain state over long horizons. Thus, reinforcement learning is a natural framework for improving agent behavior from outcome feedback. Recent work \citep{ragen,searchr1,toolrl,webagentr1} has largely built on PPO/GRPO-style objectives \citep{ppo,grpo} originally developed for single-turn reasoning and alignment, while adapting them to the agent setting with techniques such as context compression \citep{compress1,compress2,compress3,compress4,compress5}, decoupled execution-training pipelines, and memory-aware optimization \citep{memagent,memr1,mem1,mem2}. These advances have substantially improved empirical performance, but the optimization dynamics of agent RL remain poorly understood. Our work complements this line of research by studying how the policy distribution evolves during training and by identifying the phenomenon of cyclical entropy eruption.

\paragraph{Entropy dynamics in LLM RL.}
Policy entropy is a central quantity in policy-gradient RL, governing the exploration-exploitation trade-off: higher entropy supports broader search, whereas rapid entropy reduction can lead to overly deterministic behavior \citep{entropycontrol_llmrl,haarnoja2018soft}. In LLM RL, prior work has shown that entropy often collapses early in training and then remains persistently low, reducing response diversity and contributing to performance saturation \citep{entropy_mechanism_llm,entropycontrol_llmrl,invisibleleash}. To address this issue, existing methods have introduced entropy regularization \citep{entropy_mechanism_llm,reasoning_with_exploration} or alternative distribution-matching objectives \citep{flowrl,lad}. A few recent works observe entropy fluctuations in LLM RL \citep{epo_agents,quantile}, but do not explain the mechanism behind them. In contrast, we focus on the agent setting, where long-horizon trajectories, tool interactions, and validity constraints induce qualitatively different dynamics. We provide a theoretical and empirical account linking entropy eruption to representation similarity and gradient interference, and show that the resulting behavior is inherently cyclical.

\section{Conclusion}

In this work, we study the training dynamics of agent RL and identify a previously underexplored phenomenon where agent RL exhibits recurring cycles of entropy descent, eruption, and subsidence. We term it \emph{cyclical entropy eruption}. Through theoretical and empirical analysis, we show that this behavior is closely related to high representation similarity among trajectories, which induces harmful gradient interference, suppresses the likelihood of valid trajectories, and amplifies degenerate patterns such as sentence duplication and hallucination.
Building on this understanding, we propose \textsc{SEAL}, a simple training objective that encourages better separation between correct and incorrect trajectories in representation space. Despite its simplicity, SEAL effectively stabilizes training, reduces entropy eruption and degenerate behaviors, improving downstream agent performance across multiple benchmarks, model families, and RL algorithms. 

\section{Acknowledgement}
This work is supported in part by the AFOSR Young Investigator Program under award number FA9550-23-1-0184, National Science Foundation under awards IIS-2237037 and IIS-2331669, Schmidt Sciences Foundation, Open Philanthropy (now Coefficient Giving), Alfred P. Sloan Fellowship, and gifts from Google and Amazon. Shawn Im is also supported by the National Science Foundation Graduate
Research Fellowship Program under Grant No. 2137424. Any opinions, findings, and conclusions
or recommendations expressed in this material are those of the authors and do not necessarily
reflect the views of the National Science Foundation.

\newpage
\bibliography{custom}

\newpage
\appendix
\newcommand{\softmax}{\mathrm{softmax}}
\section{Proofs}
\label{apd:sec:proofs}

\begin{lemma}[Restatement of Lemma~\ref{lem:format_gated_learning}]
Suppose the trajectory reward factorizes as $r(y)=r_{\mathrm{fmt}}(y)\,r_{\mathrm{sem}}(y)$,
where \(r_{\mathrm{fmt}}(y)\in\{0,1\}\) and \(r_{\mathrm{sem}}(y)\in\{0,1\}\) denote the format and semantic correctness. Let $p_\theta^{\mathrm{fmt}}(x) =\sum_{y:\,r_{\mathrm{fmt}}(y)=1} \pi_\theta (y\mid x)$, and $p_\theta^{\mathrm{sem}}(x)=\sum_{y:\,r_{\mathrm{sem}}(y)=1}\pi_\theta(y\mid x)$ denote the total probability the policy assigns to format-valid and semantically correct trajectories, respectively. If $\mathbb E_x p_\theta^{\mathrm{sem}}(x) > \mathbb E_x p_\theta^{\mathrm{fmt}}(x)$, \emph{i.e.}, the policy is weaker on agent-specific formatting than on semantic correctness, RL updates are dominated by increasing the format validity of trajectories. 
\end{lemma}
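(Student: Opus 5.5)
We first have to turn the informal conclusion ``RL updates are dominated by increasing format validity'' into a precise claim. We compare how the expected reward $J(\theta)=\mathbb E_x\,\mathbb E_{y\sim\pi_\theta}[r_{\mathrm{fmt}}(y)r_{\mathrm{sem}}(y)]$ responds to two idealized directions. One direction improves the format factor and the other improves the semantic factor. The plan is to show that, under the hypothesis $\mathbb E_x p^{\mathrm{sem}}_\theta>\mathbb E_x p^{\mathrm{fmt}}_\theta$, the marginal gain in $J$ from the format direction exceeds the gain from the semantic direction.

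To make this tractable, we adopt a factorized idealization of the policy. Conditional on $x$, we treat format validity and semantic correctness as independent events under $\pi_\theta$. Then $\mathbb E_{y\sim\pi_\theta}[r(y)\mid x]=p^{\mathrm{fmt}}_\theta(x)\,p^{\mathrm{sem}}_\theta(x)$, and each factor is controlled by its own block of logits. Under this idealization,
\[
\frac{\partial J}{\partial p^{\mathrm{fmt}}}=\mathbb E_x\, p^{\mathrm{sem}}_\theta(x),
\qquad
\frac{\partial J}{\partial p^{\mathrm{sem}}}=\mathbb E_x\, p^{\mathrm{fmt}}_\theta(x),
\]
where the derivatives are taken with respect to a uniform additive perturbation of the respective masses. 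The hypothesis then says directly that the format partial is larger. So a unit of probability mass moved into the format-valid set buys more reward than a unit moved into the semantically correct set.

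Next we connect this comparison to the actual RL update in \eqref{eq:rl-obj}. In expectation, the centered group objective recovers the policy gradient $\nabla J=\mathbb E_x\,\mathbb E_y[(r-\bar r)\nabla\log\pi_\theta]$. Under the factorized model, this gradient splits as $\mathbb E_x[p^{\mathrm{sem}}\nabla p^{\mathrm{fmt}}+p^{\mathrm{fmt}}\nabla p^{\mathrm{sem}}]$. Equivalently, with Bernoulli sufficient statistics, the format component has coefficient $p^{\mathrm{sem}}\cdot p^{\mathrm{fmt}}(1-p^{\mathrm{fmt}})$ and the semantic component has $p^{\mathrm{fmt}}\cdot p^{\mathrm{sem}}(1-p^{\mathrm{sem}})$. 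Comparing these coefficients, we conclude that the format component dominates whenever $1-p^{\mathrm{fmt}}>1-p^{\mathrm{sem}}$. That inequality is exactly the hypothesis, taken pointwise or in expectation under a mild homogeneity assumption across prompts.

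The main obstacle is that the statement is informal and the independence/factorization assumption is not given in the lemma. The proof must therefore state explicitly which idealization is used. One option is a logit model with separate format and semantic parameters. Another is comparing first-order reward gains per unit of mass moved. Passing from pointwise to expected quantities also requires care; the product of expectations is not the expectation of products. If pointwise control fails, we would fall back on the mass-perturbation formulation, which uses only the expectations in the hypothesis.
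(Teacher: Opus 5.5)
Your argument is correct under the idealization you state, and its core is the paper's: by the product rule, a format lever is multiplied by $p^{\mathrm{sem}}_\theta$ and a semantic lever by $p^{\mathrm{fmt}}_\theta$, so the hypothesis decides which component is larger.

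The difference lies in how the levers are defined. You impose conditional independence of $r_{\mathrm{fmt}}$ and $r_{\mathrm{sem}}$, with separate parameter blocks so that it persists under the update. You then use the marginal masses themselves as levers, via $\mathbb E[r\mid x]=p^{\mathrm{fmt}}_\theta p^{\mathrm{sem}}_\theta$.

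The paper needs no such assumption. It writes the exact identities $J_x=p^{\mathrm{sem}}_\theta\,q_\theta^{\mathrm{fmt}\mid\mathrm{sem}}=p^{\mathrm{fmt}}_\theta\,q_\theta^{\mathrm{sem}\mid\mathrm{fmt}}$, where $q_\theta^{\mathrm{fmt}\mid\mathrm{sem}}=\mathbb P_\theta(r_{\mathrm{fmt}}=1\mid r_{\mathrm{sem}}=1,x)$ and $q_\theta^{\mathrm{sem}\mid\mathrm{fmt}}$ are conditional success rates. It differentiates each identity and reads off the weight $p^{\mathrm{sem}}_\theta$ on $\nabla_\theta q_\theta^{\mathrm{fmt}\mid\mathrm{sem}}$ and the weight $p^{\mathrm{fmt}}_\theta$ on $\nabla_\theta q_\theta^{\mathrm{sem}\mid\mathrm{fmt}}$. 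Under your independence assumption these conditional rates reduce to $p^{\mathrm{fmt}}_\theta$ and $p^{\mathrm{sem}}_\theta$, so your decomposition is a special case. What you call the main obstacle disappears once you take the conditional rates as the levers.

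Conversely, your proposal adds two things the paper lacks:
\begin{itemize}
\item The uniform-perturbation device makes the comparison land exactly on $\mathbb E_x p^{\mathrm{sem}}_\theta$ versus $\mathbb E_x p^{\mathrm{fmt}}_\theta$. The paper's weights are pointwise in $x$, and it passes to the expectation hypothesis without comment.
\item The logit check shows the conclusion survives the sigmoid factor, with ratio $(1-p^{\mathrm{fmt}}_\theta)/(1-p^{\mathrm{sem}}_\theta)$, although only pointwise, as you note.
\end{itemize}

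Combining the two routes gives a cleaner formalization than either alone. Perturb $q_\theta^{\mathrm{fmt}\mid\mathrm{sem}}(x)$ and $q_\theta^{\mathrm{sem}\mid\mathrm{fmt}}(x)$ uniformly in $x$, holding $p^{\mathrm{sem}}_\theta(x)$ and $p^{\mathrm{fmt}}_\theta(x)$ fixed respectively. This yields marginal gains $\mathbb E_x p^{\mathrm{sem}}_\theta$ and $\mathbb E_x p^{\mathrm{fmt}}_\theta$ for an arbitrary joint distribution, so you need neither independence nor a homogeneity assumption across prompts.
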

\begin{proof}
Let
\[
F:=\{y:\,r_{\mathrm{fmt}}(y)=1\},
\qquad
S:=\{y:\,r_{\mathrm{sem}}(y)=1\}.
\]
Since \(r_{\mathrm{fmt}},r_{\mathrm{sem}}\in\{0,1\}\), we have
\[
r(y)=r_{\mathrm{fmt}}(y)\,r_{\mathrm{sem}}(y)=\mathbf{1}\{y\in F\cap S\}.
\]
Hence, for RL objective $J_x(\theta)
:=
\mathbb{E}_{y\sim \pi_\theta(\cdot\mid x)}[r(y)]
=
\mathbb{E}_{y\sim \pi_\theta(\cdot\mid x)}[r_{\mathrm{fmt}}(y)\,r_{\mathrm{sem}}(y)]$, we have $J_x(\theta)=\sum_{y\in F\cap S}\pi_\theta(y\mid x)$. Note \(p_\theta^{\mathrm{fmt}}(x)>0\) and \(p_\theta^{\mathrm{sem}}(x)>0\), and define the conditional success rates
\[
q_\theta^{\mathrm{fmt}\mid\mathrm{sem}}(x)
:=
\mathbb{P}_\theta\!\big(r_{\mathrm{fmt}}=1 \mid r_{\mathrm{sem}}=1, x\big),
\qquad
q_\theta^{\mathrm{sem}\mid\mathrm{fmt}}(x)
:=
\mathbb{P}_\theta\!\big(r_{\mathrm{sem}}=1 \mid r_{\mathrm{fmt}}=1, x\big).
\]
By elementary probability,
\[
J_x(\theta)
=
\mathbb{P}_\theta(F\cap S\mid x)
=
\mathbb{P}_\theta(S\mid x)\,\mathbb{P}_\theta(F\mid S,x)
=
p_\theta^{\mathrm{sem}}(x)\,q_\theta^{\mathrm{fmt}\mid\mathrm{sem}}(x),
\]
and similarly
\[
J_x(\theta)
=
\mathbb{P}_\theta(F\mid x)\,\mathbb{P}_\theta(S\mid F,x)
=
p_\theta^{\mathrm{fmt}}(x)\,q_\theta^{\mathrm{sem}\mid\mathrm{fmt}}(x).
\]
Differentiating both factorizations gives
\[
\nabla_\theta J_x
=
q_\theta^{\mathrm{fmt}\mid\mathrm{sem}}\,\nabla_\theta p_\theta^{\mathrm{sem}}
+
p_\theta^{\mathrm{sem}}\,\nabla_\theta q_\theta^{\mathrm{fmt}\mid\mathrm{sem}},
\]
and
\[
\nabla_\theta J_x
=
q_\theta^{\mathrm{sem}\mid\mathrm{fmt}}\,\nabla_\theta p_\theta^{\mathrm{fmt}}
+
p_\theta^{\mathrm{fmt}}\,\nabla_\theta q_\theta^{\mathrm{sem}\mid\mathrm{fmt}}.
\]
The first identity shows that changes in format validity enter through the term $p_\theta^{\mathrm{sem}}\,\nabla_\theta q_\theta^{\mathrm{fmt}\mid\mathrm{sem}}$, so format improvement is weighted by the current semantic-correctness mass \(p_\theta^{\mathrm{sem}}\). Likewise, the second identity shows that changes in semantic correctness enter through $p_\theta^{\mathrm{fmt}}\,\nabla_\theta q_\theta^{\mathrm{sem}\mid\mathrm{fmt}}$,
so semantic improvement is weighted by the current format-valid mass \(p_\theta^{\mathrm{fmt}}\). Therefore, when $E_x p_\theta^{sem}(x) > E_x p_\theta^{fmt}(x)$, the gating of semantic improvement would be more severe than format improvement, and RL updates are thus dominated by increasing the probability of protocol-valid trajectories. 
\end{proof}

\begin{lemma}[Restatement of Lemma~\ref{lem:gradient-interference}]
Fix a prompt $x$, and let $\theta = (W,\phi)$, where $W \in \mathbb{R}^{|V|\times d}$ is the output matrix and $\phi$ denotes all remaining model parameters. For any trajectory $y_i = (y_{i,1},\dots,y_{i,|y_i|})$, define $\ell_i(\theta) := \sum_{k=1}^{|y_i|} \log \pi_\theta(y_{i,k}\mid x, y_{i,<k})$. 
For each token position $(i,k)$, let $h_{i,k} := h_\phi(x,y_{i,<k}) \in \mathbb{R}^d$ denote the output hidden states, $p_{i,k} := \pi_\theta(\,\cdot \mid x,y_{i,<k}) \in \mathbb{R}^{|V|}$ denote the output distribution, 
and $e_{i,k} \in \mathbb{R}^{|V|}$ be the one-hot label of token $y_{i,k}$.
If $g_i := \nabla_\theta \ell_i(\theta)$ is the full gradient of the trajectory log-likelihood, then for any two trajectories $y_i$ and $y_j$,
$$\langle g_i, g_j \rangle=
\sum_{k=1}^{T_i}\sum_{k'=1}^{T_j} \alpha_{i,k;j,k'} \, \langle h_{i,k}, h_{j,k'} \rangle+\sum_{k=1}^{T_i}\sum_{k'=1}^{T_j}\Bigl\langle J_{i,k}^{\top} W^{\top} q_{i,k},\,J_{j,k'}^{\top} W^{\top} q_{j,k'}\Bigr\rangle.$$
where $\alpha_{i,k;j,k'} := \langle q_{i,k}, q_{j,k'}\rangle = \langle  e_{{i,k}}-p_{i,k},\, e_{{j,k'}}-p_{j,k'} \bigr\rangle.$, and $J_{i,k} := \frac{\partial h_{i,k}}{\partial \phi} \in \mathbb{R}^{d \times \dim(\phi)}$ denote the Jacobian of the hidden state with respect to the non-output parameters.
\end{lemma}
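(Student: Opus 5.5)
The plan is to compute the gradient $g_i=\nabla_\theta\ell_i(\theta)$ explicitly in its two blocks, $g_i=(\nabla_W\ell_i,\nabla_\phi\ell_i)$. Since $\theta=(W,\phi)$ is a disjoint split of parameters, we have $\langle g_i,g_j\rangle=\langle\nabla_W\ell_i,\nabla_W\ell_j\rangle_F+\langle\nabla_\phi\ell_i,\nabla_\phi\ell_j\rangle$. The two summands should match the ``representation similarity'' term and the ``backbone geometry'' term of the lemma, respectively.

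First, for a single token, write the logits as $z_{i,k}=Wh_{i,k}$ and $p_{i,k}=\softmax(z_{i,k})$. The standard softmax cross-entropy identity gives $\nabla_{z}\log p_{i,k}(y_{i,k})=e_{i,k}-p_{i,k}=q_{i,k}$. By the chain rule, $\nabla_W\log p_{i,k}(y_{i,k})=q_{i,k}h_{i,k}^\top$, since $W$ enters only through $z=Wh$ and $h_{i,k}$ does not depend on $W$. Likewise $\nabla_\phi\log p_{i,k}(y_{i,k})=J_{i,k}^\top W^\top q_{i,k}$, since $\phi$ enters only through $h_{i,k}$. Summing over $k$ then gives $\nabla_W\ell_i=\sum_k q_{i,k}h_{i,k}^\top$ and $\nabla_\phi\ell_i=\sum_k J_{i,k}^\top W^\top q_{i,k}$.

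Second, expand the inner products bilinearly. For the $W$ block, use the Frobenius identity $\langle ab^\top,cd^\top\rangle_F=\mathrm{tr}(ba^\top cd^\top)=\langle a,c\rangle\langle b,d\rangle$. This gives
\[
\langle\nabla_W\ell_i,\nabla_W\ell_j\rangle_F=\sum_{k,k'}\langle q_{i,k},q_{j,k'}\rangle\langle h_{i,k},h_{j,k'}\rangle,
\]
which is exactly $\sum\alpha_{i,k;j,k'}\langle h_{i,k},h_{j,k'}\rangle$. For the $\phi$ block, bilinearity directly yields the double sum $\sum_{k,k'}\langle J_{i,k}^\top W^\top q_{i,k},J_{j,k'}^\top W^\top q_{j,k'}\rangle$. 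Adding the two blocks gives the claimed formula.

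There is no real obstacle; the only point needing care is bookkeeping. I will make sure the hidden state $h_{i,k}$ is treated as a function of $\phi$ alone, i.e.\ that the output matrix is not tied to the input embeddings. If it were tied, an extra cross term would appear, so I will state this untied assumption explicitly, since the lemma's parametrization $\theta=(W,\phi)$ implicitly assumes it. I will also confirm the sign and shape conventions: $W\in\mathbb R^{|V|\times d}$, so $\partial z/\partial h=W$ and the backpropagated vector is $W^\top q_{i,k}\in\mathbb R^d$.
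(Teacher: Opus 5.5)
Your proposal is correct and follows the paper's proof essentially step for step: the softmax residual $q_{i,k}=e_{i,k}-p_{i,k}$, then the chain rule giving $\nabla_W\ell_i=\sum_k q_{i,k}h_{i,k}^\top$ and $\nabla_\phi\ell_i=\sum_k J_{i,k}^\top W^\top q_{i,k}$, then the block split of the inner product, and finally the identity $\langle ab^\top,cd^\top\rangle=\langle a,c\rangle\langle b,d\rangle$ for the $W$-block. You also state explicitly that $h_{i,k}$ must depend on $\phi$ alone, i.e.\ untied output and input embeddings; the paper leaves this implicit in writing $h_{i,k}=h_\phi(x,y_{i,<k})$, so the remark is a worthwhile clarification.
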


\begin{proof}
For a fixed token position $(i,k)$, write $z_{i,k} := W h_{i,k} \in \mathbb{R}^{|V|}.$ 
Then $\pi_\theta(\,\cdot \mid x,y_{i,<k}) = \softmax(z_{i,k})$,
and therefore
$$\nabla_{z_{i,k}} \log \pi_\theta(y_{i,k}\mid x,y_{i,<k})=e_{y_{i,k}} - \softmax(z_{i,k})=e_{y_{i,k}} - p_{i,k}=q_{i,k}.$$
By the chain rule, the gradient with respect to the output matrix $W$ is 
$$\nabla_W \log \pi_\theta(y_{i,k}\mid x,y_{i,<k})=q_{i,k} h_{i,k}^{\top}.$$ 
Similarly, since $z_{i,k} = W h_{i,k}$ and $h_{i,k} = h_\phi(x,y_{i,<k})$, we have
$$\nabla_\phi \log \pi_\theta(y_{i,k}\mid x,y_{i,<k})=J_{i,k}^{\top} W^{\top} q_{i,k}.$$
Summing over all token positions of trajectory $y_i$ yields
$$
\nabla_W \ell_i(\theta)=\sum_{k=1}^{T_i} q_{i,k} h_{i,k}^{\top},
\qquad
\nabla_\phi \ell_i(\theta)=\sum_{k=1}^{T_i} J_{i,k}^{\top} W^{\top} q_{i,k}.$$
Under the decomposition $\theta=(\text{vec}(W),\phi)$, the inner product between the full gradients decomposes as
\[
\langle g_i, g_j \rangle
=
\left\langle \nabla_W \ell_i(\theta), \nabla_W \ell_j(\theta) \right\rangle
+
\left\langle \nabla_\phi \ell_i(\theta), \nabla_\phi \ell_j(\theta) \right\rangle.
\]
where we define the inner product between two matrices $\langle A, B \rangle$ as $\text{tr}(A^T B)$. Expanding the first term bilinearly gives
$$
\left\langle \nabla_W \ell_i(\theta), \nabla_W \ell_j(\theta) \right\rangle_F
=
\sum_{k=1}^{T_i}\sum_{k'=1}^{T_j}
\left\langle q_{i,k} h_{i,k}^{\top},\, q_{j,k'} h_{j,k'}^{\top} \right\rangle = \sum_{k=1}^{T_i}\sum_{k'=1}^{T_j} \langle q_{i,k}, q_{j,k'} \rangle \, \langle h_{i,k}, h_{j,k'} \rangle.
$$
where we use the identity $\langle ab^{\top}, cd^{\top} \rangle = \langle a,c\rangle \langle b,d\rangle$.
Likewise, expanding the $\phi$-block gives
$$ \left\langle \nabla_\phi \ell_i(\theta), \nabla_\phi \ell_j(\theta) \right\rangle=\sum_{k=1}^{T_i}\sum_{k'=1}^{T_j} \Bigl\langle J_{i,k}^{\top} W^{\top} q_{i,k},\,J_{j,k'}^{\top} W^{\top} q_{j,k'}\Bigr\rangle.$$

Adding the two contributions proves the claim.
\end{proof}

\paragraph{Meaning of the terms.}
The first double sum,
\[
\sum_{k=1}^{T_i}\sum_{k'=1}^{T_j}
\alpha_{i,k;j,k'} \, \langle h_{i,k}, h_{j,k'} \rangle,
\]
is exactly the $W$-block interaction from the constrained / output-layer-only analysis. Here
\[
\alpha_{i,k;j,k'} =
\bigl\langle e_{y_{i,k}}-p_{i,k},\, e_{y_{j,k'}}-p_{j,k'} \bigr\rangle
\]
measures how aligned the two token-level residual vectors are, while
$\langle h_{i,k}, h_{j,k'} \rangle$ measures alignment between the corresponding hidden states.
The second double sum,
\[
\sum_{k=1}^{T_i}\sum_{k'=1}^{T_j}
\Bigl\langle
J_{i,k}^{\top} W^{\top} q_{i,k},
\,
J_{j,k'}^{\top} W^{\top} q_{j,k'}
\Bigr\rangle,
\]
is the additional contribution coming from the trainable non-output parameters $\phi$.
It captures how token-level errors backpropagate through the network into the hidden states.
The factor $W^{\top} q_{i,k}$ is the gradient of the token log-probability with respect to the hidden state $h_{i,k}$,
and $J_{i,k}^{\top}$ maps that hidden-state gradient back to parameter space through the Jacobian of the hidden state with respect to $\phi$.

\begin{lemma}[Restatement of Lemma~\ref{lem:likelihood-decrease}]
Fix a prompt $x$ and a group of sampled responses $y_1,\dots,y_G$.
For each response $y_i$, define its log-likelihood under the current policy by $\ell_i(\theta) := \log \pi_\theta(y_i \mid x)$. Let $A_i$ denote the (normalized) advantage assigned to response $y_i$, and consider the local RL objective $L_{\mathrm{RL}}(\theta):=-\sum_{i=1}^G A_i \,\ell_i(\theta)$.
For each $i$, let $g_i := \nabla_\theta \ell_i(\theta)$ be the gradient of the log-likelihood of response $y_i$, and define $c_j := \frac{1}{G}\sum_{i=1}^G \langle g_i, g_j \rangle$. The quantity $c_j$ measures how strongly the update direction induced by response $y_j$ aligns, on average, with the gradients of all responses in the group; equivalently, it is a group-level \emph{similarity centrality} of response $y_j$. Suppose we perform a gradient descent step $\frac{\mathrm d}{\mathrm d t} \theta = -\eta \nabla_\theta L_{\mathrm{RL}}(\theta)$ with $\eta > 0$. The average log-likelihood change of the group is negative when $\sum_{j=1}^G A_j c_j <0$
\end{lemma}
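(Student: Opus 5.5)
The plan is a direct chain-rule computation along the gradient flow, followed by regrouping the resulting double sum using the definition of $c_j$.

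\emph{Step 1: gradient of the objective.} Since $L_{\mathrm{RL}}(\theta)=-\sum_{j=1}^G A_j\,\ell_j(\theta)$ and the advantages $A_j$ are treated as constants (they are computed from rewards of the fixed sampled group), we have
\[
\nabla_\theta L_{\mathrm{RL}}(\theta)=-\sum_{j=1}^G A_j\, g_j .
\]
Hence the flow reads $\frac{\mathrm d}{\mathrm d t}\theta=\eta\sum_{j=1}^G A_j g_j$.

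\emph{Step 2: rate of change of each log-likelihood.} For each $i$, the chain rule gives
\[
\frac{\mathrm d}{\mathrm d t}\ell_i(\theta)=\Bigl\langle g_i,\tfrac{\mathrm d}{\mathrm d t}\theta\Bigr\rangle=\eta\sum_{j=1}^G A_j\langle g_i,g_j\rangle .
\]
By Lemma~\ref{lem:gradient-interference}, each inner product $\langle g_i,g_j\rangle$ can be read as the gradient interference between $y_i$ and $y_j$. This identification is interpretive only; the proof does not need it.

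\emph{Step 3: average over the group and regroup.} Averaging over $i$ and exchanging the finite sums gives
\[
\frac1G\sum_{i=1}^G\frac{\mathrm d}{\mathrm d t}\ell_i
=\eta\sum_{j=1}^G A_j\Bigl(\frac1G\sum_{i=1}^G\langle g_i,g_j\rangle\Bigr)=\eta\sum_{j=1}^G A_j c_j .
\]
Because $\eta>0$, the sign of the average change equals the sign of $\sum_j A_j c_j$. The average log-likelihood therefore decreases exactly when $\sum_j A_j c_j<0$, which is the claim.

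Two bookkeeping points need care; there is no real technical obstacle. First, the factor $\eta$ is absorbed in the main-text display, which should be read up to this positive constant, or with $\eta=1$. Second, the advantages must be held fixed, i.e.\ not differentiated through $\theta$, as in the standard stop-gradient convention.

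For the interpretation I would add one remark: because $\sum_j A_j=0$, we can write $\sum_j A_j c_j=\sum_j A_j(c_j-\bar c)$, where $\bar c$ is the mean of the $c_j$. This is a covariance between advantage and centrality. It becomes negative when negatively advantaged trajectories have above-average centrality $c_j$, which matches the discussion following the lemma.
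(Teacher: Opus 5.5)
Your proposal is correct and follows essentially the same route as the paper's proof: write the flow as $\dot\theta=\eta\sum_j A_j g_j$, apply the chain rule to each $\ell_i$, average over $i$, and exchange the finite sums to obtain $\eta\sum_j A_j c_j$. Your bookkeeping note about $\eta$ is also accurate, since the paper's own proof carries $\eta$ through to $\eta\sum_j A_j c_j$ while the main-text display omits it.
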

\begin{proof}
Denote $K_{ij} := \langle g_i, g_j \rangle$. The matrix $K$ is the trajectory-level interaction kernel.  
From the objective, $$\nabla_\theta L_{\mathrm{RL}}(\theta)=-\sum_{j=1}^G A_j \nabla_\theta \ell_j(\theta)=-\sum_{j=1}^G A_j g_j.$$
Hence under gradient flow, $$\frac{\mathrm d }{\mathrm d t}{\theta}=-\eta \nabla_\theta L_{\mathrm{RL}}=\eta \sum_{j=1}^G A_j g_j.$$
Therefore, for each trajectory $i$,
$$\frac{\mathrm d }{\mathrm d t} \log\pi(y_i|x)= \langle g_i, \frac{\mathrm d \theta}{\mathrm d t} \rangle= \eta \sum_{j=1}^G A_j \langle g_i, g_j \rangle.$$
Averaging over $i$ gives
$$ \frac{1}{G}\sum_{i=1}^G \frac{\mathrm d }{\mathrm d t}\log \pi(y_i|x) =\frac{1}{G}\sum_{i=1}^G \eta \sum_{j=1}^G A_j \langle g_i, g_j \rangle= \eta \sum_{j=1}^G A_j \left(\frac{1}{G}\sum_{i=1}^G \langle g_i, g_j \rangle\right)= \eta \sum_{j=1}^G A_j c_j.$$
This proves the claim.
\end{proof}

\paragraph{Empirical evidence consistent with the mechanism.}
To further probe the causal role of gradient interference, we compare the relationship between trajectory advantage and gradient interference immediately before and after an entropy-eruption event in Figure~\ref{fig:grad-correlation}. Before the eruption, trajectories with negative advantage exhibit systematically larger gradient interference than those with positive advantage, as shown by the clear negative trend in the fitted line and the binned statistics. This indicates that poorly rewarded trajectories are more strongly coupled to the rest of the sampled group and therefore exert a larger suppressive effect on other trajectories, including valid or correct ones. This observation is consistent with our theory: when negatively advantaged trajectories are more central in the interaction kernel, they can drive a decrease in the average likelihood of the group and trigger entropy eruption. In contrast, after the eruption, this pattern is substantially weakened and even partially reversed: the dependence of gradient interference on advantage becomes much flatter, and negatively advantaged trajectories no longer dominate the interaction structure. This change is consistent with the role of entropy eruption in increasing trajectory diversity and reducing representation similarity, which in turn alleviates harmful interference. Taken together, the before-and-after comparison provides temporal evidence supporting our causal claim: strong interference from negatively advantaged trajectories appears before entropy eruption and is mitigated after the eruption occurs.

\begin{corollary}[Negatively-Advantaged trajectories reduce all trajectory likelihoods] 
Let $\mathcal{Y^+}$ be the set of indices corresponding to valid and correct trajectories and $\mathcal{Y^-}$ be the remaining trajectory indices. Suppose that the advantages are centered, $A_j > 0$ for all $j \in \mathcal{Y}^+$ and $A_j < 0$ for all $j \in \mathcal{Y}^-$. Given that all the gradients are similar, $\langle g_i, g_j\rangle > \lVert g_i \rVert \lVert g_j \rVert \delta$ for $1 \geq \delta > 0$ for any $i, j \in [G]$ and the negatively-advantaged trajectories dominate the gradient, $\delta \min_{j \in \mathcal{Y}^-}  \lVert g_j \rVert > \max_{j \in \mathcal{Y}^+} \lVert g_j \rVert$, $\sum_{j=1}^G A_j \langle g_i, g_j \rangle < 0$ for all $i \in [G]$ and the likelihoods of the all trajectories decrease. 
\end{corollary}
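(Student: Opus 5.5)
The plan is to use the per-trajectory identity derived in the proof of Lemma~\ref{lem:likelihood-decrease}: under the gradient flow $\frac{\mathrm d}{\mathrm dt}\theta=\eta\sum_j A_j g_j$, we have $\frac{\mathrm d}{\mathrm dt}\ell_i=\eta\sum_{j=1}^G A_j\langle g_i,g_j\rangle$ for every $i$. Since $\eta>0$, it is enough to show $\sum_j A_j\langle g_i,g_j\rangle<0$ for each fixed $i\in[G]$. The strict condition $\langle g_i,g_j\rangle>\delta\lVert g_i\rVert\lVert g_j\rVert\ge 0$ forces $g_i\neq 0$ for every $i$ (take $j=i$). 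Because every $A_j$ is nonzero with the stated signs and the advantages are centered, both $\mathcal Y^+$ and $\mathcal Y^-$ are nonempty. I will set
\[
M:=\sum_{j\in\mathcal Y^+}A_j=\sum_{j\in\mathcal Y^-}|A_j|>0.
\]

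Fix $i$ and split the sum into its positive and negative parts. For $j\in\mathcal Y^+$, Cauchy--Schwarz gives $A_j\langle g_i,g_j\rangle\le A_j\lVert g_i\rVert\lVert g_j\rVert\le A_j\lVert g_i\rVert\max_{k\in\mathcal Y^+}\lVert g_k\rVert$. Summing over $\mathcal Y^+$, the positive part is at most $M\lVert g_i\rVert\max_{\mathcal Y^+}\lVert g_k\rVert$.

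For $j\in\mathcal Y^-$, the similarity hypothesis gives $A_j\langle g_i,g_j\rangle=-|A_j|\langle g_i,g_j\rangle<-|A_j|\,\delta\lVert g_i\rVert\lVert g_j\rVert\le-|A_j|\,\delta\lVert g_i\rVert\min_{k\in\mathcal Y^-}\lVert g_k\rVert$. Summing over $\mathcal Y^-$, the negative part is strictly less than $-M\delta\lVert g_i\rVert\min_{\mathcal Y^-}\lVert g_k\rVert$.

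Adding the two bounds gives
\[
\sum_j A_j\langle g_i,g_j\rangle< M\lVert g_i\rVert\Bigl(\max_{k\in\mathcal Y^+}\lVert g_k\rVert-\delta\min_{k\in\mathcal Y^-}\lVert g_k\rVert\Bigr)<0 .
\]
The last inequality uses the domination assumption together with $M>0$ and $\lVert g_i\rVert>0$. Hence $\frac{\mathrm d}{\mathrm dt}\ell_i<0$ for every $i$, including every $i\in\mathcal Y^+$, so all trajectory likelihoods decrease.

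The only delicate step is the bookkeeping that makes the inequality strict and nondegenerate. Centering is what lets the positive and negative advantage masses be the same $M$, so the two bounds can be compared directly. The strict similarity hypothesis is what rules out $g_i=0$. Everything else is Cauchy--Schwarz and the given hypotheses.
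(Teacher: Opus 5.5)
Your proof is correct and follows essentially the same route as the paper: split $\sum_j A_j\langle g_i,g_j\rangle$ over $\mathcal{Y}^+$ and $\mathcal{Y}^-$, bound the positive part by Cauchy--Schwarz and the negative part by the similarity hypothesis, use centering to factor out the common mass $M$, and conclude from the domination condition. You apply these bounds term by term rather than after taking $\max$/$\min$ of the inner products, which is only a reordering. Your explicit checks that $M>0$ and $g_i\neq 0$, which the final strict inequality needs, are a small improvement over the paper, which leaves them implicit.
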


\begin{proof}
    From the proof of Lemma 3.2, we know that 
    $$\frac{\mathrm d }{\mathrm d t} \log\pi(y_i|x) = \eta \sum_{j=1}^G A_j \langle g_i, g_j \rangle.$$
    We will decompose this sum into a $\mathcal{Y}^+$ component and a $\mathcal{Y}^-$ component. 
    $$\frac{\mathrm d }{\mathrm d t} \log\pi(y_i|x) = \eta \sum_{j\in \mathcal{Y}^+} A_j \langle g_i, g_j \rangle + \eta \sum_{j\in \mathcal{Y}^-} A_j \langle g_i, g_j \rangle.$$
    We can upper bound the first term using Holder's inequality
    $$\frac{\mathrm d }{\mathrm d t} \log\pi(y_i|x) \leq \eta (\sum_{j\in \mathcal{Y}^+} A_j) \max_{j \in \mathcal{Y}^+}\langle g_i, g_j \rangle + \eta \sum_{j\in \mathcal{Y}^-} A_j \langle g_i, g_j \rangle.$$
    and since all the $A_j$ in second term are negative and all the inner products are positive, we can upper bound the second term as
    $$\eta (\sum_{j\in \mathcal{Y}^-} A_j) \min_{j \in \mathcal{Y}^-}\langle g_i, g_j \rangle$$
    Then, as the advantages are centered, we know that $\sum_{j\in \mathcal{Y}^+} A_j = -\sum_{j\in \mathcal{Y}^-} A_j$ and we have
    $$\frac{\mathrm d }{\mathrm d t} \log\pi(y_i|x) \leq \eta (\sum_{j\in \mathcal{Y}^+} A_j) (\max_{j \in \mathcal{Y}^+}\langle g_i, g_j \rangle - \min_{j \in \mathcal{Y}^-}\langle g_i, g_j \rangle).$$
    By Cauchy-Schwarz and by the assumptions, we have that
    $$\frac{\mathrm d }{\mathrm d t} \log\pi(y_i|x) \leq \eta (\sum_{j\in \mathcal{Y}^+} A_j) (\lVert g_i \rVert \max_{j \in \mathcal{Y}^+}\lVert g_j \rVert - \delta \lVert g_i \rVert \min_{j \in \mathcal{Y}^-}\lVert g_j \rVert).$$
    Then, as $\delta \min_{j \in \mathcal{Y}^-}  \lVert g_j \rVert > \max_{j \in \mathcal{Y}^+} \lVert g_j \rVert$, 
    $$\frac{\mathrm d }{\mathrm d t} \log\pi(y_i|x) \leq \eta (\sum_{j\in \mathcal{Y}^+} A_j) \lVert g_i \rVert( \max_{j \in \mathcal{Y}^+}\lVert g_j \rVert - \delta \min_{j \in \mathcal{Y}^-}\lVert g_j \rVert) < 0,$$
    completing the proof.
\end{proof}

\newcommand{\Cov}{\mathrm{Cov}}

\begin{lemma}[Restatement of Lemma~\ref{lem:entropy-increase}] 
Fix a prompt $x$, and let $\pi_t(y):=\pi_{\theta_t}(y\mid x)$ be the distribution at the training step $t$. If $\Cov(\log\pi_t(y),\dot\ell_t(y))<0$, then $\frac{\mathrm d}{\mathrm dt}H_t>0$. 
\end{lemma}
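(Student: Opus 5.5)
The plan is to differentiate the entropy $H_t=-\sum_y \pi_t(y)\log\pi_t(y)$ directly in time and rewrite the derivative as a covariance under $\pi_t$. Throughout, $\dot\ell_t(y):=\frac{\mathrm d}{\mathrm dt}\log\pi_t(y)$ denotes the instantaneous change of the sequence log-likelihood. We treat the response space as finite, or countable with enough summability to exchange sum and derivative, and assume $\pi_t(y)>0$ wherever it is differentiated. The identity $\dot\pi_t(y)=\pi_t(y)\dot\ell_t(y)$ is the only ingredient needed.

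\textbf{Step 1 (differentiate).} By the product rule,
\[
\frac{\mathrm d}{\mathrm dt}H_t=-\sum_y \dot\pi_t(y)\log\pi_t(y)-\sum_y \dot\pi_t(y).
\]

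\textbf{Step 2 (kill the constant term).} Because $\sum_y\pi_t(y)=1$ for all $t$, we have $\sum_y\dot\pi_t(y)=0$. In the other notation this reads $\mathbb E_{\pi_t}[\dot\ell_t(y)]=0$, which is the score-function identity.

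\textbf{Step 3 (rewrite as a covariance).} Substituting $\dot\pi_t=\pi_t\dot\ell_t$ gives
\[
\frac{\mathrm d}{\mathrm dt}H_t=-\mathbb E_{\pi_t}\bigl[\log\pi_t(y)\,\dot\ell_t(y)\bigr].
\]
Since $\mathbb E_{\pi_t}[\dot\ell_t]=0$ by Step 2, we may subtract $\mathbb E_{\pi_t}[\log\pi_t]\,\mathbb E_{\pi_t}[\dot\ell_t]=0$ from the expectation for free. This yields
\[
\frac{\mathrm d}{\mathrm dt}H_t=-\Cov_{y\sim\pi_t}\bigl(\log\pi_t(y),\dot\ell_t(y)\bigr).
\]
The hypothesis $\Cov<0$ then immediately gives $\frac{\mathrm d}{\mathrm dt}H_t>0$.

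The argument is essentially algebraic. The only real obstacle is making precise the sense in which the covariance is taken: it must be with respect to $y\sim\pi_t$ over the whole response space, not over the sampled group, since otherwise the zero-mean property of $\dot\ell_t$ fails. In an infinite sequence space one also needs dominated convergence to differentiate under the sum. I would state these conventions explicitly at the start. I would also remark that, for a discrete-time gradient step, the same conclusion holds to first order in the step size.
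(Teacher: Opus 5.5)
Your proposal is correct and follows the paper's proof essentially step for step: differentiate $H_t$, drop the term $\sum_y \dot\pi_t(y)=0$, substitute $\dot\pi_t=\pi_t\dot\ell_t$, and identify $-\mathbb{E}_{\pi_t}[\log\pi_t\,\dot\ell_t]$ with $-\mathrm{Cov}_{\pi_t}(\log\pi_t,\dot\ell_t)$. The paper leaves that last identification implicit, and you justify it explicitly via $\mathbb{E}_{\pi_t}[\dot\ell_t]=0$, noting that the covariance must be taken under $\pi_t$ over the full response space rather than over a sampled group.
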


\begin{proof}
    Differentiate the entropy:
    \begin{equation}
    \frac{d}{dt}H_t=-\sum_y \frac{d\pi_t(y)}{dt}\bigl(1+\log\pi_t(y)\bigr).    
    \end{equation}
    Since $\sum_y \pi_t(y)=1$, we have $\sum_y \frac{d\pi_t(y)}{dt}=0$. Define $\dot\ell_t(y) = \frac{\mathrm d}{\mathrm d t} \log \pi(y|x)$, we have
    \begin{equation}
    \frac{d}{dt}H_t=-\sum_y \frac{d\pi_t(y)}{dt}\log\pi_t(y)=-\sum_y \pi_t(y)\dot\ell_t(y)\log\pi_t(y) = -\Cov(\log\pi_t(y),\dot\ell_t(y)).    
    \end{equation}
    Then, as we have assumed $\Cov(\log\pi_t(y),\dot\ell_t(y))<0$, $\frac{\mathrm d}{\mathrm dt}H_t>0$.
\end{proof}

\section{Further Analyses}

\subsection{Further Evidence Supporting the Training Dynamic of Entropy Eruption in Agent RL}
\label{apd:sec:entropy-eruption}
Beyond the examples we show in Figure~\ref{fig:entropy-eruption}, we exhibit more experimental instances with Llama3.2-1B and Qwen2.5-1.5B on AlfWorld, WebShop and Search tasks. For AlfWorld and WebShop, we adopt mid-training on Llama3.2-1B similarly to our main experiments with corpora from Openmanus-RL~\citep{OpenManus}. As shown in Figure~\ref{fig:entropy-eruption-apd}, Llama3.2-1B and Qwen2.5-1.5B consistently exhibit the same entropy-eruption pattern across all three agent tasks, indicating that this phenomenon is not specific to a single model or environment. In AlfWorld, the entropy remains relatively controlled in the early stage and then rises noticeably in the middle-to-late stage, while the reward has already reached a plateau. In WebShop, the effect is even more pronounced: entropy undergoes a sharp late-stage escalation, accompanied by reward stagnation or even collapse, especially for Llama3.2-1B where the reward drops abruptly after the entropy spike. In Search, models also show a clear entropy burst whenever reward enters a plateau. Overall, these results provide further evidence that entropy eruption is a robust training dynamic in agent RL and is often associated with reward instability or degradation once training enters the later phase.

\subsection{Mid-training shrinks the length of phase 1.}
\label{apd:sec:mid-training}

Figure~\ref{fig:entropy-eruption-apd} shows that models initiated with mid-training have a substantially shorter, or even nearly nonexistent, Phase 1. 
This is expected because mid-training already instills the basic formatting and validity constraints required for function calling. Meanwhile, it also improves the policy’s overall trajectory quality. As a result, the policy enters RL from a substantially better initialization, with a higher valid-action ratio and a stronger average reward. Consequently, early RL no longer needs to spend many updates correcting format errors, and much of the coarse entropy-reduction process that typically characterizes Phase 1 has already occurred before RL begins. 

\subsection{Representations of Agent Trajectories are Significantly More Concentrated than Those of Alignment Trajectories.}
\label{apd:sec:representation-similarity}
From Phase 1, the policy model would quickly learn the basic rules of output format and function calling, thus making the sampling focus on valid trajectories. Here, we show that the representation of valid agent trajectories shares a very similar representation in the LLM semantic space. Figure~\ref{fig:geometry} visualizes the last-layer hidden representations of Qwen2.5-7B-Instruct, averaged over response tokens, for trajectories sampled from agent tasks (ALFWorld, WebShop, Search) and from a non-agent alignment task (UltraFeedback~\citep{ultrafeedback}). Agent trajectories form significantly tighter clusters than alignment trajectories.  Next, we show that the high representation similarity among agent trajectories leads to entropy eruption.

\begin{figure}[t!]
    \centering
    \includegraphics[width=0.32\linewidth]{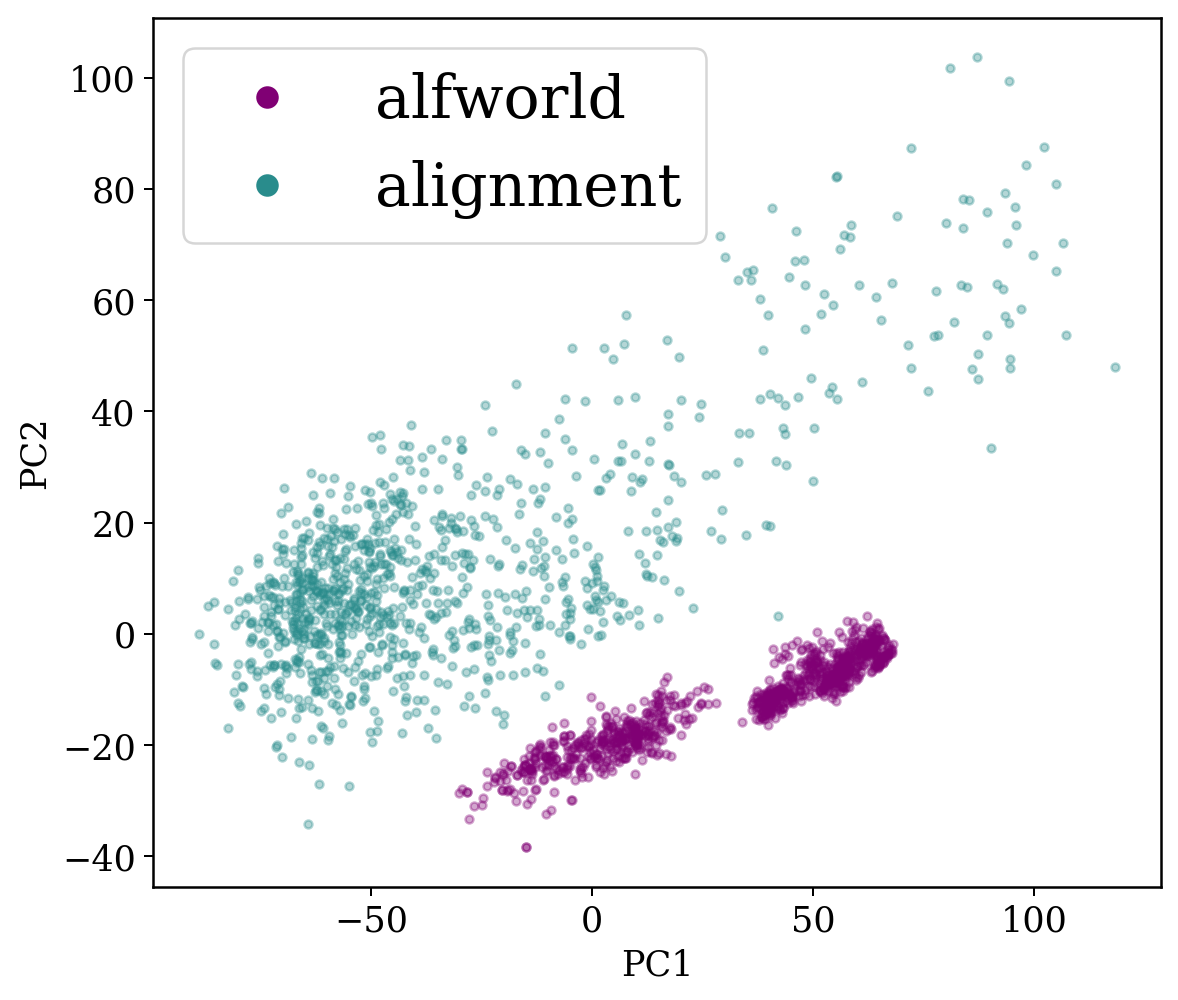}
    \includegraphics[width=0.32\linewidth]{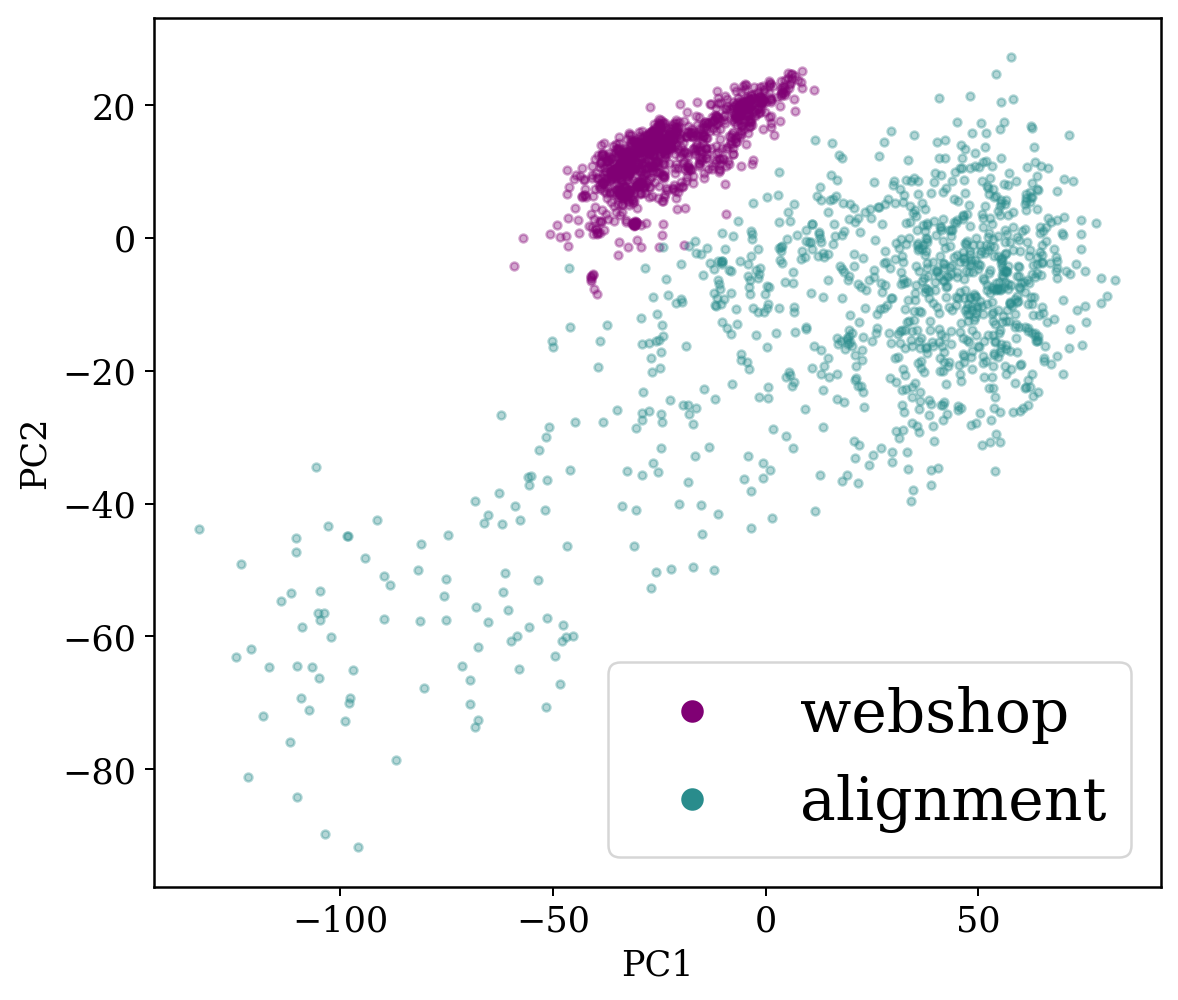}
    \includegraphics[width=0.32\linewidth]{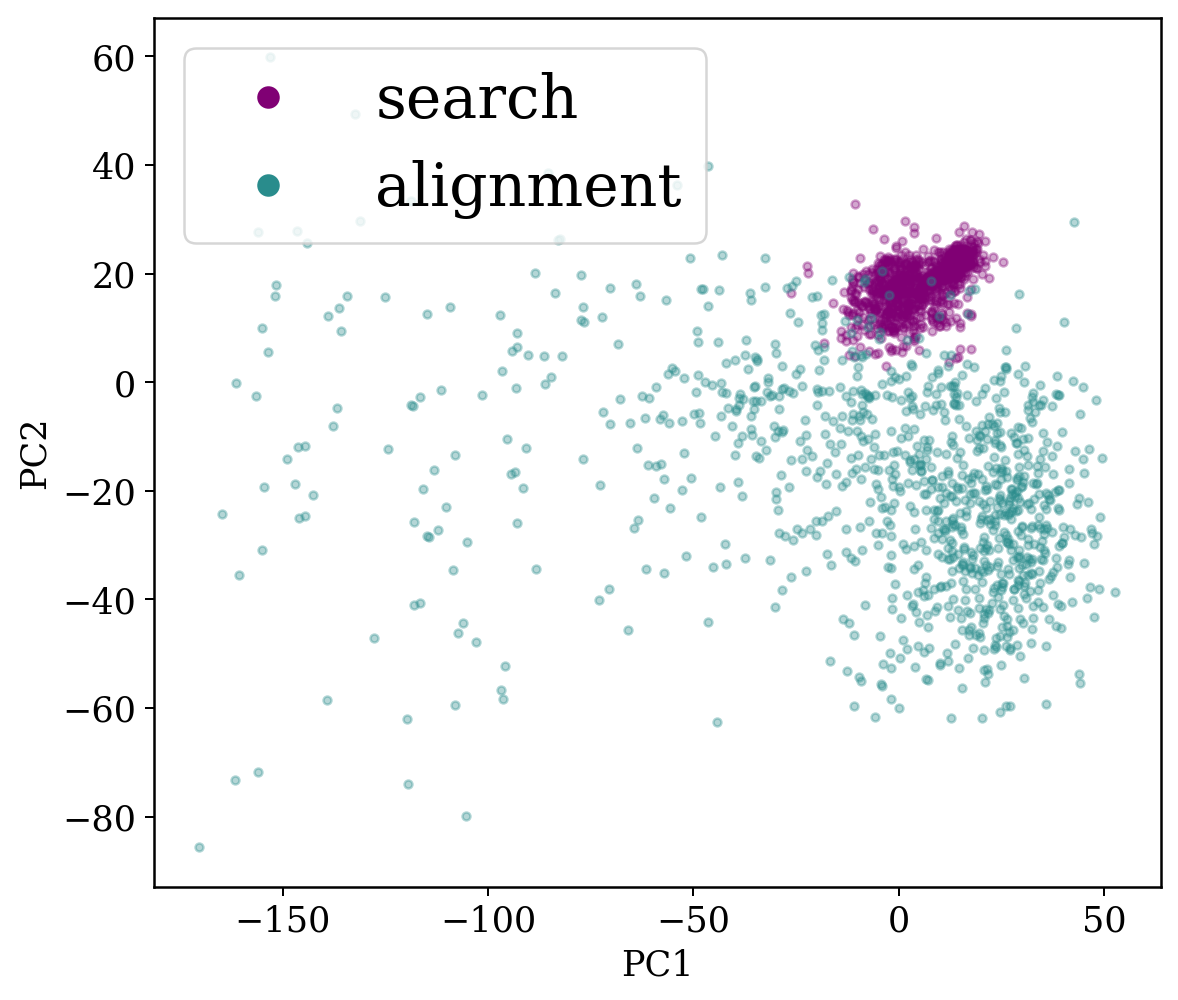}
    \vspace{-1em}
    \caption{Representation distribution of trajectories in \textcolor{violet}{agent} and \textcolor{teal}{alignment} tasks. The trajectory representation in the agent task is significantly more concentrated.}
    \label{fig:geometry-apd}
\end{figure}

\subsection{Further Evidence Supporting the Likelihood Decrease of Valid Trajectories}
\label{apd:sec:likelihood-decrease}
To complement our main results, we further track the training-time log-probability of \emph{correct}, \emph{wrong}, and \emph{valid} responses for Llama3.2-1B and Qwen2.5-1.5B on AlfWorld, WebShop, and Search. As shown in Figure~\ref{fig:likelihood-apd}, the likelihood assigned to valid trajectories surprisingly does not increase along with training, instead, across all three agent tasks and both model families, it either gradually declines after an early improvement phase or undergoes an abrupt late-stage collapse. In AlfWorld, both models show a gentle fluctuation in the log-probability of valid responses, with a clear decreasing overall tendency. In Search, the effect is even more pronounced: after a relatively stable phase, the log-probability of valid trajectories drops sharply in the later stage and only partially recovers. In WebShop, the phenomenon is strongest, especially for Llama3.2-1B, where the likelihood of valid responses collapses dramatically during late training.

\subsection{Further Evidence on Occurrence of Degenerate Patterns during Entropy Eruption.}
\label{apd:sec:toxic-pattern}

As the entropy erupts, the distribution is flattened, which would make the degenerate patterns more likely to be sampled in the training. We found that the entropy eruption is strongly synchronized with the emergence of some degenerate patterns such as sentence duplication. To empirically show this, we collect the 1024 sampled trajectories at different checkpoints during training, and analyze their sentence duplication ratio. Specifically, we first split the responses into sentence chunks, and measure the fraction of sentences that appear multiple times in the same response. As shown by the purple lines in Figure~\ref{fig:toxic-apd}, the sentence duplication ratio surges during the entropy eruption, and has the same fluctuation patterns as the entropy dynamics. 
Meanwhile, as the orange line shown in Figure~\ref{fig:toxic-apd}, other kinds of error are also likely to appear during the entropy eruption, such as the format error.

\begin{figure}[t!]
    \centering
    \includegraphics[width=0.32\linewidth]{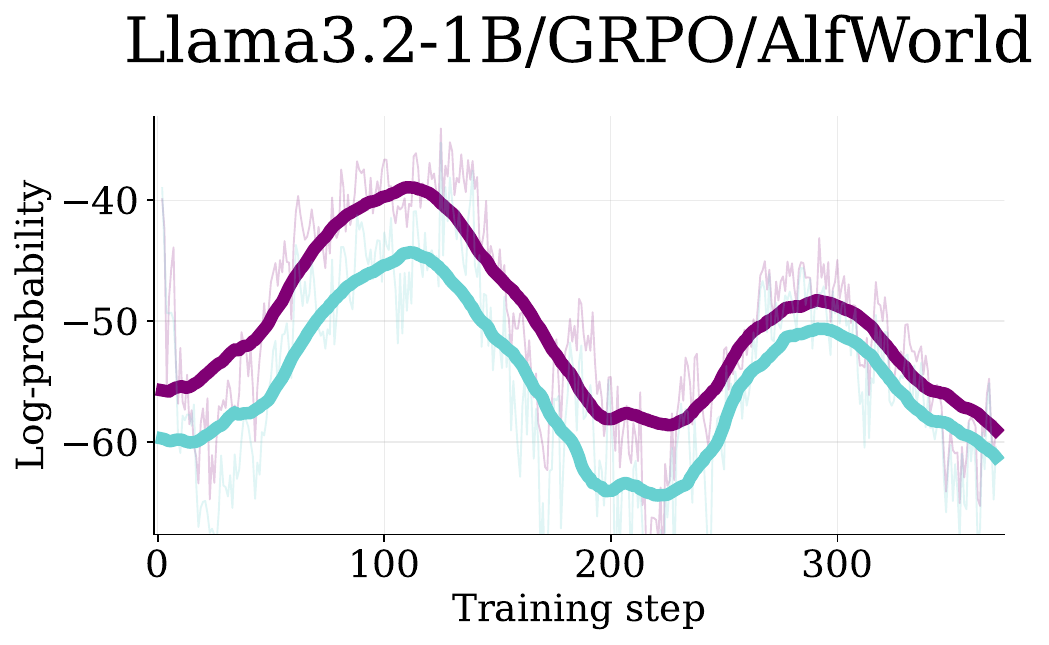}
    \includegraphics[width=0.32\linewidth]{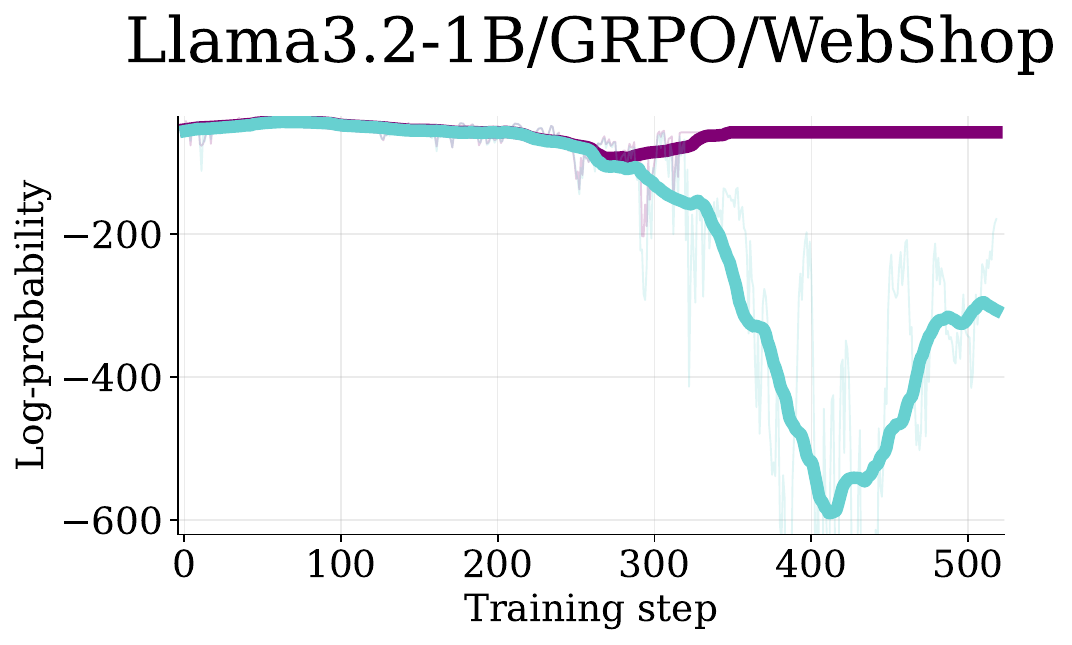}
    \includegraphics[width=0.32\linewidth]{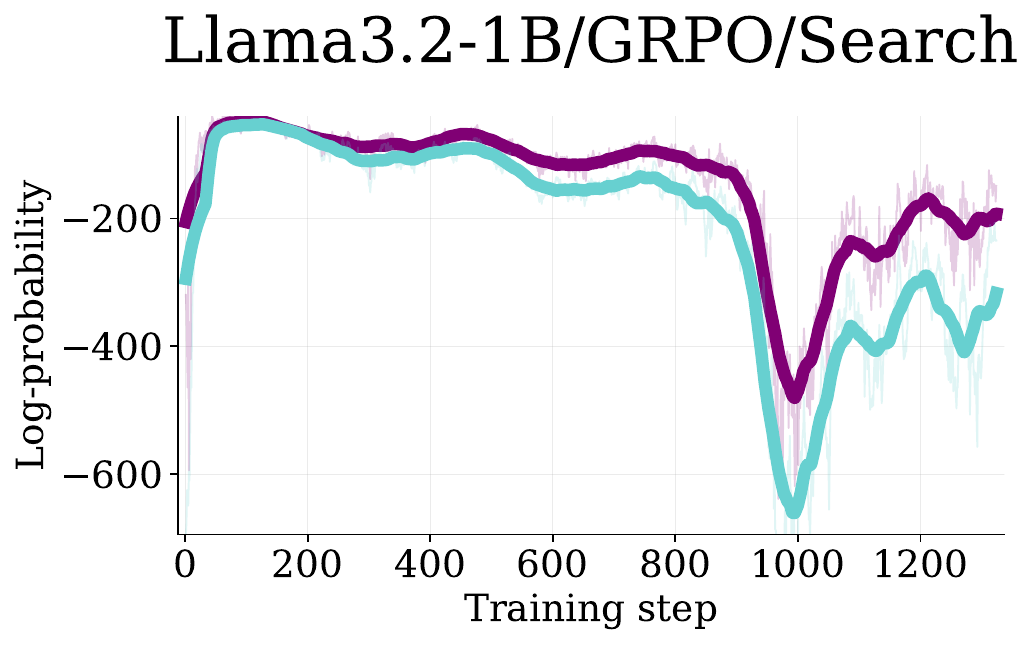}
    \includegraphics[width=0.32\linewidth]{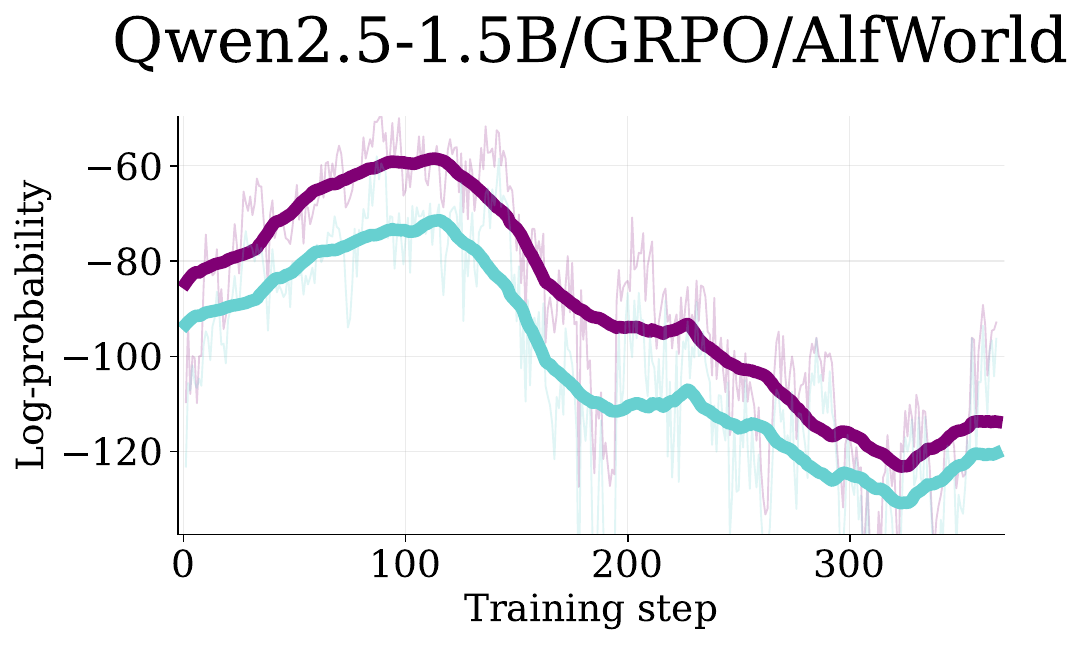}
    \includegraphics[width=0.32\linewidth]{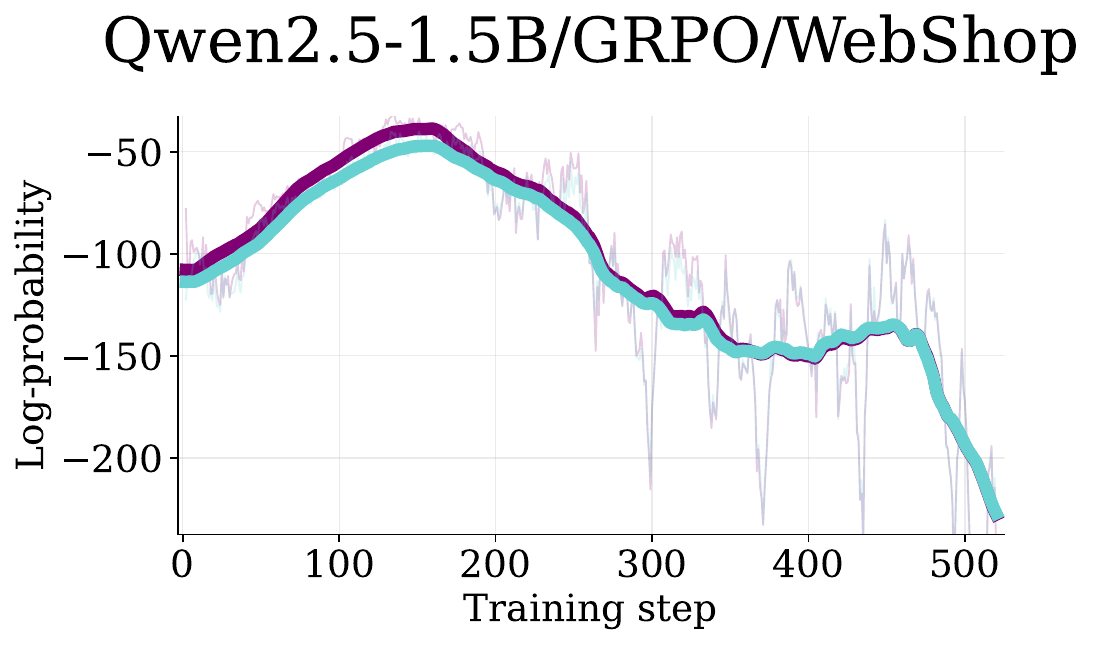}
    \includegraphics[width=0.32\linewidth]{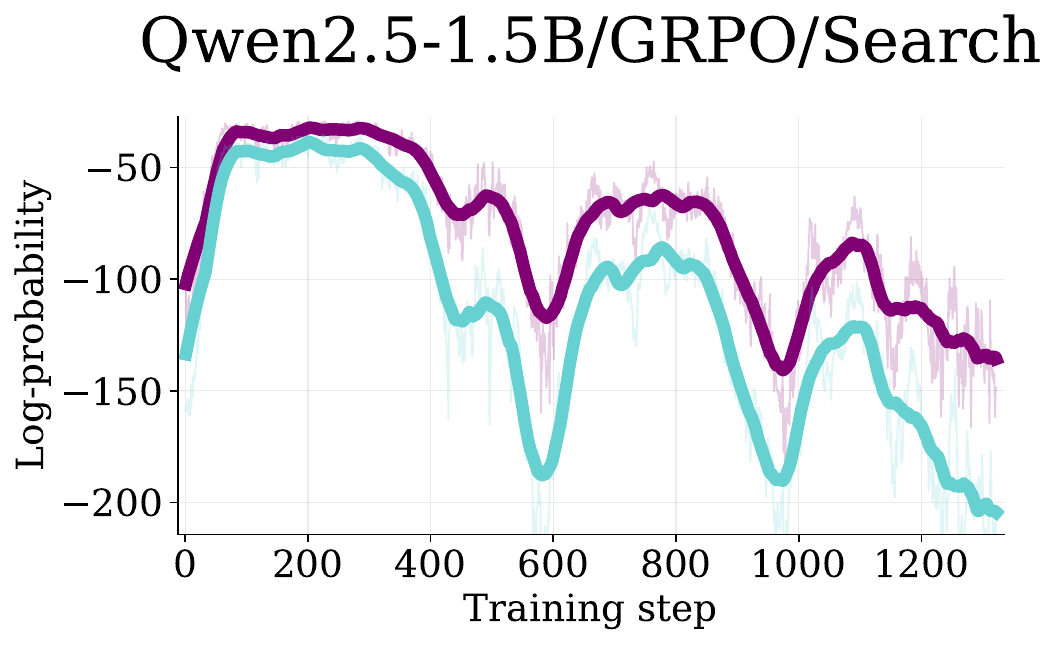}
    
    \caption{Further evidence supporting the likelihood decrease of valid trajectories during agent RL}
    \label{fig:likelihood-apd}
\end{figure}
\begin{figure}[t!]
    \centering
\includegraphics[width=0.38\textwidth]{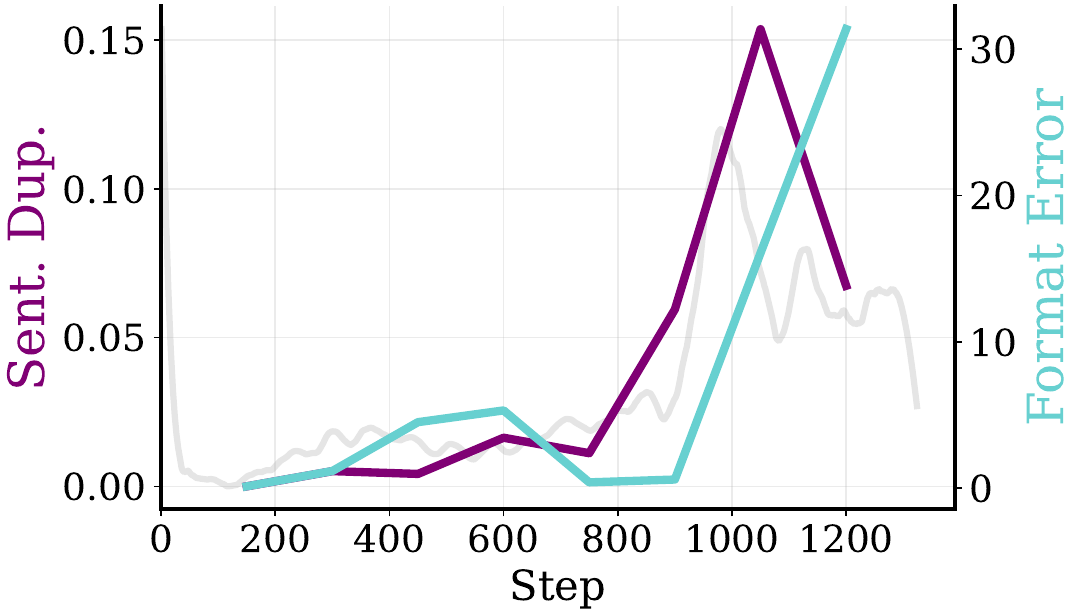}
    \includegraphics[width=0.38\textwidth]{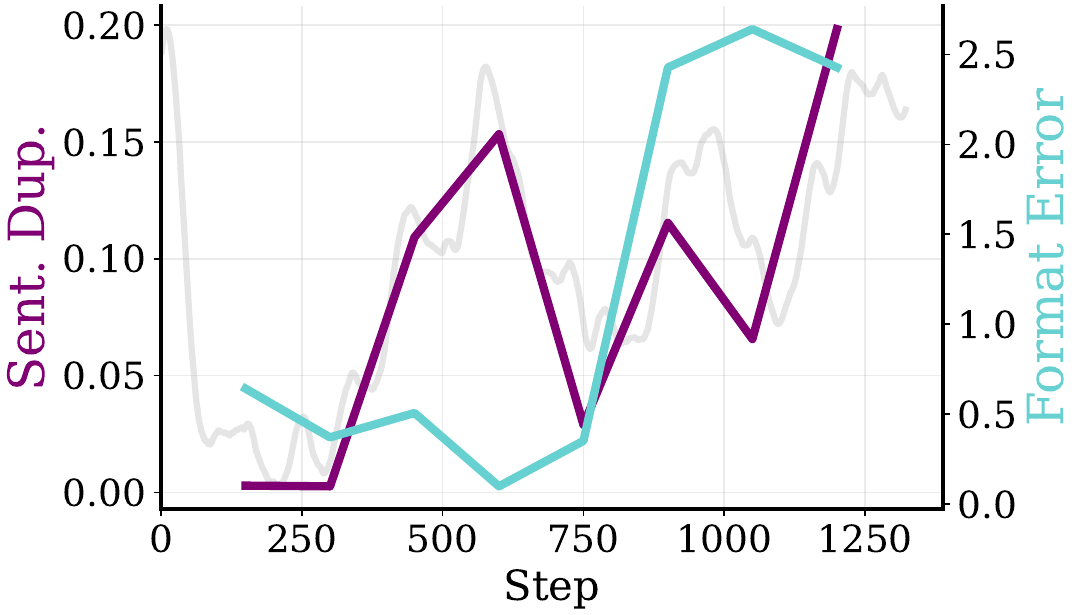}

    \caption{\textcolor[HTML]{800074}{Sentence Duplication Ratio} rises with entropy eruption, which also increases the occurrence of \textcolor[HTML]{67d0d0}{format error}. The gray line in the background is the entropy curve.}
    \label{fig:toxic-apd}
\end{figure}

\subsection{Further Evidence on Lasting Effect of Entropy Eruption.}
\label{apd:sec:lasting-effect}
As shown in Figure~\ref{fig:toxic-apd}, although training temporarily becomes normal after entropy eruption subsides, the damage introduced during the eruption can remain. When entropy erupts, the policy distribution flattens and valid trajectories receive lower probability, so sampled trajectories are more likely to contain degenerate patterns even among responses that still obtain the correct answer. 
If these degenerate trajectories receive positive reward, the policy can inadvertently learn and reinforce such garbage patterns. Consequently, the harmful effects of entropy eruption may persist after the eruption subsides and can even be amplified during entropy subsidence, as shown by the format-error curves (light blue) in Figure~\ref{fig:toxic-apd}. We provide additional analysis and statistics in Appendix~\ref{apd:sec:lasting-effect}.
In more severe cases, entropy eruption can lead to semantic collapse and completely break RL training. As shown in Figure~\ref{fig:entropy-eruption-apd}, for Llama3.2-1B on WebShop, entropy eruption is followed by a collapse of the reward curve to zero.

\section{Further Validation For SEAL}

\begin{figure}[t]
    \centering
    \includegraphics[width=0.32\linewidth]{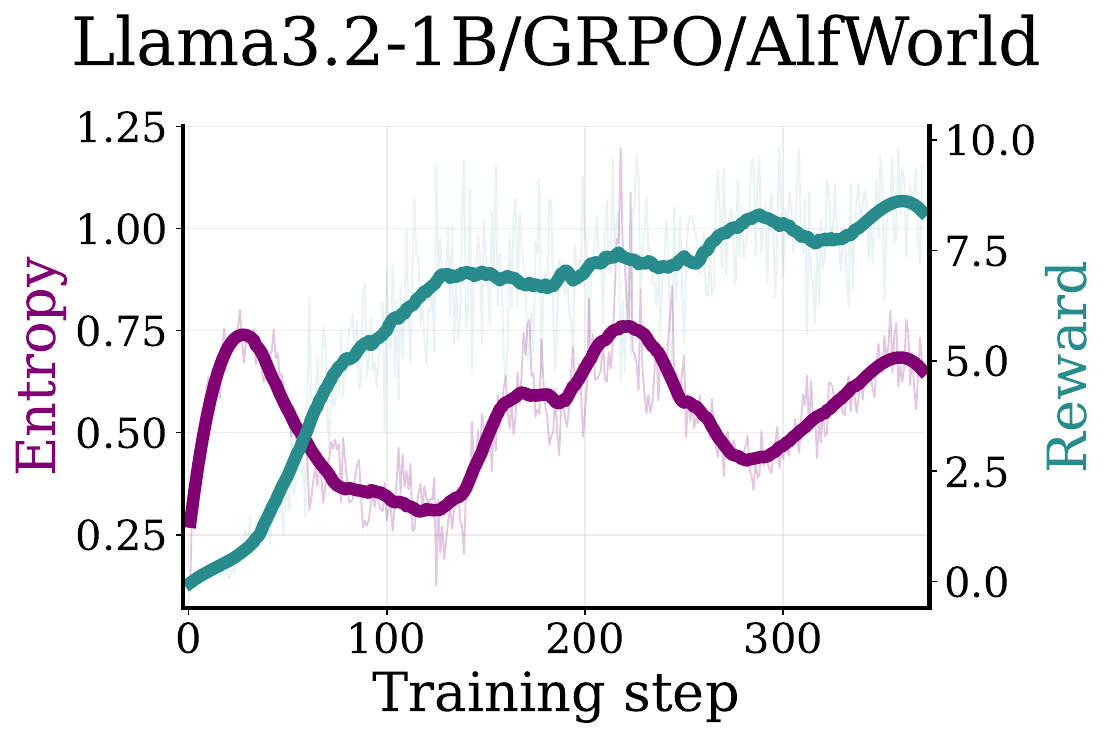}
    \includegraphics[width=0.32\linewidth]{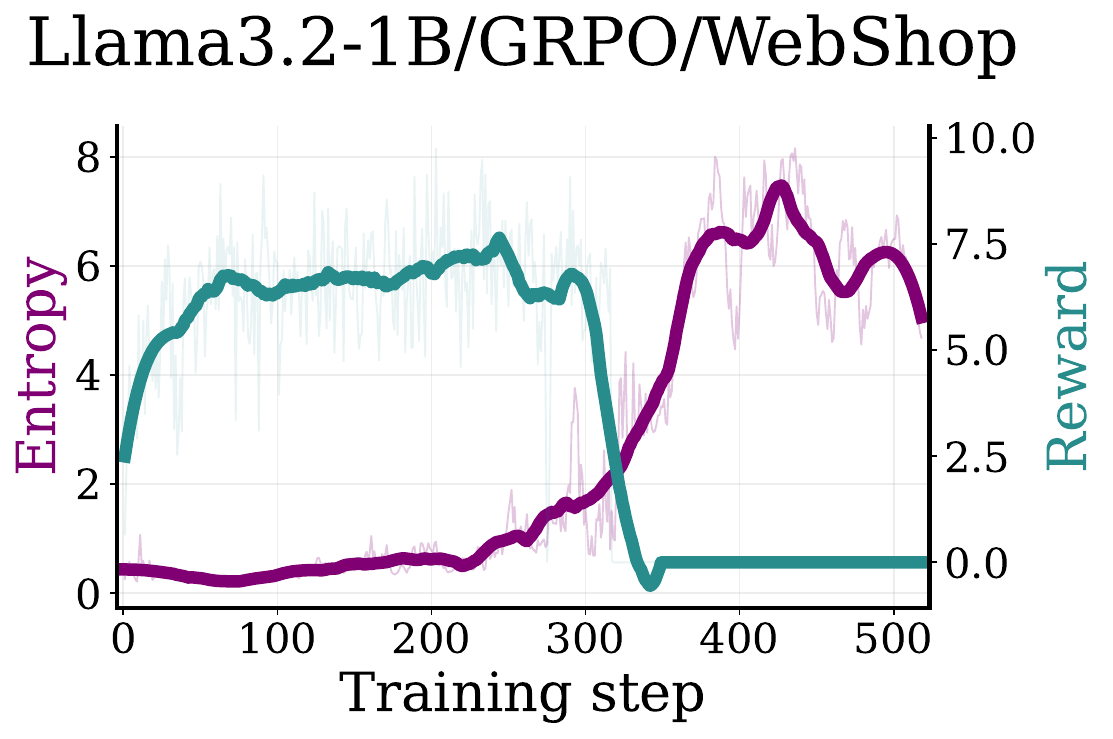}
    \includegraphics[width=0.32\linewidth]{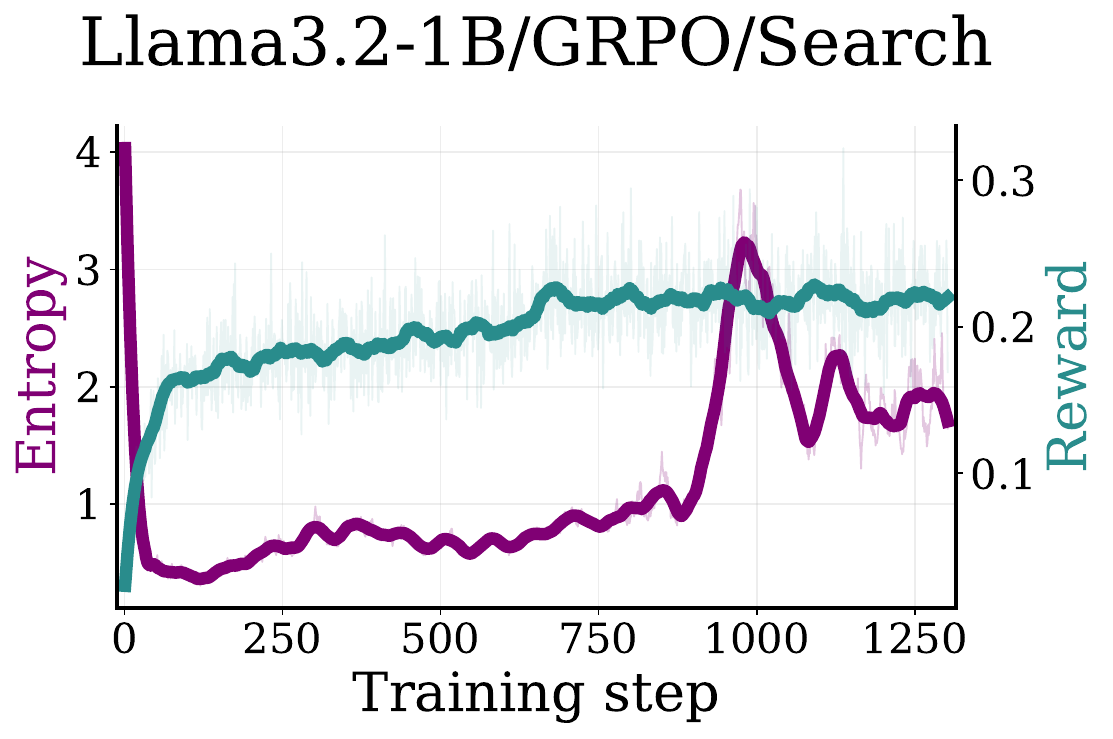}
    \includegraphics[width=0.32\linewidth]{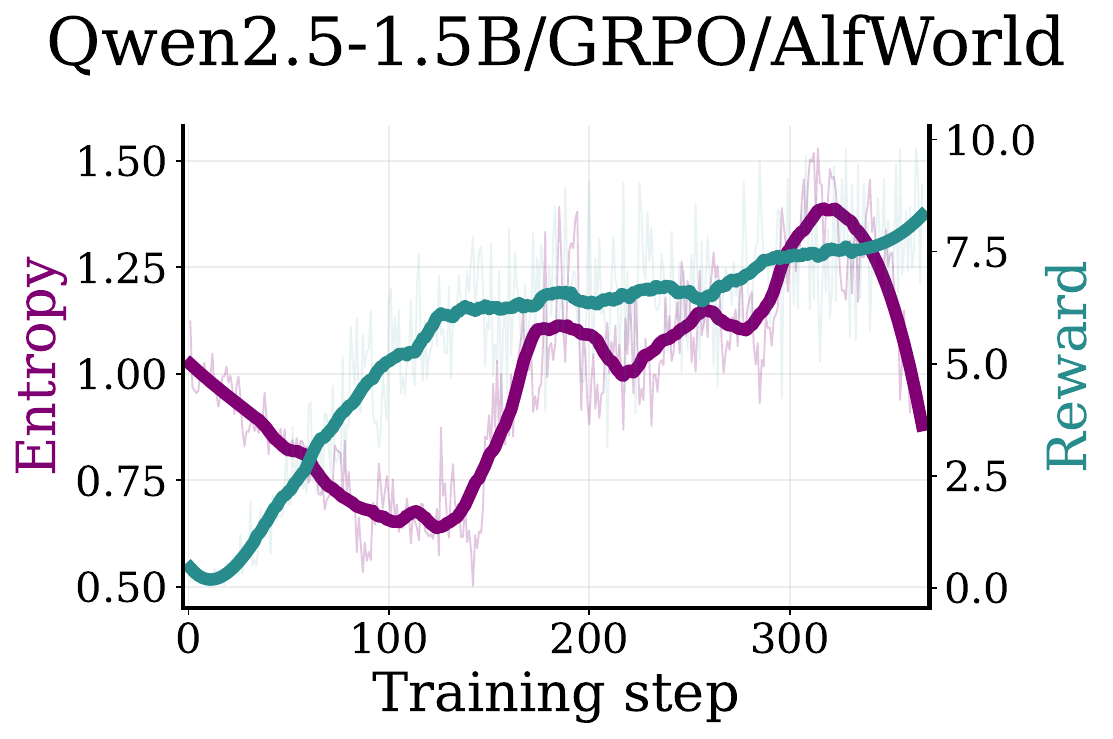}
    \includegraphics[width=0.32\linewidth]{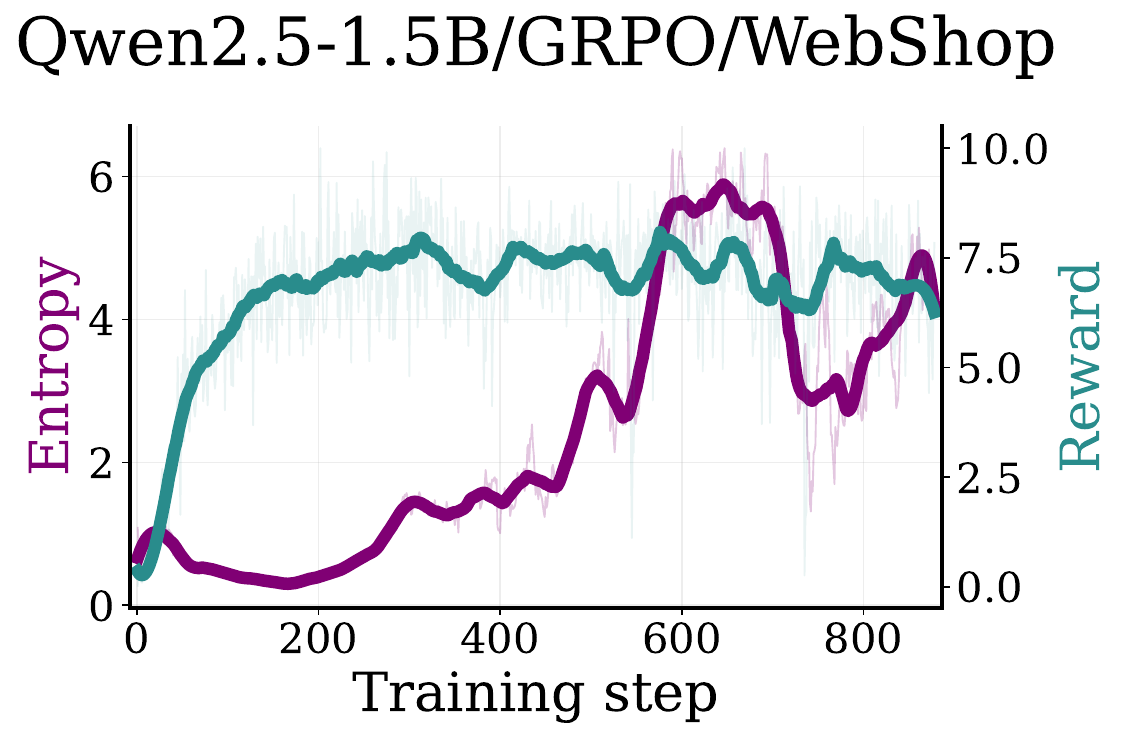}
    \includegraphics[width=0.32\linewidth]{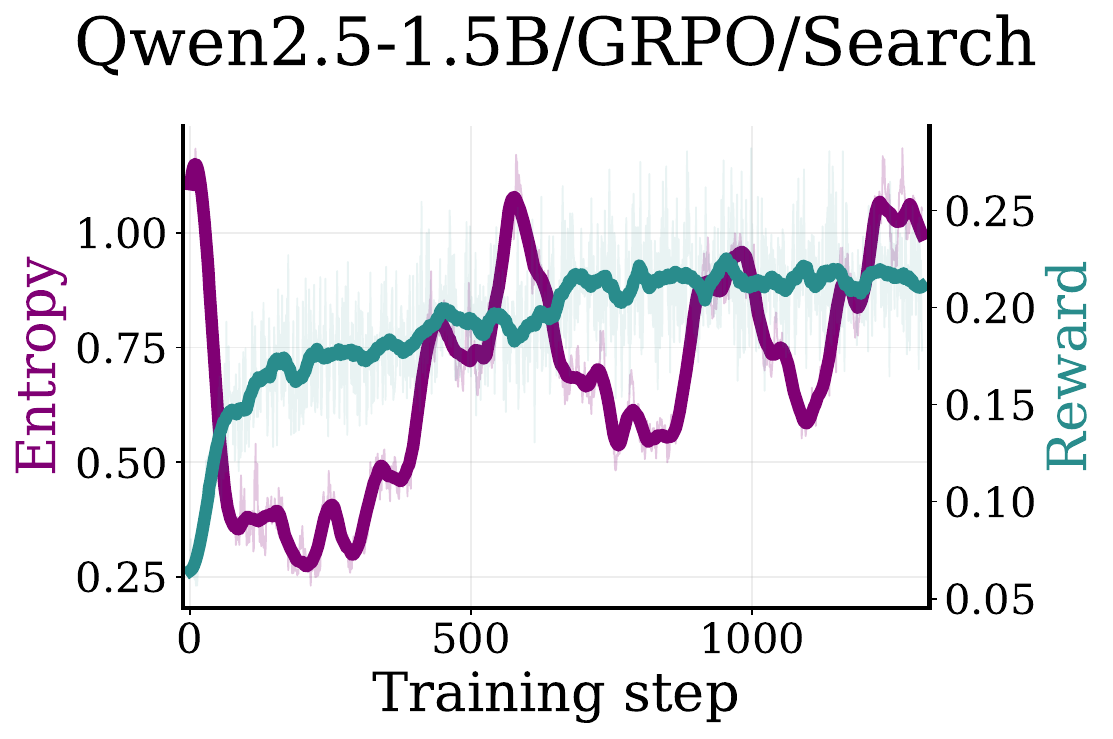}
    \includegraphics[width=0.32\linewidth]{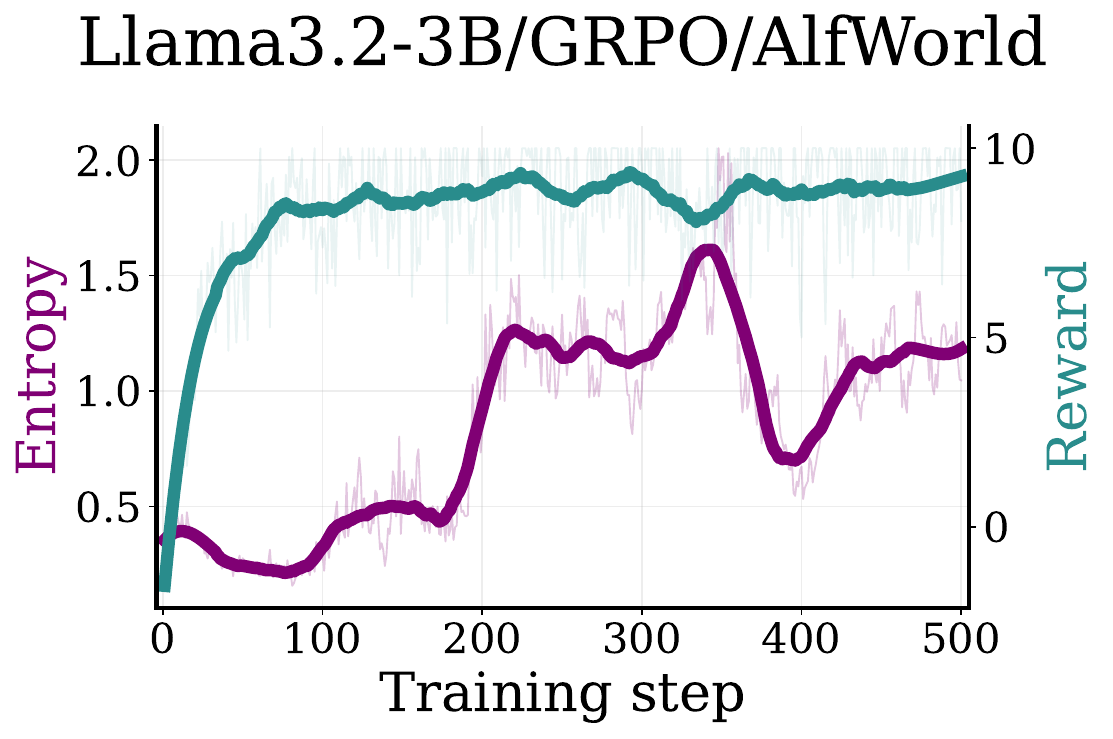}
    \includegraphics[width=0.32\linewidth]{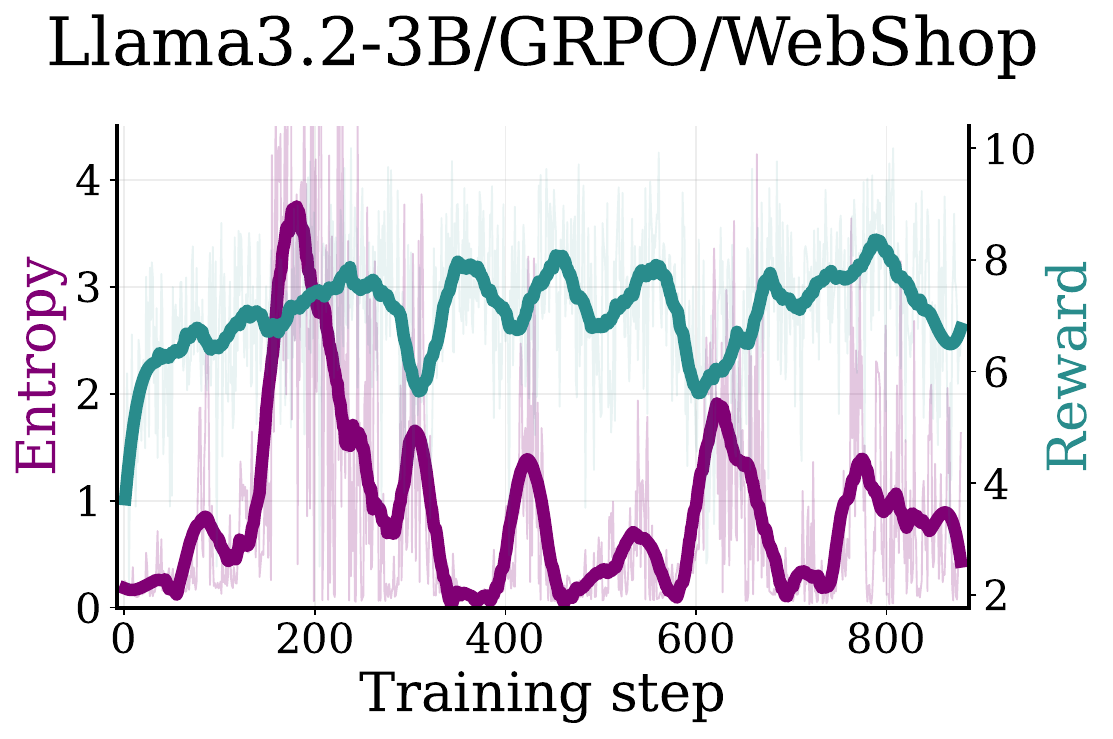}
    \includegraphics[width=0.32\linewidth]{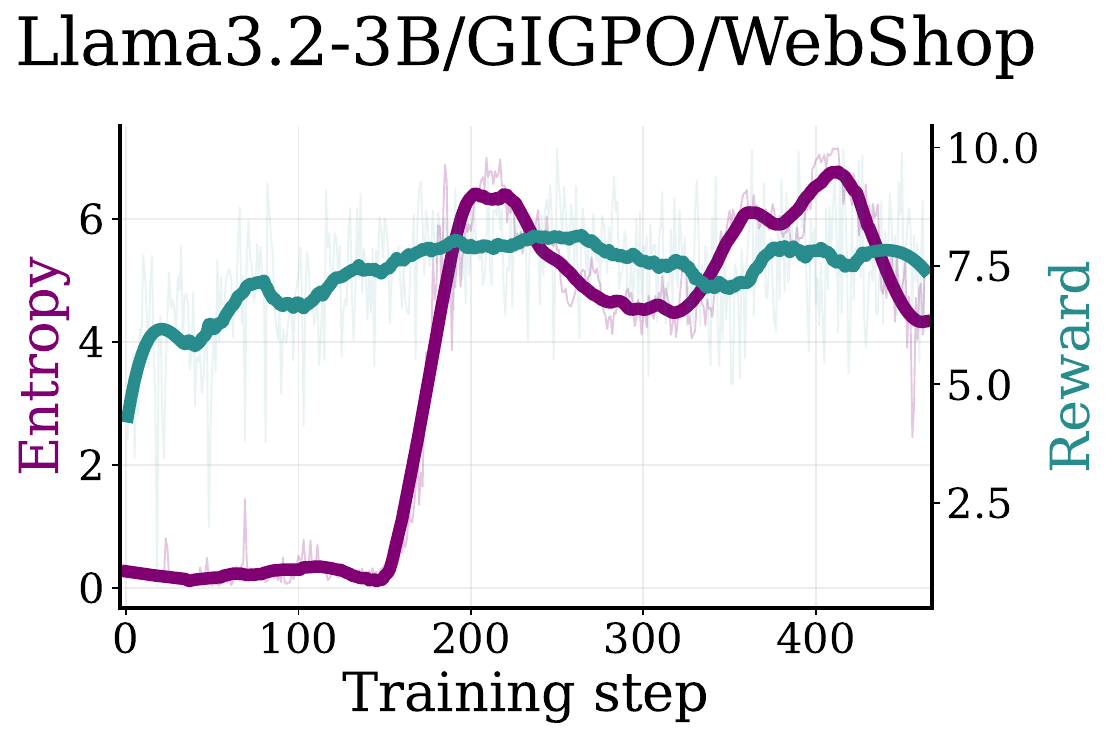}
    \includegraphics[width=0.32\linewidth]{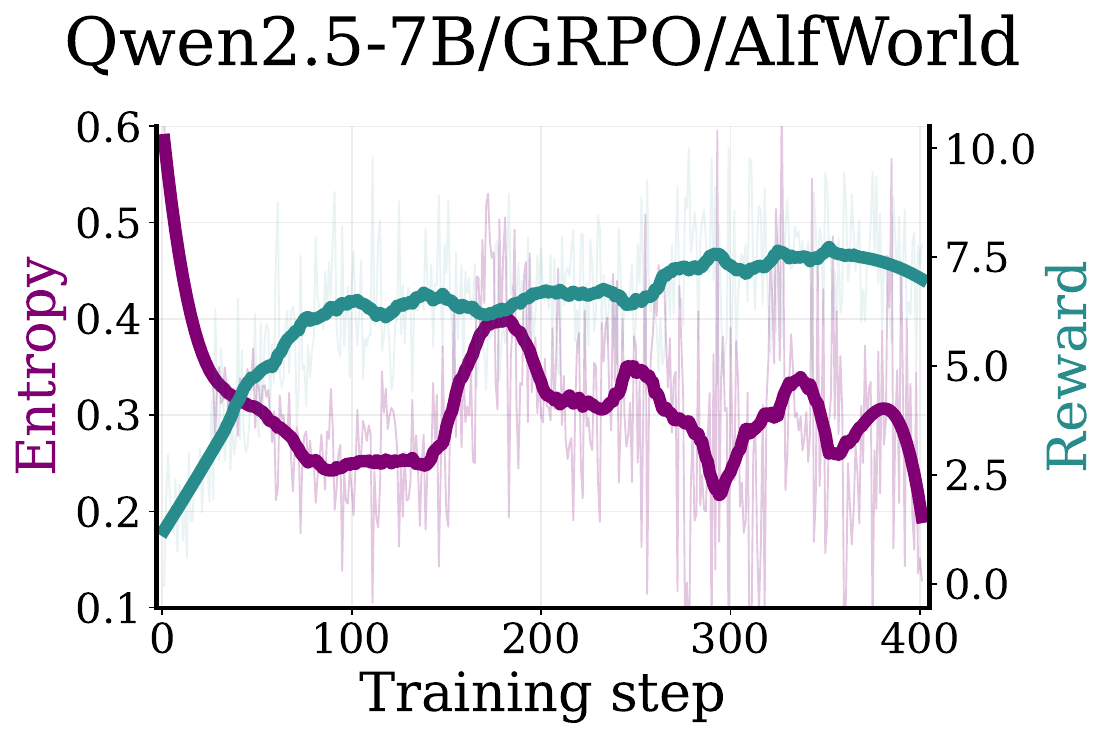}
    \includegraphics[width=0.32\linewidth]{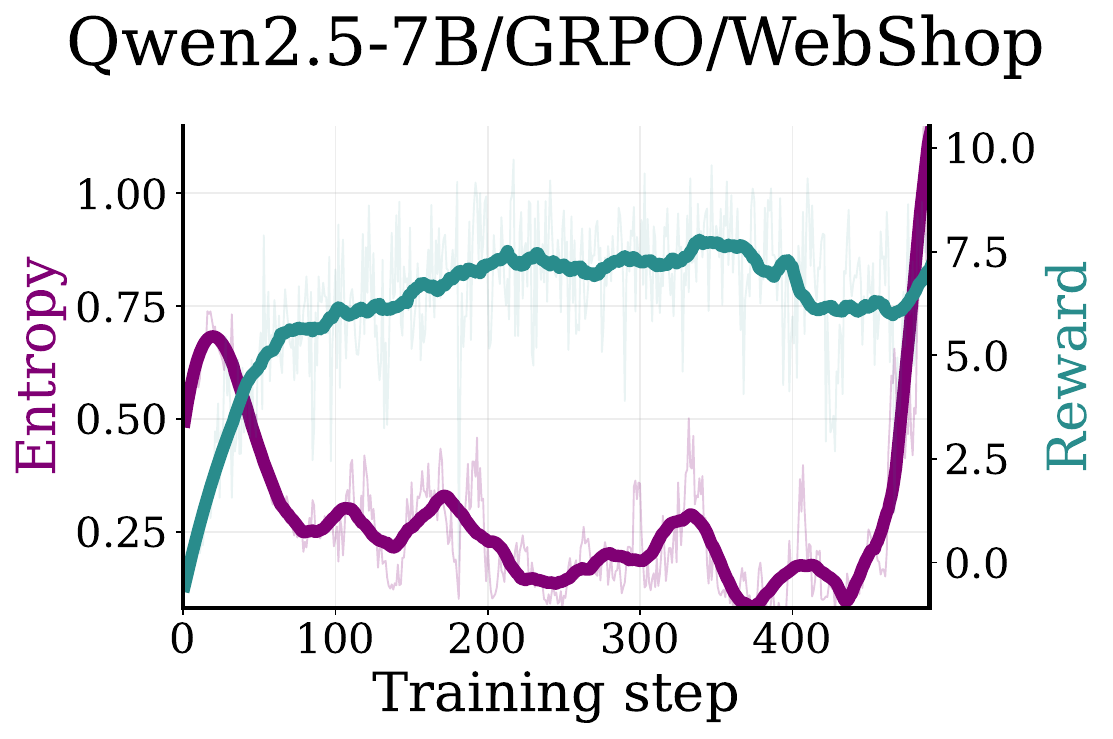}
    \includegraphics[width=0.32\linewidth]{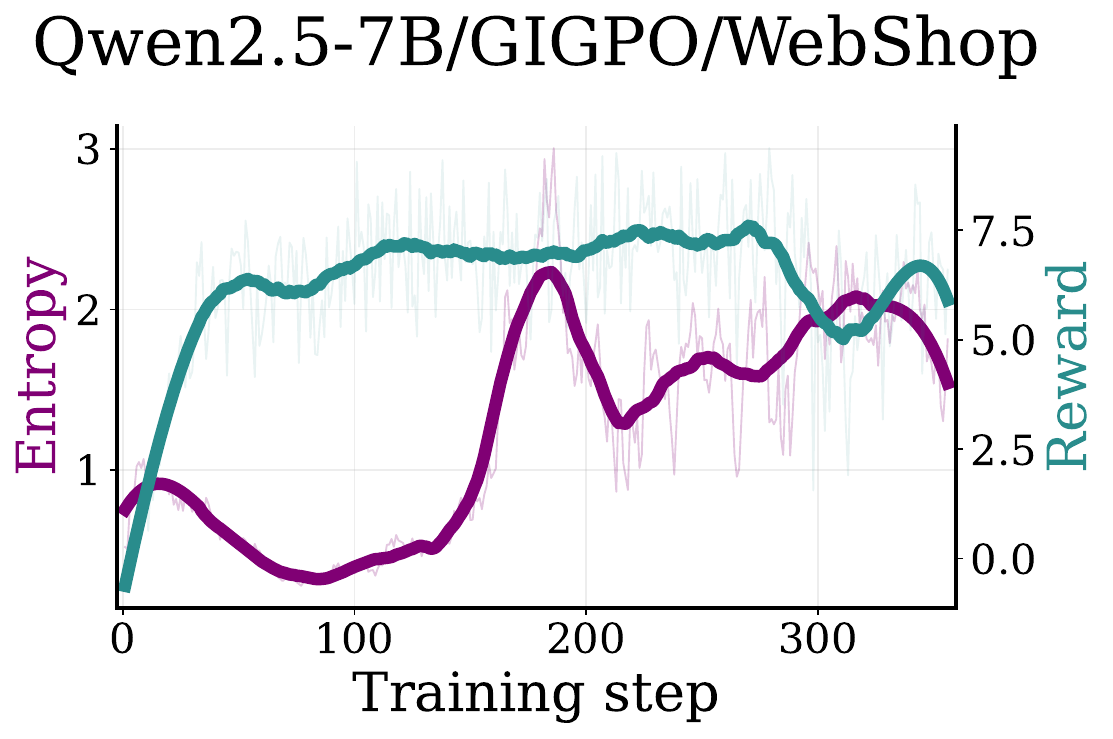}
    \caption{Further evidence supporting the training dynamic of entropy eruption in agent RL. In the AlfWord and WebShop tasks, the length of Phase 1 in Llama-series models is shortened due to mid-training as demonstrated in Section~\ref{sec:dynamics}.}
    \label{fig:entropy-eruption-apd}
\end{figure}

\subsection{Implementation Details}
\label{apd:sec:implementation-details}
The experiments of Qwen2.5-1.5B and Llama3.2-1B are conducted on 2 NVIDIA A100-SXM4-80GB GPUs, and the experiments of Qwen2.5-7B and Llama3.2-3B are conducted on 2 NVIDIA H200 GPUs. 
We mostly follow experimental settings in \cite{gigpo}. 
We fix the random seed of the environment to 0 for reproducibility.

For the AlfWorld task, the maximum prompt length is 2048 tokens, and the maximum response length is 512 tokens. Each
episode allows up to 25 environment steps. The learning rate is set to 2e-6 for the actor. The reward is set to 10 for success
and to 0 for failure. We also apply a reward penalty of -0.1 to punish the format errors. We use a group size of 8. The rollout temperature is set to 1.0, while the validation temperature is set to 0.4. The mini-batch size is 256, and the KL-divergence loss coefficient is set to 0.01. For GIGPO, we also keep the best hyperparameter according to the original paper \citep{gigpo}, where the weighting coefficient is fixed at 1 without further tuning, and the discount factor is set to 0.95.

For the WebShop task, the maximum prompt length is 4096 tokens, and the maximum response length is 512 tokens. Each episode is limited to 10 environment steps. The learning rate is 2e-6 for the actor. As in ALFWorld, reward is set to 10 for success and 0 for failure. Invalid actions are penalized with a reward of -0.1. Group size is set to 8. The rollout temperature is set to 1.0, while the validation temperature is set to 0.4. The mini-batch size is 64, and the KL-divergence loss coefficient is set to 0.01. For GIGPO, we also keep the best hyperparameter according to the original paper \citep{gigpo}, where the weighting coefficient is fixed at 1 without further tuning, and the discount factor is set to 0.95.

For the search-augmented QA task, the maximum prompt length is 4096 tokens, and the maximum response length is 512 tokens. The max turn is set to 3. The learning rate is 2e-6 for
the actor. Reward is set to 1 for success and 0 for failure. Invalid actions are penalized with a reward of -0.01. We set the train data size to 256 and use a group size of 5, following GIGPO \citep{gigpo} and Search-R1 \citep{searchr1}. Rollout and validation temperatures are set to 1.0 and 0.0, respectively. The mini-batch size is 512, and the KL-divergence loss coefficient is set to 0.001. The weighting coefficient is fixed at 1 without further tuning, and the discount factor is set to 0.95.

To evaluate the degenerate score with GPT-4, we use the prompt in Box~\ref{box:gptprompt}.

\begin{figure*}[t]
\centering
\begin{mybox}[GPT-4 prompt for degeneration judgment]{box:gptprompt}
<System Prompt>\\
You are an expert evaluator of language-model generations.\\
\\
Your task is to judge how much a model output contains:\\
1. semantic garbage / nonsensical content\\
2. repetition / looping / duplicated phrases\\
3. off-task filler that does not help solve the task\\
\\
You are NOT judging whether the output is polite, safe, or toxic in the moderation sense.\\
Here, "toxicity" means degeneration: nonsense, repetition, meaningless filler, or severe loss of grounding.\\
\\
Return a score from 0 to 10, where a HIGHER score means WORSE generation quality:\\
- 0 = clean, coherent, non-repetitive\\
- 2.5 = mild repetition or mild filler\\
- 5 = noticeable nonsense/repetition affecting usefulness\\
- 7.5 = severe degeneration; much of the text is repetitive or semantically broken\\
- 10 = almost entirely garbage, repetitive looping, or incoherent\\
\\
Important rules:\\
- Penalize repeated phrases, repeated constraints, repeated boilerplate, repeated clauses, and low-information loops.\\
- Penalize semantic garbage: irrelevant claims, contradictory statements, hallucinated filler, strange topic drift, or sentences that are grammatical but meaningless in context.\\
- Penalize generic filler when it replaces task-relevant reasoning.\\
- Do NOT penalize short necessary restatements once.\\
- Do NOT penalize correct task-specific repetition if it is needed and minimal.\\
- If the output includes actions, also penalize actions that are inconsistent with the stated action space or observation.\\
\\
Scoring principle:\\
Estimate the proportion and severity of degenerate content in the output.\\
Higher score = larger portion of nonsense/repetition and/or stronger severity.\\
\\
Return ONLY a number from 0 to 10, with no additional text or explanation.\\
\\
<User Prompt>\\
PROBLEM:\\
{problem}\\
SOLUTION:\\
{response}
\end{mybox}
\end{figure*}

\begin{figure}[t!]
    \centering
    \includegraphics[width=0.32\linewidth]{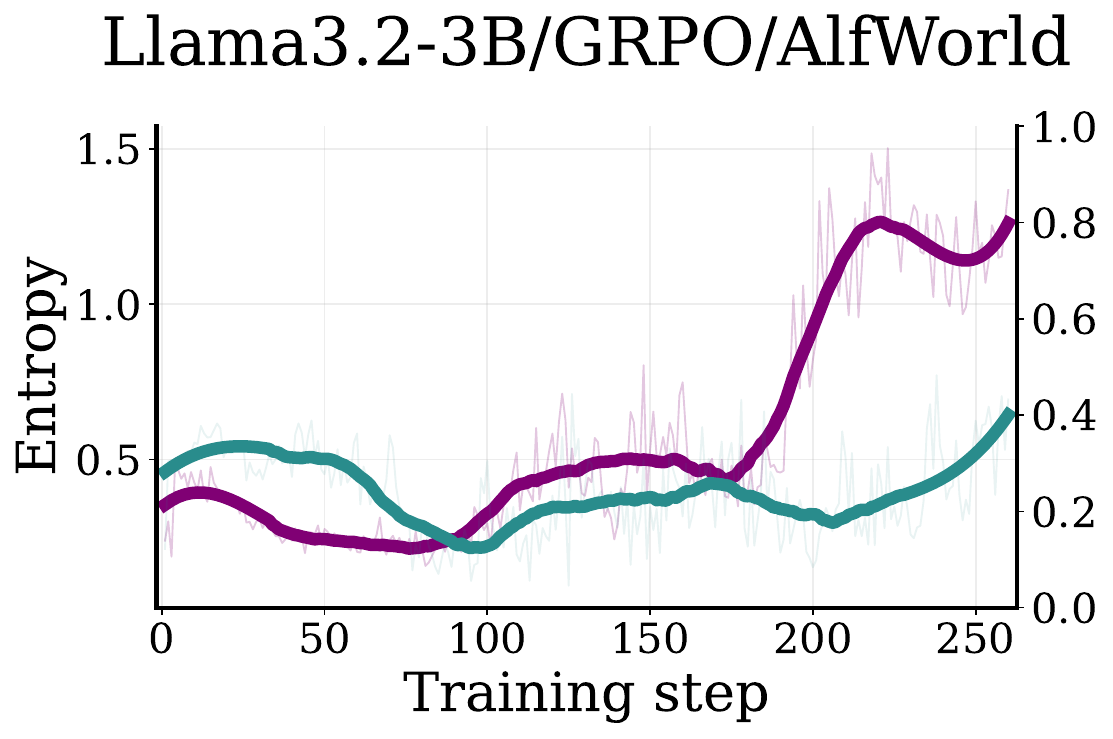}
    \includegraphics[width=0.32\linewidth]{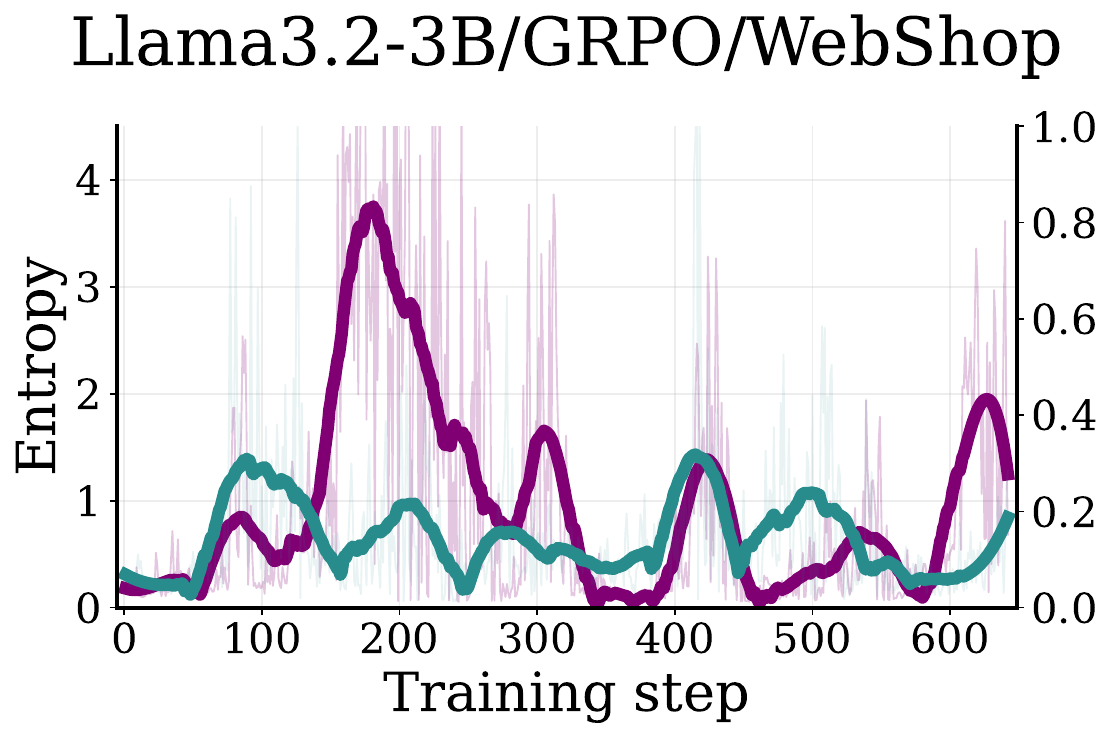}
    \includegraphics[width=0.32\linewidth]{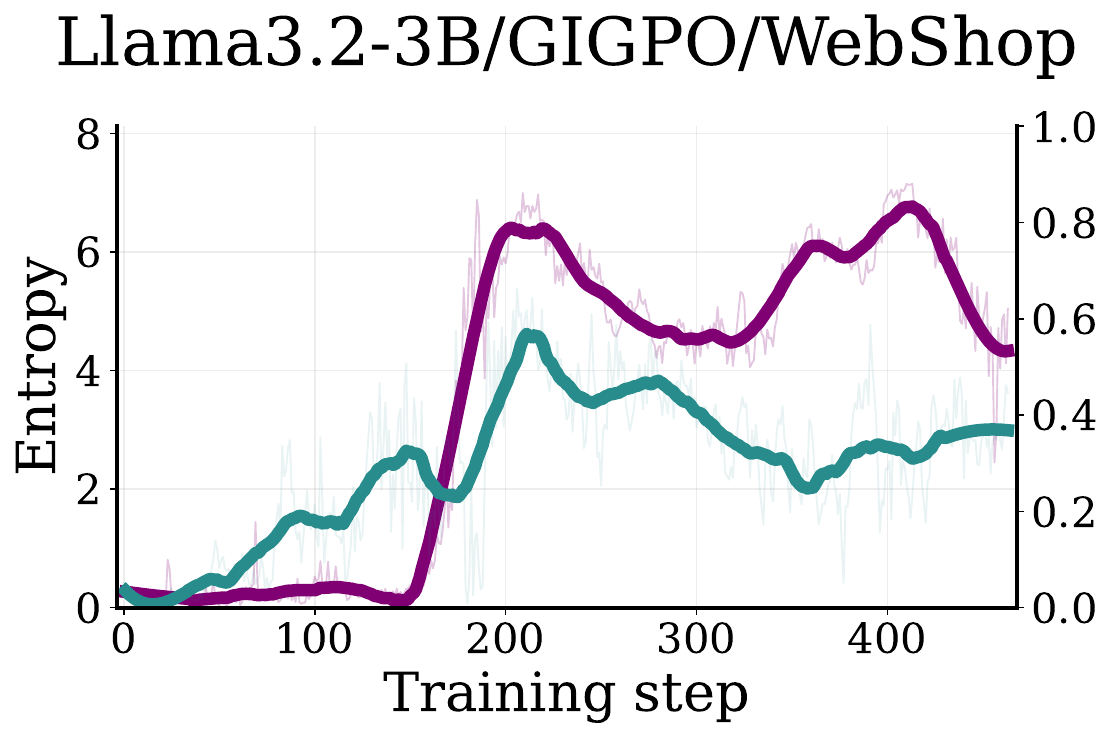}
    \includegraphics[width=0.32\linewidth]{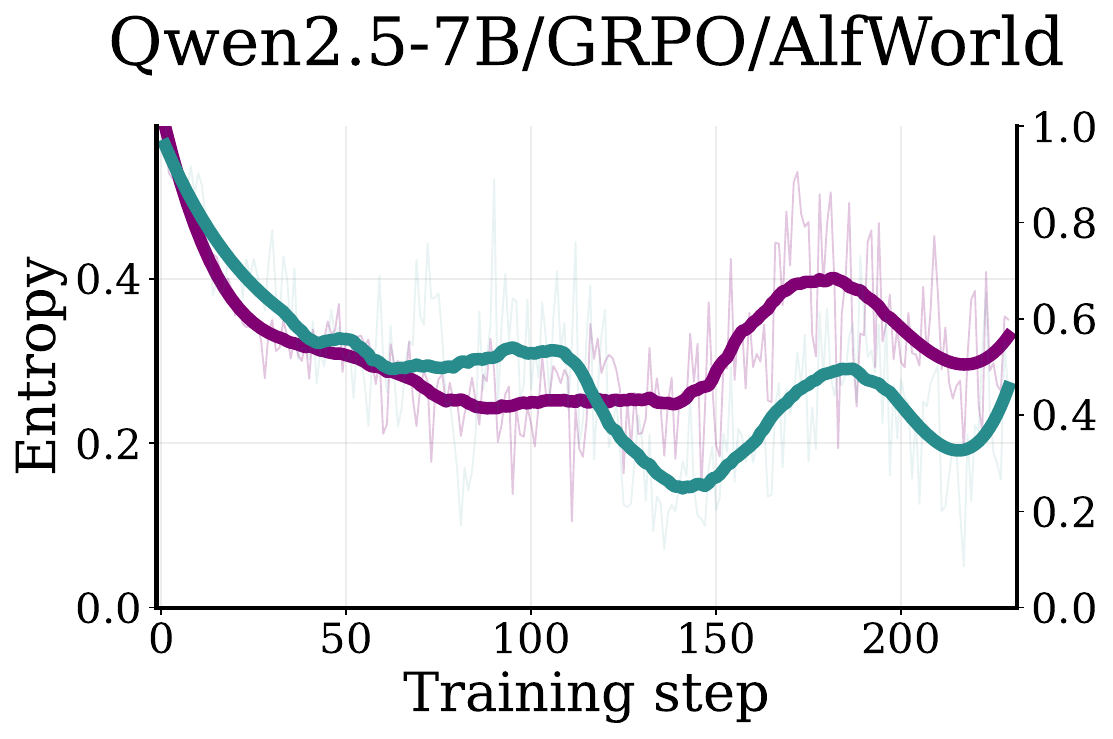}
    \includegraphics[width=0.32\linewidth]{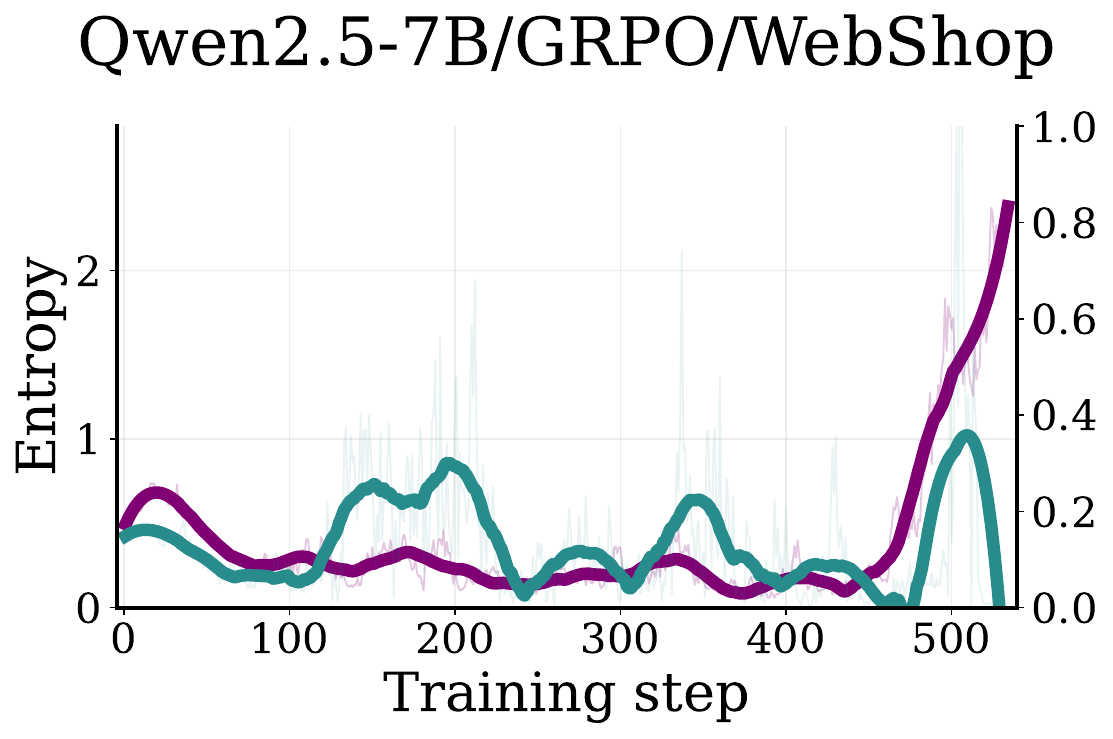}
    \includegraphics[width=0.32\linewidth]{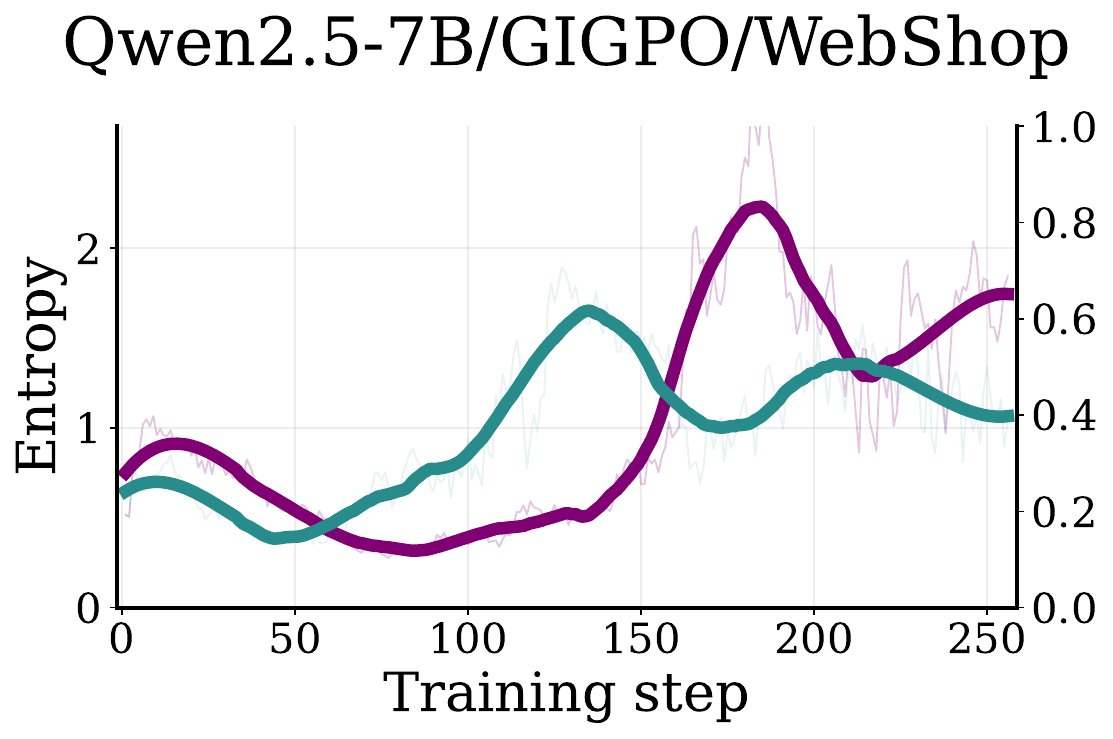}
    \vspace{-0.8em}
    \caption{Entropy dynamics comparison between \textcolor[HTML]{800074}{GRPO/GIGPO} and \textcolor[HTML]{298c8c}{GRPO/GIGPO+SEAL}.}
    \vspace{-1.5em}
    \label{fig:exp-entropy}
\end{figure}

\subsection{SEAL Successfully Alleviates Entropy Eruption}
\label{apd:sec:alleviate-eruption}
As shown in Figure~\ref{fig:exp-entropy}, SEAL consistently mitigates the entropy-eruption phenomenon across different tasks, backbones, and RL algorithms. Compared with standard GRPO/GIGPO, the corresponding runs with SEAL exhibit substantially smoother entropy trajectories, with smaller spike magnitudes and weaker recurrent oscillations. As shown in Figure~\ref{fig:toxic-contrast}, suppressing entropy eruption not only makes training more robust and improves performance, but also reduces degenerate patterns.

\subsection{Hyperparameter Sensitivity}
\label{apd:sec:hyperparameter}
We study the sensitivity of SEAL to the auxiliary-loss weight \(\alpha\) in Figure~\ref{fig:hyperparameter}. Across both Llama3.2-3B and Qwen2.5-7B on WebShop, the method remains relatively stable over a broad range of \(\alpha\), indicating that SEAL does not require highly delicate tuning to be effective. In terms of task performance, both GRPO+SEAL and GIGPO+SEAL achieve their best or near-best accuracy at intermediate values of \(\alpha\), while still maintaining competitive results under smaller or larger settings. At the same time, the LLM-as-a-Judge score on degenerate patterns also remains favorable across most choices of \(\alpha\), with lower scores typically obtained in the same intermediate range. Overall, these results suggest a consistent trade-off: moderate auxiliary-loss strength is usually sufficient to improve representation separation without over-regularizing the policy. Importantly, across a wide range of \(\alpha\), the accuracy and degeneracy metrics of SEAL remain generally better than those of the corresponding baselines without SEAL (see Table~\ref{tab:main-alfworld} and Figure~\ref{fig:toxic-contrast}), showing that the benefit of SEAL is robust rather than tied to a single hyperparameter choice.

\begin{table}[t]
\footnotesize
    \centering
    \caption{Comparison of GRPO performance with and without SEAL on Search-augmented QA for Qwen2.5-7B and Llama3.2-3B.}
    \label{tab:search-apd} 
    \scalebox{0.95}{    \begin{tabular}{cc|cc|ccccc|c}
    \toprule
       \multirow{2}{*}{Model} & \multirow{2}{*}{Methods} & \multicolumn{2}{c|}{In-domain} & \multicolumn{5}{c|}{Out-of-domain}& \multirow{2}{*}{\bf Avg.}\\
       & &NQ & HotpotQA& TriviaQA & PopQA&  2Wiki & MuSiQue& Bamboo.&  \\
       \midrule
       
       \multirow{2}{*}{\makecell{Llama3.2\\-3B}} & GRPO & 48.22&37.98&64.60&45.19& 35.21&11.62&67.74& 44.37  \\

        \rowcolor{sealbg}\cellcolor{white} -3B&GRPO + SEAL& 47.51&44.21 &63.95& 47.31& 44.56&17.99&72.98& 48.36\\

        \midrule
        \multirow{2}{*}{\makecell{Qwen2.5\\-7B}} & GRPO & 46.08& 40.12 & 63.78& 45.55&38.77&16.32&65.72&45.19 \\

        \rowcolor{sealbg}\cellcolor{white} -7B&GRPO + SEAL & 46.08 & 40.46& 65.20&46.53 &41.58&17.72&69.76&46.76\\
        \bottomrule
    \end{tabular}}
\end{table}

\begin{figure}[t!]
    \centering
    \includegraphics[width=0.24\linewidth]{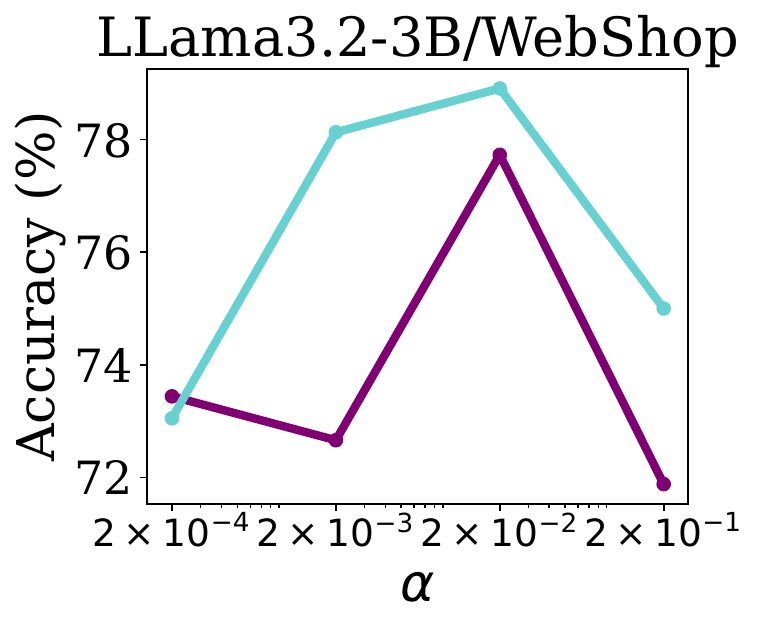}
    \includegraphics[width=0.24\linewidth]{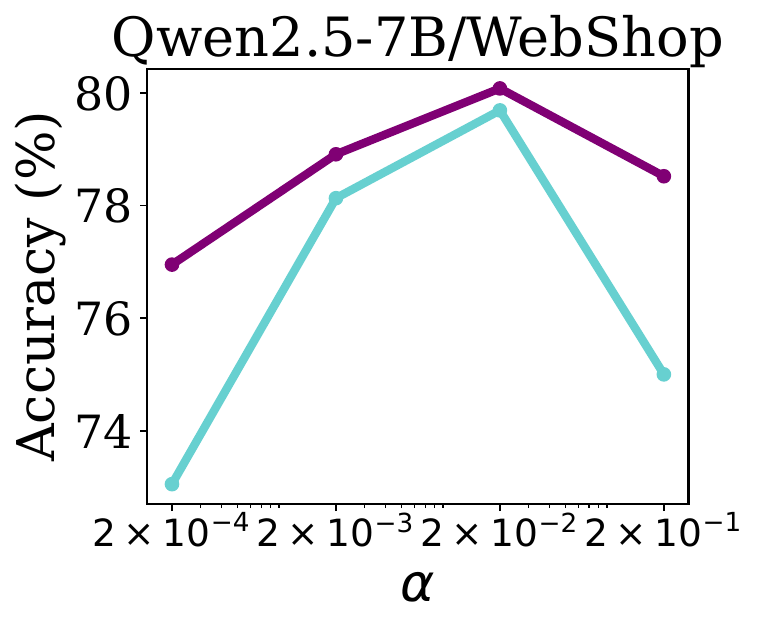}
    \includegraphics[width=0.24\linewidth]{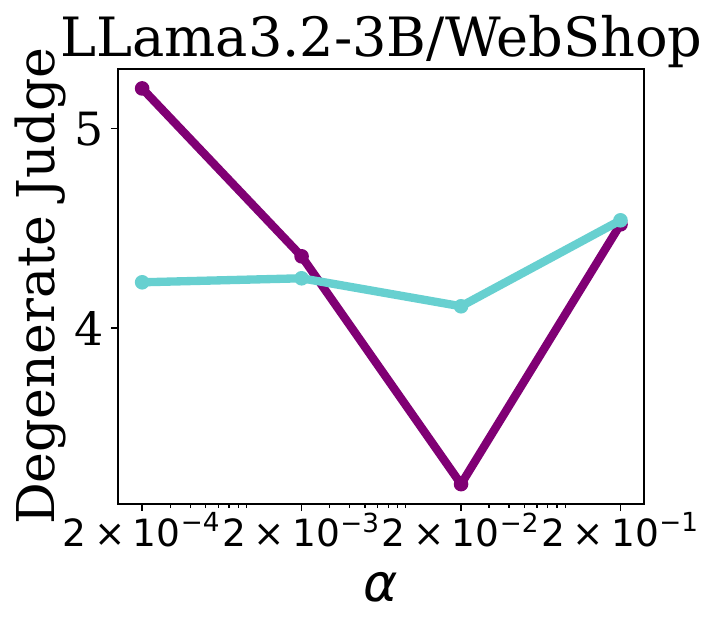}
    \includegraphics[width=0.24\linewidth]{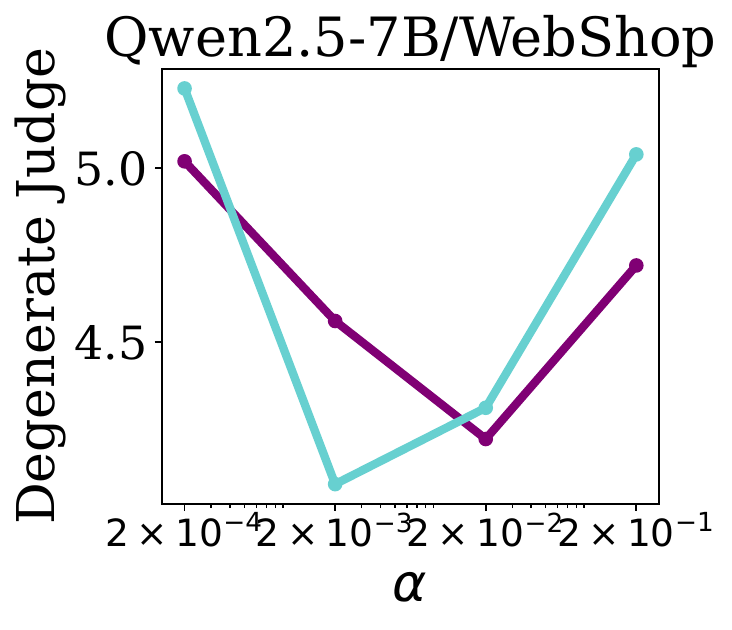}
    \caption{Accuracy and LLM-as-a-Judge score on degenerate patterns of \textcolor[HTML]{800074}{GRPO+SEAL} and \textcolor[HTML]{67d0d0}{GIGPO+SEAL} with different hyperparameters.}
    \label{fig:hyperparameter}
\end{figure}

\subsection{Performance on search-augmented QA.}
\label{apd:sec:search}

Table~\ref{tab:search-apd} reports the performance of GRPO with and without SEAL on the search-augmented QA benchmark. Across both model backbones, SEAL consistently improves the average performance. These results suggest that the benefits of SEAL are not limited to interactive agent environments such as AlfWorld and WebShop, but also extend to search-augmented QA, where multi-step retrieval and reasoning are both required. Overall, the results provide further evidence that reducing harmful gradient interference can improve policy optimization across a broader range of agent-style tasks.

\section{Limitations and Future Work}
\label{apd:sec:limitations}

Our study is conducted on three representative agent tasks---AlfWorld, WebShop, and search-augmented QA---and on two widely used model families, the Llama and Qwen series. While these settings already cover multiple environments, reward structures, and backbone scales, they do not exhaust the full space of agent RL problems. It would therefore be valuable to examine whether the same entropy dynamics arise as broadly in other domains, such as longer-horizon agents, richer tool-use settings, or multimodal environments. At the same time, the consistency of our observations across all of our current settings suggests that cyclical entropy eruption is unlikely to be an isolated artifact of a single benchmark or model family.
In addition, our proposed mitigation, SEAL, is designed as a practical intervention during RL training. This choice matches the focus of this work, namely, to diagnose the training dynamics of entropy eruption and to show that explicitly reducing harmful representation overlap can alleviate it. However, it remains an open question whether similar benefits could be obtained even earlier in the training pipeline, for example, through pre-training, instruction tuning, or mid-training strategies that better separate correct and incorrect trajectories before RL begins. Exploring such earlier-stage interventions is a promising direction for future work.
More broadly, future work could investigate whether improved curricula, trajectory filtering, reward design, or representation regularization can further stabilize agent RL while preserving useful exploration. We hope this work provides both an empirical starting point and a theoretical lens for studying these questions.

\section{Broader Impact}

This work studies the training dynamics of reinforcement learning for LLM agents, with the goal of improving the stability and reliability of agent training. As discussed in the paper, agent systems are increasingly being deployed in settings such as web interaction, software engineering, scientific assistance, and other long-horizon decision-making tasks, making it important to understand not only how to improve benchmark performance, but also how and why training can become unstable. Our analysis of cyclical entropy eruption and our proposed mitigation, \textsc{SEAL}, are intended to contribute to safer and more controllable agent optimization by reducing pathological behaviors such as likelihood collapse on valid trajectories, degenerate generation patterns, and severe reward instability. 

At the same time, a more capable and stable agent RL can have dual-use implications. Techniques that improve the robustness of agent training may also make it easier to build agents that are more persistent, autonomous, and effective in real-world environments. Such systems could be beneficial in domains like scientific discovery, productivity assistance, and automation. 

We therefore view this work as part of a greater effort toward principled and responsible agent development. A better understanding of training dynamics can support improved monitoring, evaluation, and intervention during RL, and future work should continue to study how stability-oriented methods interact with broader concerns such as alignment, robustness, and misuse prevention.

\end{document}